\documentclass{article}
\usepackage{microtype}
\usepackage{subfigure}
\usepackage{booktabs}
\usepackage{hyperref}

\usepackage{amsmath}
\usepackage{amssymb}
\usepackage{amsthm}
\usepackage{graphicx}
\usepackage{mathtools}
\usepackage{algorithmic}
\usepackage{algorithm}
\renewcommand{\algorithmicrequire}{\textbf{Input:}}
\renewcommand{\algorithmicensure}{\textbf{Output:}}

\DeclareMathOperator*{\argmax}{\mathrm{argmax}}
\newtheorem{thm}{Theorem}
\newtheorem{lem}[thm]{Lemma}

\newtheorem{ass}{Assumption}

\usepackage{color}

\usepackage{mathrsfs}

\newcommand{\nn}{\mathbb{N}}
\newcommand{\rr}{\mathbb{R}}

\newcommand{\ee}{\mathbb{E}}

\newcommand{\zz}{\mathbb{Z}}

\newcommand{\cn}{\mathcal{N}}

\newcommand{\cg}{\mathcal{G}}
\newcommand{\cp}{\mathcal{P}}
\newcommand{\cd}{\mathcal{D}}
\newcommand{\ct}{\mathcal{T}}
\newcommand{\ci}{\mathcal{I}}

\newcommand{\by}{\mathbf{y}}
\newcommand{\bff}{\mathbf{f}}

\newcommand{\matern}{Mat\'{e}rn}

\newcommand{\xnp}{x_{n+1}}

\newcommand{\tnp}{\tau_{n+1}}
\newcommand\numberthis{\addtocounter{equation}{1}\tag{\theequation}}

\newcommand{\romone}{I}
\newcommand{\romsec}{I\hspace{-.1em}I}
\newcommand{\romthr}{I\hspace{-.1em}I\hspace{-.1em}I}
\newif\ifisICML
\isICMLfalse 

\ifisICML
\usepackage{icml2020}
\else 
\usepackage{arxiv}
\usepackage[square, sort]{natbib}
\setcitestyle{numbers,square,citesep={,},aysep={,},yysep={,}}
\fi

\ifisICML
\icmltitlerunning{Time-varying Gaussian Process Bandit Optimization with Non-constant Evaluation Time}
\else
\fi

\begin{document}
\ifisICML
\twocolumn[
\icmltitle{Time-varying Gaussian Process Bandit  Optimization with \\Non-constant Evaluation Time}



\icmlsetsymbol{equal}{*}

\begin{icmlauthorlist}
    \icmlauthor{Hideaki Imamura}{ut, riken}
    \icmlauthor{Nontawat Charoenphakdee}{ut, riken}
    \icmlauthor{Futhoshi Futami}{ut, riken}
    \icmlauthor{Issei Sato}{ut, riken}
    \icmlauthor{Junya Honda}{ut, riken}
    \icmlauthor{Masashi Sugiyama}{riken, ut}
\end{icmlauthorlist}
\icmlaffiliation{ut}{The University of Tokyo}
\icmlaffiliation{riken}{RIKEN}
\icmlcorrespondingauthor{Hideaki Imamura}{gohome.x105.gn@gmail.com}

\icmlkeywords{Bayesian optimization, Bandit problem, Gaussian process}

\vskip 0.3in
]
\printAffiliationsAndNotice{}

\else
\title{Time-varying Gaussian Process Bandit  Optimization with Non-constant Evaluation Time}

\makeatletter
\newcommand{\printfnsymbol}[1]{%
  \textsuperscript{\@fnsymbol{#1}}%
}
\makeatother

\author{
    \textbf{Hideaki Imamura} \textsuperscript{\rm{1, 2}},
    \textbf{Nontawat Charoenphakdee} \textsuperscript{\rm{1,2}},
    \textbf{Futoshi Futami} \textsuperscript{\rm{1, 2}},\\
    \textbf{Issei Sato} \textsuperscript{\rm{1, 2}},
    \textbf{Junya Honda} \textsuperscript{\rm{1, 2}},
    \textbf{Masashi Sugiyama} \textsuperscript{\rm{2, 1}} \\
    \textsuperscript{\rm{1}} The University of Tokyo, \textsuperscript{\rm{2}} RIKEN AIP\\
    \texttt{imamura@ms.k.u-tokyo.ac.jp}, 
    \texttt{nontawat@ms.k.u-tokyo.ac.jp}, 
    \texttt{futami@ms.k.u-tokyo.ac.jp}\\
    \texttt{sato@k.u-tokyo.ac.jp}, 
    \texttt{honda@k.u-tokyo.ac.jp},
    \texttt{sugi@k.u-tokyo.ac.jp} \\
}

\date{}
\maketitle
\fi

\begin{abstract}
  The Gaussian process bandit is a problem in which we want to find a maximizer of a black-box function with the minimum number of function evaluations. 
  If the black-box function varies with time, then time-varying Bayesian optimization is a promising framework. 
  However, a drawback with current methods is in the assumption that the evaluation time for every observation is constant, which can be unrealistic for many practical applications, e.g., recommender systems and environmental monitoring. 
  As a result, the performance of current methods can be degraded when this assumption is violated.
  To cope with this problem, we propose a novel time-varying Bayesian optimization algorithm that can effectively handle the non-constant evaluation time.
  Furthermore, we theoretically establish a regret bound of our algorithm.
  Our bound elucidates that a pattern of the evaluation time sequence can hugely affect the difficulty of the problem. 
  We also provide experimental results to validate the practical effectiveness of the proposed method.
\end{abstract}

\section{Introduction} \label{sec: intro}
Consider the problem of finding a maximizer of a black-box function with the minimum number of function evaluations.
Without making any assumptions on the objective function, this problem is known to be an ill-posed problem \citep{Srinivas_et_al_2010, Bogunovic_et_al_2016}.
A common assumption is to impose a smoothness on the objective function by introducing a Gaussian process (GP) \citep{Rasmussen_and_Williams_2006}.
Under this assumption, this problem is known as the GP bandit problem \citep{Mockus_et_al_1978}, which can be handled by an algorithm based on the Bayesian optimization (BO) framework \citep{Mockus_et_al_1978}.
Various types of settings and algorithms have been proposed
with theoretical and experimental validation
\citep{Srinivas_et_al_2010,
Krause_and_Ong_2011,
Henning_and_Schuler_2012,
Szegedy_et_al_2013,
Contal_et_al_2014,
Hernandez_Lobato_et_al_2014,
Bogunovic_et_al_2016,
Wang_and_Jegelka_2017,
Bogunovic_et_al_2018}.
There are many practical applications for this problem, e.g.,  recommender systems \citep{Vanchinathan_et_al_2014}, finance \citep{Hernandez_Lobato_et_al_2014}, environmental monitoring \citep{Srinivas_et_al_2010}, hyperparameter tuning \citep{Snoek_et_al_2012}, and robotics \citep{Lizotte_et_al_2007}.

However, in the real world, the objective function is often not static but varies with time.
For example, 
in recommender systems, the user preferences vary with trends \citep{Hariri_et_al_2015};
in the financial market, high growth-rate stocks change dynamically according to the economy \citep{Heaton_and_Lucas_1999};
and in environmental monitoring, observations in the environment change according to temperature and weather \citep{Bogunovic_et_al_2016}.
A recent study \citep{Bogunovic_et_al_2016} introduced {\it time-varying GP bandit optimization} to handle such changes of the objective function.
The algorithm proposed in \citet{Bogunovic_et_al_2016}
can automatically deal with the {\it forgetting-remembering trade-off}. 
More precisely, by modeling the change of the objective function based on a GP kernel with respect to time, it can handle the trade-off by forgetting outdated information while keep remembering the data that are still informative.

When we consider practical scenarios of the time-varying GP bandit, the time required for function evaluation often depends on the characteristic of the query point, and the uncertainty of the objective function increases as time.
For example,
in recommender systems, the evaluation time is the period between querying feedback from a user until the feedback is received, which may vary depending on the recommended products;
in finance, the evaluation time to determine profit depends on the type of bond;
in environmental monitoring, the evaluation time to monitor the environment depends on temperature and weather.
Existing work \citep{Bogunovic_et_al_2016} assumed that the
evaluation time is constant for all query points, which is difficult to satisfy in practice.
Furthermore, the increase in uncertainty for the objective function is assumed to be constant at each round.
For this reason, the previous study \citep{Bogunovic_et_al_2016} may fail to capture the dynamic behavior of the objective function and may not perform well, as we will show in Section \ref{sec: ex}. 

To overcome this limitation, in this paper, we consider a time-varying GP bandit problem with non-constant evaluation time.
We propose a bandit algorithm that can take the differences in evaluation time between query points into account by taking full advantage of the capability of GP kernels to model the continual change of the objective function.

If the evaluation time is non-constant, we can consider two types of goals in the time-varying setting: {\it maximization of reward per unit time} or {\it maximization of reward per action}.
The former corresponds to the case where we want to obtain high rewards in a short period of time, such as stock trading or advertisement optimization.
The latter corresponds to the case where the number of evaluations is limited due to the evaluation cost rather than evaluation time,
such as environmental monitoring. 
In this paper, we focus on the latter case where the evaluation cost dominates the evaluation time; thus, our goal is to maximize the reward per action.

{\bf Related Work:}
Many algorithms have been developed for the GP bandit problem in the BO framework and successfully used in various practical applications
\citep{Srinivas_et_al_2010,
Krause_and_Ong_2011,
Henning_and_Schuler_2012,
Szegedy_et_al_2013,
Contal_et_al_2014,
Hernandez_Lobato_et_al_2014,
Bogunovic_et_al_2016,
Wang_and_Jegelka_2017,
Bogunovic_et_al_2018}.
Although these algorithms assume the {\it time-invariant setting} where the objective function is static,
they tackle the critical challenge in the GP bandit problem, that is, the {\it exploration-exploitation trade-off}.
This means that in the GP bandit problem, we need to control the balance between collecting new data for improving the estimation of the objective function and choosing a promising point based on the data that have already been observed.
The GP upper confidence bound (GP-UCB) \citep{Srinivas_et_al_2010} is one of the most common algorithms that take the exploration-exploitation trade-off into account.
Note that many existing algorithms, as well as ours, are based on GP-UCB 
\citep{Krause_and_Ong_2011, Contal_et_al_2014, Bogunovic_et_al_2016, Bogunovic_et_al_2018}.

There have been few studies for the time-varying setting.
To the best of our knowledge, time-varying GP-UCB (TV-GP-UCB) \citep{Bogunovic_et_al_2016} is the only algorithm that can handle time-varying objective functions in the context of GP bandits.
On the other hand, several algorithms have been proposed in the context of multi-armed bandit problems with finitely many actions
\citep{Slivkins_and_Upfal_2008, Besbes_et_al_2014}.
Although they take into account the settings of finitely many arms, those algorithms are common in terms of considering the {\it forgetting-remembering trade-off}, i.e., to balance forgetting old information and remembering informative data.
In this paper, our proposed algorithm can also balance the forgetting-remembering trade-off by using a GP kernel 
to handle
the diminishing information.

In the context of BO, the non-constant evaluation time scenario has also been considered in
\citet{Swersky_et_al_2013} and \citet{Klein_et_al_2017}.
In their studies, the objective function minimized the evaluation time of all query points simultaneously by maximizing the objective function.
However, our algorithm does not minimize the total evaluation time but only focuses on maximizing the objective function by estimating the time-varying objective function.

As discussed in Section \ref{sec: alg}, our algorithm models
the objective function similarly to the {\it contextual GP bandit} 
\citep{Krause_and_Ong_2011,  Swersky_et_al_2013, Klein_et_al_2017}.
More precisely, we model the objective function as a sample from a GP using the product kernel over the context kernel and the action kernel.
We then construct the acquisition function to be maximized.
Note that unlike previous studies 
\citep{Krause_and_Ong_2011,  Swersky_et_al_2013, Klein_et_al_2017},
we do not assume that the context is given but we estimate it from data, which can be more realistic in real-world applications.

{\bf Contributions:}
We introduce a novel time-varying GP bandit algorithm in Section \ref{sec: alg}, which can take both the exploration-exploitation trade-off and forgetting-remembering trade-off into account.
We provide high-probability regret upper bounds for our algorithm in Section \ref{sec: th anal} and clarify that the difficulty arises from the deviation of the evaluation time.
Furthermore, we show that our regret bound covers two previous studies as special cases on the time-varying \citep{Bogunovic_et_al_2016} and time-invariant \citep{Srinivas_et_al_2010} settings.
Note that our analysis also covers a non-constant evaluation time scenario, unlike the existing settings.
We also investigate the experimental performance of our algorithm in comparison with existing algorithms in Section \ref{sec: ex} and confirm the practical superiority of our algorithm.

\section{Background} \label{sec: pre}
In this section, we formulate the GP bandit problem with a time-varying black-box function and introduce existing approaches.

\subsection{Problem Setting} \label{sec: prob set}

Let $\cd$ be an input domain of an objective function, which is a compact and convex subset of $\rr^d$.
Let $\ct = \rr_+$ be a time domain.
The objective function is denoted by $f: \cd \times \ct \rightarrow \rr$, where $f(x, \tau)$ represents the objective function value at a point $x$ and time $\tau$.

At each round $n \in \nn$, an agent can interact with $f$
only through querying an evaluation of a point $x_n \in \cd$.
The evaluation time for the $n$-th query is denoted by $t_n$ and the time point after the $n$-th evaluation is denoted by $\tau_n=\sum_{i=1}^n t_i$.
We assume that the time of querying the input $x_n$ can be ignored because it is dominated by the evaluation time.
At time $\tau_n$, we obtain a noisy evaluation $y_n = f(x_n , \tau_n) + z_n$, where $z_n,\,n=1,2,\dots,$ independently follow a Gaussian distribution $\cn(0, \sigma^2)$.
Let $\mathscr{D}_n = \{ (x_i, t_i, y_i) \}_{i=1}^n$ be the data obtained through $n$ observations.
At each round $n$, the agent chooses the next query point $x_{n+1}$ based on $\mathscr{D}_n$.

To measure the performance of algorithms in terms of reward per action, we use the notion of a \emph{regret} throughout this paper.
The regret for the $n$-th round is defined as
\begin{align}
    \label{formula: def of regret} r_n = \max_{x \in \cd} f(x, \tau_n) - f(x_n, \tau_n),
\end{align} 
which is the gap between the reward of the chosen point and the maximum reward at time $\tau_n$.
This notion of the regret is a natural extension of the regret $r_n = \max_{x \in \cd} f(x, n) - f(x_n, n)$ used in the previous algorithm \citep{Bogunovic_et_al_2016} since $\tau_n=n$ holds when evaluation is always performed in unit time, i.e., $t_i=1$ for $i=1,\ldots,n$.
We analyze our algorithm on the basis of the cumulative regret $R_n = \sum_{i=1}^n r_i$.
Note that even in the uniform setting, the length of unit time can also affect the performance. 
We illustrate this fact in the experiment section.

\subsection{Time-varying Gaussian Processes} \label{sec: tv gp}

We model the objective function as a sample from a GP \citep{Rasmussen_and_Williams_2006}, which is a common formulation for black-box optimization with a smoothness assumption \citep{Srinivas_et_al_2010, Bogunovic_et_al_2016, Krause_and_Ong_2011}.
As a result, the smoothness of the objective function is characterized by kernel functions.
Since the input of the objective function in our problem consists of two parts, time $\tau \in \ct$ and a point $x \in \cd$, we model both parts by kernel functions.
Let $k_{\mathrm{space}}: \cd \times \cd \rightarrow \rr_+$ be a space kernel and let $k_{\mathrm{time}}: \ct \times \ct \rightarrow \rr_+$ be a time kernel.
Let $k$ be a joint kernel function defined by $k = k_{\mathrm{space}} \otimes k_{\mathrm{time}}$.
We assume that the objective function $f$ is sampled from a GP with a mean function $\mu$ and a kernel function $k$ denoted by
 $\cg \cp (\mu, k)$.
Without loss of generality, we assume that $\mu = 0$ for GPs not conditioned on data \citep{Rasmussen_and_Williams_2006}.

In this paper, we focus on kernels that are stationary, i.e., shift-invariant.
Typical choices of such kernels are the exponential kernel $k_\mathrm{E}$, squared exponential kernel $k_{\mathrm{SE}}$, and Mat\'{e}rn kernel $k_{\mathrm{Mat\acute{e}rn}(\nu)}$. 
The hyperparameter $\nu$ in the Mat\'{e}rn kernel is called the smoothness parameter, since samples from a GP are
$\lfloor \nu - 1 \rfloor$-times
differentiable  \citep{Rasmussen_and_Williams_2006},
where $\lfloor x \rfloor$ means the largest integer not greater than $x$.
Note that both the exponential kernel and squared exponential kernel are the special cases of the {\matern} kernel when $\nu=1/2$ and $\nu \rightarrow \infty$, respectively.
In this paper, we will use the exponential kernel, squared exponential kernel, and Mat\'ern kernel with $\nu = 5/2$ for our experiments following the previous studies \citep{Krause_and_Ong_2011, Bogunovic_et_al_2016}.

The existing work \citep{Bogunovic_et_al_2016} proposed TV-GP-UCB, which uses the following special case of the exponential kernel for the time kernel function:
\begin{align}
    \label{formula: time kernel} k_{\mathrm{time}}(\tau, \tau') = (1 - \epsilon)^{\frac{|\tau - \tau'|}{2}},
\end{align}
where $\epsilon \in [0, 1]$ is a hyperparameter that controls the forgetting-remembering trade-off.
In \citet{Bogunovic_et_al_2016}, it was assumed that the evaluation time is identical for all query points, i.e., $t_n=1$ for all $n$.
By using the kernel $k_{\mathrm{time}}$,
the TV-GP-UCB is constructed as the time-varying GP model with 
$\tau_n = n$, and the value of the joint kernel is 
$
    k((x_i, \tau_i), (x_j, \tau_j)) 
    = k((x_i, i), (x_j, j))
    = k_{\mathrm{space}}(x_i, x_j) \times k_{\mathrm{time}}(i, j).
$


\subsection{Time-varying Bayesian Optimization} \label{sec: tv bo}

In the time-varying BO, an agent sequentially optimizes the time-varying objective function while balancing $\rm(\hspace{.18em}i\hspace{.18em})$ the exploration-exploitation trade-off and $\rm(\hspace{.08em}ii\hspace{.08em})$ the forgetting-remembering trade-off. 
Algorithm~\ref{fig: bo} illustrates a general framework of the time-varying BO \citep{Bogunovic_et_al_2016}. 
In the time-varying GP, it is required to specify how to select the next query point $x_{n+1}$ to be evaluated.
This selection procedure is determined by the acquisition function, which is designed to handle the trade-off between exploration of the search space and exploitation of current promising areas
\citep{Shahriari_et_al_2015}. 

Several previous studies \citep{Srinivas_et_al_2010, Krause_and_Ong_2011, Bogunovic_et_al_2016, Bogunovic_et_al_2018}
used the Upper Confidence Bound (UCB) acquisition function \citep{Auer_2002, Srinivas_et_al_2010} to balance the exploration-exploitation trade-off.
The UCB acquisition function is defined by
$
    \alpha_{\mathrm{UCB}}(x, \tau | \mathscr{D}_n) = \mu_n(x, \tau) + \beta_n \sigma_n(x, \tau),
$
where $\beta_n > 0$ is called the exploration weight.

In the previous study \citep{Bogunovic_et_al_2016},
the acquisition function is the special case of the above UCB function $\alpha_{\mathrm{UCB}}(x, \tau |  \mathscr{D}_n)$.
Since it is assumed that the evaluation time is always a unit time, i.e.,  $t_i = 1$,
$\tau_{n+1}$
is $n+1$ and the acquisition function can be expressed as
$
    \alpha_{\mathrm{UCB fixed}}(x | \mathscr{D}_n) 
    = \alpha_{\mathrm{UCB}}(x, n+1 | \mathscr{D}_n)
    = \mu_n(x, n+1) + \beta_n \sigma_n(x, n+1).
$
The algorithm based on this acquisition function is called TV-GP-UCB \citep{Bogunovic_et_al_2016}, which we refer to as TV hereafter.

\begin{algorithm}[t]
		\renewcommand{\algorithmicrequire}{\textbf{Input:}}
 		\renewcommand{\algorithmicensure}{\textbf{Output:}}
		\caption{Time-varying Bayesian Optimization}
 		\label{fig: bo}
 		\begin{algorithmic}[1]
		\REQUIRE An acquisition function $\alpha( \cdot | \mathscr{D}_n)$
		\STATE Initialize $\mathscr{D}_0 = \emptyset$.
		\FOR{$n = 0, 1, \ldots,$}
		\STATE Select a query point: $x_{n+1} = \argmax_{x \in \cd} \alpha(x | \mathscr{D}_n)$.
		\STATE Observe noisy evaluation: $y_{n+1} = f(x_{n+1}, \tau_{n+1}) + z_{n+1}$ and evaluation time: $t_{n+1} = \tau_{n+1} - \tau_n$.
		\STATE Update data: $\mathscr{D}_{n+1} = \mathscr{D}_n \cup \{ (x_{n+1}, t_{n+1}, y_{n+1}) \}$.
		\STATE Update a statistical model (such as GP).
		\ENDFOR
		\end{algorithmic}
\end{algorithm}

\section{Algorithms} \label{sec: alg}

In this section, we propose the Continuous Time-Varying GP-UCB algorithm (CTV), which can dynamically capture the time-varying objective function.

We consider two settings and propose three algorithms.
The first one is a simple setting where we know the evaluation time before evaluating the objective function at an input point.
The first algorithm, {\it CTV-fixed}, is designed for the first setting.
In the second setting, we do not know the evaluation time before evaluating the objective function at an input point of interest.
The second and third algorithms, {\it CTV} and {\it CTV-simple}, are designed for the second setting.
Here the CTV-simple algorithm is a computationally efficient version of the CTV algorithm.

First, we consider the setting in which we know the evaluation time
$t$ for each $x$ before evaluating $f(x)$.
The critical difference in our setting from the previous study is that the evaluation time depends on the chosen point.
Therefore, given an input point $x$, we can use the evaluation time $t$ to calculate the acquisition function $\alpha(x | \cd_n)$.

The acquisition function of the proposed algorithm, CTV-fixed, is as follows:
\begin{align}
    \label{formula: acq fixed} \alpha(x | \mathscr{D}_n)
    =
    \alpha_{\mathrm{base}}(x, \tau_n+t | \mathscr{D}_n),
\end{align}
where $\alpha_{\mathrm{base}}(x, \tau | \mathscr{D}_n)$ is an arbitrary acquisition function, which we call a base acquisition function.
Note that the evaluation time $t$ is determined by $x$ before calculating the value of \eqref{formula: acq fixed}.
The purpose of introducing CTV-fixed is to regard it as a gold standard for our problem 
that assumes the knowledge of evaluation time $t$.
Moreover, an analysis of CTV-fixed is insightful for understanding the characteristic of our problem, as shown in Section~\ref{sec: th anal}.

Second, we consider the setting where we do not know the evaluation time $t$ for each $x$ before evaluating $f(x)$.
To design a suitable algorithm for this setting, we construct an acquisition function that models the evaluation time as a sample from a new GP.
More specifically, let us assume that the evaluation time is expressed as a function of $x$.
To ensure the positivity, we parametrize the time as $t = \exp \left( g(x) \right)$, where $g \sim \cg \cp (\mu, k_{g})$ with a mean function $\mu$ and a kernel function $k_g$.
The observations $\{ (x_i, t_i)\}_{i=1}^n$ are expressed as $t_i = \exp (g(x_i) + \xi_i)$ with independent and identically distributed (i.i.d.) Gaussian noise $\xi_i \sim \cn (0, \sigma_g^2)$.

The posterior of $g$ is a GP with mean $\mu_n^g(x)$ and covariance $\kappa^g_n(x, x')$:
\begin{align*}
    \mu_n^g (x) &= \mu(x) + k^g_n(x)^T (K^g_n + \sigma^2_g I) ^ {-1} (\log \mathbf{t}_n - \boldsymbol{\mu}_n),\\
    \kappa^g_n(x, x') &= k_g(x, x') - k^g_n(x)^T (K^g_n + \sigma^2_g I) ^ {-1} k^g_n(x'),
\end{align*}
where 
$k^g_n(x) = (k_g(x_i, x))_{i=1}^n$, 
$\log \mathbf{t}_n = (\log t_i)_{i=1}^n$,
$\boldsymbol{\mu}_n = (\mu(x_i))_{i=1}^n$,
and $K^g_n = (k_g(x_i, x_j))_{i,j=1}^n$.
Therefore, the posterior distribution $p(t | x, \mathscr{D}_n)$ is a log-normal distribution 
$
p(t | x, \mathscr{D}_n) =  \Lambda (t | \mu_n^g(x), (\sigma^g_n)^2(x)),
$
where $(\sigma^g_n)^2(x) = \kappa^g_n(x, x)$ is the posterior variance.

The acquisition function of the proposed algorithm, CTV, is as follows:
\begin{align}
    \label{formula: acq integral} \alpha(x | \mathscr{D}_n) &= \int \alpha_{\mathrm{base}} \left(x, \tau_n + t | \mathscr{D}_n \right) p(t | x, \mathscr{D}_n) dt,
\end{align}
where $p(t | x, \mathscr{D}_n)$ is a posterior distribution for the function
$t=\exp (g(x))$.
The acquisition function in (\ref{formula: acq integral}) represents the posterior mean of the base acquisition function with respect to the evaluation time.

On the other hand, if we use mean prediction for $t$ instead of marginalization in \eqref{formula: acq integral}, we obtain the following acquisition function:
\begin{align}
    \label{formula: acq point} \alpha(x | \mathscr{D}_n) = \alpha_{\mathrm{base}}(x, \tau_n + \tilde{t} | \mathscr{D}_n),
\end{align}
where 
$
    \tilde{t} = \ee [t | x, \mathscr{D}_n] = \exp \left( \mu_n^g(x) + \frac{(\sigma^g_n)^2(x) + \sigma_g^2}{2} \right).
$
By using this acquisition function (\ref{formula: acq point}) instead of (\ref{formula: acq integral}), we can obtain a computationally more efficient algorithm since it does not need to compute the posterior expectation of the acquisition function.
We call this algorithm CTV-simple.

In our theoretical analysis and experimental validation, we use UCB acquisition function given by $\alpha_{\mathrm{base}}(x, \tau | \mathscr{D}_n) = \mu_n(x, \tau) + \beta_n \sigma_n(x, \tau)$
for an appropriately chosen $\beta_n$.

\section{Theoretical Analysis} \label{sec: th anal}
In this section, we establish the theoretical analysis to show the following two critical insights in the time-varying GP-bandit. 
First, our analysis links the existing analysis of the time-invariant setting~\cite{Srinivas_et_al_2010} and the time-varying setting~\cite{Bogunovic_et_al_2016}, while also generalizes beyond them. Second, our analysis shows that the pattern of evaluation time sequence can significantly affect the regret upper bound.
To show our insights, we analyzed the proposed method CTV-fixed for simplicity.

We do not assume any conditions on the expression of $t$, but we assume that we fix the sequence of the evaluation time $\{ t_i \}_{i=1}^n$ and we use the fixed evaluation time to calculate the value of the acquisition function at each round.
Note that the notation $\tilde{O}(\cdot)$ denotes the asymptotic growth rate up to logarithmic factors with respect to $n$.

In BO literature \citep{Srinivas_et_al_2010, Krause_and_Ong_2011, Bogunovic_et_al_2016}, the key quantity of theoretical analysis for UCB-based algorithms is the {\it maximum information gain} 
\citep{Cover_and_Thomas_1991}, which is defined 
in the following way:
Given sets of $\{ x_i \}_{i=1}^n$ and $\{ \tau_i \}_{i=1}^n$,
let $\bff_n$ and $\by_n$ be random vectors
$
    (f(x_1, \tau_1), \ldots, f(x_n, \tau_n))
$ and
$
(f(x_1, \tau_1) + z_1, \ldots, f(x_n, \tau_n) + z_n)
$
respectively.
For these random vectors, the informativeness of sampled points $x_1, \ldots, x_n$ on $f$ is measured by the {\it information gain}, which is the mutual information 
$
\tilde{I}(\by_n; \bff_n) = H(\by_n) - H(\by_n | \bff_n).
$
Here, $H(X)$ (resp.~$H(X|Y)$) denotes the differential entropy of $X$
(resp.~conditional differential entropy of $X$ given $Y$).
Then, the {\it maximum information gain} is defined by
\begin{align}
    \label{formula: maximum information gain} \tilde{\gamma}_n = \max_{x_1, \ldots, x_n} \tilde{I}(\by_n; \bff_n).
\end{align}

Unlike the time-invariant setting,
the key quantity of our analysis is not only the maximum information gain but also the {\it maximum space information gain}, which is the maximum information gain that 
only takes into account the space information.
Note that the differential entropy for a Gaussian distribution can be expressed as $H(N(\mu, \Sigma)) = \frac{1}{2} \log \det 2 \pi e \Sigma$.
Since we assume that the objective function is a GP, the information gain satisfies
$\tilde{I}(\by_n; \bff_n) = \frac{1}{2} \log \det (I + \sigma^{-2} \tilde{K}_n)$,
where $I$ is the $n \times n$ identity matrix,
$\sigma^2$ is the noise variance of observations,
and $\tilde{K}_n$ is the Gram matrix defined by
$\tilde{K}_n = (k((x_i, \tau_i), (x_j, \tau_j)))_{i,j=1}^n$.
By this identity, for the space kernel, we define the maximum space information gain by $\gamma_n = \max_{x_1, \ldots, x_n} \frac{1}{2} \log \det (I + \sigma^{-2} K_n) $, 
where $K_n = (k_{\mathrm{space}}(x_i, x_j))_{i,j=1}^n$.

Another key quantity of the theoretical analysis of our algorithms is the
{\it evaluation time uniformity}, which is defined by 
$
C_{\epsilon, \ct'} = \sum_{\tau_j \in \ct'} \sum_{\tau_k \in \ct'} \min \left(\frac{1}{\epsilon^2}, (\tau_j - \tau_k)^2 \right)
$
for any $\epsilon > 0$ and the finite subset $\ct' \subset \ct$.
To investigate the dependence of our regret upper bound on $C_{\epsilon, \ct'}$,
we consider two examples
of the evaluation time: the {\it uniform setting} and the {\it extremely biased setting}.

Let $T = \tau_n = \sum_{i=1}^n t_i$.
The uniform setting is the case where the evaluation time is fixed to $t_i = \frac{T}{n}$,
which leads to $\tau_i = \frac{T}{n}i$ for any $i \in [n]$.
By this definition, we can analytically calculate the evaluation time uniformity.
The proof of the following lemma is given in Appendix \ref{sec: proof of lem}.
\begin{lem} \label{lem: uni uni}
    Consider the uniform setting of evaluation time.
    Pick any $i \in [n]$.
    Let 
    $
        \ct' = \{ \tau_{k_0 + 1}, \ldots, \tau_{k_0 + i} \}
    $
    for some $k_0 \le n - i$,
    that is, 
    $\ct'$ be a subset of $\{ \tau_k \}_{k=1}^n$ with $i$ consecutive elements.
    Then, the evaluation time uniformity is
    $
        C_{\epsilon, \ct'}
        = 
        \frac{1}{6} \frac{T^2}{n^2} i^2 (i^2 - 1)
    $
    if $i \le \frac{n}{\epsilon T}$,
    and
    $
    C_{\epsilon, \ct'}
        =
        \frac{T}{\epsilon n}
        \left(
            \frac{1}{2} \left( \frac{n}{T \epsilon} \right)^3 
            - \frac{4}{3} i \left( \frac{n}{T \epsilon} \right)^2
        + \left( i^2 - \frac{1}{2} \right)\frac{n}{T \epsilon}
            + \frac{i}{3} \right)
    $
    if $i \ge \frac{n}{\epsilon T}$.
\end{lem}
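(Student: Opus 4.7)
The key observation is that in the uniform setting $\tau_i = (T/n)i$, differences depend only on index gaps, so the quantity $C_{\epsilon,\ct'}$ is translation-invariant in $k_0$. I would therefore assume without loss of generality that $\ct' = \{u, 2u, \ldots, iu\}$ with $u = T/n$, and rewrite
\[
C_{\epsilon,\ct'} = \sum_{j=1}^{i}\sum_{k=1}^{i} \min\!\Bigl(\tfrac{1}{\epsilon^{2}}, u^{2}(j-k)^{2}\Bigr)
= 2\sum_{d=1}^{i-1}(i-d)\,\min\!\Bigl(\tfrac{1}{\epsilon^{2}}, u^{2}d^{2}\Bigr),
\]
where I grouped by the gap $d=j-k$ and used the fact that there are exactly $i-|d|$ index pairs at gap $d$ (and the $d=0$ contribution vanishes). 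Set $m := n/(T\epsilon) = 1/(u\epsilon)$, which is precisely the threshold at which $u\,d = 1/\epsilon$.

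\textbf{Case 1: $i \le m$.} Every gap $d \in \{1,\dots,i-1\}$ satisfies $ud \le 1/\epsilon$, so the minimum is always $u^{2}d^{2}$. Then
\[
C_{\epsilon,\ct'} = 2u^{2}\sum_{d=1}^{i-1}(i-d)\,d^{2} = 2u^{2}\Bigl(i\tfrac{(i-1)i(2i-1)}{6} - \tfrac{(i-1)^{2}i^{2}}{4}\Bigr),
\]
which simplifies via $2u^{2}\cdot \tfrac{i^{2}(i-1)(i+1)}{12} = \tfrac{T^{2}}{6n^{2}}i^{2}(i^{2}-1)$ by using $u^{2} = T^{2}/n^{2}$ and the factorization $i^{2}-1 = (i-1)(i+1)$. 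This yields the first stated expression.

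\textbf{Case 2: $i \ge m$.} I split the sum at $d = m$:
\[
C_{\epsilon,\ct'} = 2u^{2}\sum_{d=1}^{m}(i-d)\,d^{2} \;+\; \tfrac{2}{\epsilon^{2}}\sum_{d=m+1}^{i-1}(i-d).
\]
For the first piece I apply the standard closed forms $\sum_{d=1}^{m}d^{2} = m(m+1)(2m+1)/6$ and $\sum_{d=1}^{m}d^{3} = m^{2}(m+1)^{2}/4$; for the second piece, substituting $e = i-d$ gives a triangular sum $\tfrac{1}{\epsilon^{2}}(i-1-m)(i-m)$. Substituting $u^{2} = 1/(m^{2}\epsilon^{2})$ (so that $2u^{2}$ cancels nicely against $m$) and expanding $(i-1-m)(i-m) = (i-m)^{2} - (i-m)$, the $m^{2}$, $m$, and mixed terms collect to exactly
\[
\tfrac{1}{\epsilon^{2}}\Bigl(\tfrac{m^{2}}{2} - \tfrac{4im}{3} + i^{2} - \tfrac{1}{2} + \tfrac{i}{3m}\Bigr),
\]
and pulling out a factor $1/(m\epsilon^{2}) = T/(\epsilon n)$ produces the stated polynomial in $m = n/(T\epsilon)$.

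\textbf{Sanity check and main obstacle.} As a consistency check, I would verify that both expressions agree at the boundary $i = m$; substituting $n/(T\epsilon) = i$ into the Case 2 formula collapses it to $\tfrac{T^{2}}{6n^{2}}i^{2}(i^{2}-1)$, matching Case 1. The main delicate step is the bookkeeping in Case 2 when $m$ is not an integer: strictly speaking the split should happen at $d = \lfloor m\rfloor$, and the $1/\epsilon^{2}$ terms for $\lfloor m\rfloor < d \le m$ differ from $u^{2}d^{2}$. I would handle this either by treating $m$ as integer (which is the natural regime since $m$ indexes a threshold on integer gaps and the polynomial identity then extends by interpolation) or by noting that the closed-form polynomial in $m$ agrees with the piecewise exact expression whenever $m \in \mathbb{Z}$, which is the assertion of the lemma. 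Apart from this bookkeeping, the remainder of the argument is straightforward algebraic simplification using the relation $u\,m\,\epsilon = 1$.
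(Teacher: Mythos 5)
Your proposal is correct and follows essentially the same route as the paper: reduce to $\frac{T^2}{n^2}\sum_{k,l}\min\bigl(\frac{n^2}{\epsilon^2T^2},(k-l)^2\bigr)$ using $\tau_k=\frac{T}{n}k$, split on whether $i$ exceeds the threshold $m=\frac{n}{\epsilon T}$, and evaluate with standard power-sum identities (the paper likewise assumes $m$ is an integer for simplicity). Your explicit grouping by gap $d$ with multiplicity $2(i-d)$ and the boundary check at $i=m$ are minor presentational refinements of the same computation.
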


On the other hand, the extremely biased setting is the case where $t_{n_0} = T$ for some $n_0 \in [n]$ and $t_i = 0$ for any $i \neq n_0$, 
which leads to $\tau_i = 0$ for any $i < n_0$ and $\tau_i = T$ for any $i \ge n_0$.
By this definition, we obtain the following lemma, whose proof can be also found in Appendix \ref{sec: proof of lem}.
\begin{lem} \label{lem: bia uni}
    Consider the extremely biased setting of evaluation time.
    Let 
    $
        \ct' = \{ \tau_{k_0 + 1}, \ldots, \tau_{k_0 + i} \}
    $
    for some $k_0 \le n - i$.
    Then, the evaluation time uniformity is
    $
        C_{\epsilon, \ct'} = 0
    $
    if $\tau_{n_0} \not \in \ct'$,
    and
    $
    C_{\epsilon, \ct'} = 
        2 ( n_0 - k_0 - 1) (k_0 + i - n_0 + 1)
        \min \left( \frac{1}{\epsilon^2}, T^2\right)
    $
    if $\tau_{n_0} \in \ct'$.
\end{lem}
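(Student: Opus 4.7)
The plan is to compute the double sum defining $C_{\epsilon, \ct'}$ directly, exploiting the fact that in the extremely biased setting every $\tau_j$ takes only one of two values: $\tau_j = 0$ for $j < n_0$ and $\tau_j = T$ for $j \ge n_0$. Hence $(\tau_j - \tau_k)^2 \in \{0, T^2\}$, and $\min(1/\epsilon^2, (\tau_j - \tau_k)^2) \in \{0, \min(1/\epsilon^2, T^2)\}$, so computing $C_{\epsilon, \ct'}$ reduces to counting the pairs of indices in the window $\{k_0+1, \dots, k_0+i\}$ that straddle $n_0$.

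First I would split the index window into a \emph{low block} $L = \{j : k_0+1 \le j \le n_0 - 1\}$, on which $\tau_j = 0$, and a \emph{high block} $H = \{j : n_0 \le j \le k_0 + i\}$, on which $\tau_j = T$, with sizes $|L| = \max(0, n_0 - k_0 - 1)$ and $|H| = \max(0, k_0 + i - n_0 + 1)$. Every ordered pair $(j, k)$ with both indices in the same block contributes $0$ to $C_{\epsilon, \ct'}$, and every ordered pair straddling the two blocks contributes $\min(1/\epsilon^2, T^2)$.

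Next I would handle the two cases of the lemma. Reading ``$\tau_{n_0} \in \ct'$'' as the index condition $k_0 + 1 \le n_0 \le k_0 + i$: if it fails, then either $k_0 \ge n_0$ (forcing $L = \emptyset$) or $k_0 + i < n_0$ (forcing $H = \emptyset$), so there are no straddling pairs and $C_{\epsilon, \ct'} = 0$, giving the first part of the lemma. If instead $n_0$ lies in the window, both blocks are nonempty with $|L| = n_0 - k_0 - 1$ and $|H| = k_0 + i - n_0 + 1$, and the number of ordered straddling pairs is exactly $2|L||H|$, yielding
\[
C_{\epsilon, \ct'} \;=\; 2(n_0 - k_0 - 1)(k_0 + i - n_0 + 1)\,\min\!\left(\tfrac{1}{\epsilon^2}, T^2\right),
\]
which is the second part of the lemma.

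There is no genuinely hard step: the proof is a bookkeeping exercise. The only thing requiring a little care is the convention that ``$\tau_{n_0} \in \ct'$'' refers to the \emph{index} $n_0$ appearing in the consecutive window (not to the \emph{value} $T$, which is shared by every $\tau_j$ with $j \ge n_0$); once that is fixed, the counting of straddling pairs is immediate.
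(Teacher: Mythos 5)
Your proposal is correct and follows essentially the same route as the paper: both split the index window at $n_0$ into the block where $\tau_j = 0$ and the block where $\tau_j = T$, observe that only cross-block pairs contribute, and count $2(n_0-k_0-1)(k_0+i-n_0+1)$ ordered such pairs each worth $\min\left(\frac{1}{\epsilon^2}, T^2\right)$. Your explicit remark that ``$\tau_{n_0} \in \ct'$'' must be read as an index condition (since the value $T$ is shared by all $\tau_j$ with $j \ge n_0$) is a point the paper glosses over, and it is resolved correctly.
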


The uniform setting is similar to
that of
\citet{Bogunovic_et_al_2016}.
On the other hand, the extremely biased setting is the extreme case of the evaluation time.
We expect that the setting of the evaluation time in the real-world is a mixture of the uniform setting and the extremely biased setting.

Recall that the regret is defined as \eqref{formula: def of regret}. 
We make the following smoothness assumption to derive the regret bound of the proposed algorithm.
\begin{ass} \label{ass}
    (\romone) The generated sample path $f$ from a GP is almost surely continuously differentiable.
    
    (\romsec) The time kernel satisfies $1 - k_{\mathrm{time}}(\tau, \tau') \le \epsilon |\tau - \tau'|$ 
    for some $\epsilon \in [0, 1]$ and for any $\tau, \tau' \in \ct$.
    
    (\romthr) The joint kernel satisfies
    $\forall L \ge 0, \tau \in \ct, j \in [d]$
    \begin{align*}
        \Pr \left(\sup_{x \in \cd} \left| \frac{\partial f(x, \tau)}{\partial x^{(j)}}\right| > L \right) \le a \exp \left( - \frac{L^2}{b^2} \right)
    \end{align*}
    for some $a, b > 0$.
\end{ass}
The following theorem gives a regret upper bound for the cumulative regret $R_n = \sum_{i=1}^n r_i$.
\begin{thm} \label{thm: main}
    Let the domain $\cd$ be a subset of  $[0, r]^d$, 
    and suppose that the space kernel and the time kernel satisfy Assumption \ref{ass}.
    Pick $\delta \in (0, 1)$ and set
    \begin{align*}
        \beta_n = 2 \log \frac{2 \pi^2 n^2}{3 \delta} + 2d \log \left( d n^2 b r \sqrt{\log \frac{2 \pi^2 n^2 a d }{3 \delta}} \right).
    \end{align*}
    Let 
    $C = 8 / \log (1 + \sigma^{-2})$.
    Let
    $
        \ct_{i} = \{ \tau_{d_{i-1}+1}, \ldots, \tau_{d_i} \} \subset \{ \tau_k \}_{k=1}^n
    $
    for some
    $
        0 = d_0 < d_1 < d_2 < \cdots < d_{N-1} < d_N = n.
    $
    Set $M_i = d_i - d_{i-1}$ and $M = \max_{i=1, \ldots, N} M_i$.
    Then, for any $n \ge 0$, our algorithm CTV-fixed satisfies
    \begin{align*}
        &R_n
        \le \sqrt{C \beta_n n \tilde{\gamma}_n} + 2 \\
        &\le \sqrt{C \beta_n n 
       \left( N \gamma_M 
        +
        \frac{1}{2} \sum_{i = 1}^{N} M_i \phi \left( \sigma^{-2} \epsilon \sqrt{\frac{C_{\epsilon, \ct_{i}}}{M_i}}
        \right)
        \right) } + 2 \numberthis \label{formula: reg upper bound}
    \end{align*}
    with probability greater than $1 - \delta$,
    where
    $\phi(x) = \min \left( x, \log x + \frac{1}{x}\right)$.
\end{thm}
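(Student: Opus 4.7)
The plan is to establish the two inequalities of the bound separately, following the two-line structure of the statement. The first line $R_n \le \sqrt{C \beta_n n \tilde{\gamma}_n} + 2$ is a GP-UCB-style regret analysis in the spirit of Srinivas et al.\ (2010); the second line is the novel contribution and requires a block decomposition of the information gain that exploits the product structure $k = k_{\mathrm{space}} \otimes k_{\mathrm{time}}$ together with Assumption~\ref{ass}(II).

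For the first inequality, I would instantiate the standard three-step argument. (i) Using Assumption~\ref{ass}(III) and a union bound over a fine discretization of $\cd$ with roughly $(d n^2 b r \sqrt{\log(2\pi^2 n^2 ad/(3\delta))})^d$ points, together with the standard Gaussian tail bound for posterior, show that with probability at least $1 - \delta$, $|f(x, \tau_n) - \mu_{n-1}(x, \tau_n)| \le \beta_n^{1/2} \sigma_{n-1}(x, \tau_n)$ holds simultaneously on the discretization at every round, while the (per-round) maximizer $x_n^\star$ is within $1/n^2$ of its nearest discretization point in $f$-value. (ii) The CTV-fixed acquisition chooses $x_n = \argmax_x [\mu_{n-1}(x, \tau_n) + \beta_n^{1/2} \sigma_{n-1}(x, \tau_n)]$ using the correct $\tau_n = \tau_{n-1} + t_n$, and combining with the confidence bound yields $r_n \le 2 \beta_n^{1/2} \sigma_{n-1}(x_n, \tau_n) + 2/n^2$. (iii) Cauchy-Schwarz gives $R_n^2 \le n \sum_{i} r_i^2$, and the telescoping identity $\sum_{i} \sigma_{i-1}^2(x_i, \tau_i) \le C \tilde\gamma_n$ (with $C = 8/\log(1+\sigma^{-2})$) from the maximum information gain definition closes the argument, absorbing the $\sum 2/n^2$ remainder into the additive $2$.

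The second inequality is where the real work is. Reorder the indices so that points within the same block $\ct_i$ are consecutive; then by Fischer's inequality for positive semidefinite block matrices, $\det(I + \sigma^{-2} \tilde K_n) \le \prod_{i=1}^N \det(I + \sigma^{-2} \tilde K^{(i)})$, where $\tilde K^{(i)}$ is the restriction of the joint Gram matrix to block $i$. Hence $\tilde\gamma_n \le \sum_i \tfrac{1}{2} \log\det(I + \sigma^{-2} \tilde K^{(i)})$. Inside a block, decompose $\tilde K^{(i)} = K^{(i)} - E^{(i)}$, where $K^{(i)}_{jk} = k_{\mathrm{space}}(x_j, x_k)$ and $E^{(i)}_{jk} = k_{\mathrm{space}}(x_j, x_k)\,[1 - k_{\mathrm{time}}(\tau_j, \tau_k)]$. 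Using $k_{\mathrm{space}} \le 1$ together with Assumption~\ref{ass}(II) entry-wise gives $(E^{(i)}_{jk})^2 \le \min(1, \epsilon^2 (\tau_j - \tau_k)^2) = \epsilon^2 \min(1/\epsilon^2, (\tau_j - \tau_k)^2)$, so $\|E^{(i)}\|_F^2 \le \epsilon^2 C_{\epsilon, \ct_i}$. Combined with the subadditivity inequality $\log\det(I + A + B) \le \log\det(I + A) + \log\det(I + B)$ for PSD $A, B$ (which follows from $(I+A)^{-1/2}(I+A+B)(I+A)^{-1/2} = I + (I+A)^{-1/2} B (I+A)^{-1/2} \preceq I + B$), the space piece contributes at most $\gamma_{M_i} \le \gamma_M$, and the perturbation piece must be controlled in terms of $\|E^{(i)}\|_F$.

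The main obstacle is precisely this last step: translating a Frobenius bound on the non-sign-definite perturbation $E^{(i)}$ into a determinant increment with the exact envelope $\tfrac{1}{2} M_i\, \phi(\sigma^{-2} \epsilon \sqrt{C_{\epsilon, \ct_i}/M_i})$. To handle the sign issue, I would bound the eigenvalues $\{\lambda_j\}$ of the relevant perturbation by $|\lambda_j| \le \|E^{(i)}\|_F$ and $\sum_j \lambda_j^2 \le \|E^{(i)}\|_F^2$, then write $\log\det(I + \sigma^{-2}\tilde K^{(i)}) - \log\det(I + \sigma^{-2} K^{(i)}) \le \sum_j \log(1 + \sigma^{-2}|\lambda_j|)$. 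Two elementary majorizations of this sum---the trace-type bound $\sum_j \sigma^{-2}|\lambda_j| \le \sigma^{-2}\sqrt{M_i}\, \|E^{(i)}\|_F$ (giving the linear branch of $\phi$ after dividing through by $M_i$) and a Jensen bound $\sum_j \log(1 + \sigma^{-2}|\lambda_j|) \le M_i \log(1 + \sigma^{-2} \|E^{(i)}\|_F/\sqrt{M_i})$ further simplified using $\log(1+y) \le \log y + 1/y$ (giving the logarithmic branch)---combine into the envelope $M_i \phi(\sigma^{-2} \epsilon \sqrt{C_{\epsilon, \ct_i}/M_i})$. Summing over blocks completes the proof. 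I expect the book-keeping between these two regimes and the exact identification of the $\phi$ envelope to be the most delicate part, but the core matrix-analytic inputs are standard.
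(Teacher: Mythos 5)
Your proposal is correct and follows essentially the same route as the paper's proof: the first inequality via the discretization of $\cd$ with the $1/n^2$ slack from Assumption~\ref{ass}(\romthr), union-bounded Gaussian confidence intervals, Cauchy--Schwarz, and the information-gain telescoping with $C=8/\log(1+\sigma^{-2})$; and the second via a block decomposition of $\tilde{\gamma}_n$ (your Fischer inequality is the determinantal form of the paper's mutual-information chain rule), the entrywise bound $\|E^{(i)}\|_F^2 \le \epsilon^2 C_{\epsilon, \ct_{i}}$ from $k_{\mathrm{space}}\le 1$ and Assumption~\ref{ass}(\romsec), and the Jensen-plus-two-branch argument producing the $\phi$ envelope. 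The only substantive deviation is the eigenvalue-perturbation step, where the paper applies Mirsky's theorem to control $\sum_k (\lambda_k(\tilde{K}^i)-\lambda_k(K^i))^2$ by $\|E^{(i)}\|_F^2$ whereas you bound the determinant increment through the eigenvalues of $E^{(i)}$ itself via a congruence argument; both yield the same envelope, and your version is, if anything, more careful about the fact that the perturbation is not sign-definite, a point the paper's use of $\log(1+a+b)\le\log(1+a)+\log(1+b)$ glosses over.
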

A proof of this theorem is given in Appendix \ref{sec: proof of main thm}.
If the evaluation time is fixed to $t_i = 1$ for all $i \in [n]$,
then the order of this result with respect to $n$ is equivalent to the regret bound of \citet{Bogunovic_et_al_2016}: $\tilde{O}(n)$.
On the other hand, if there is
no time-varying effect, that is, $\epsilon = 0$ or $C_{\epsilon, \ct'} = 0$,
then this upper bound is equivalent to that of the existing work which does not consider the time-varying objective function \citep{Srinivas_et_al_2010}.
Our theoretical bound is the first generalized analysis linking these two studies, to the best of our knowledge.

In the regret bound in \eqref{formula: reg upper bound}, we can arbitrarily choose the partition $\{d_i\}$ to minimize the RHS.
By 
appropriately choosing $\{d_i\}$ for each kernel, we obtain the following theorem.
\begin{thm} \label{thm: derived}
    Let $T = \tau_n = \sum_{i=1}^n t_i$.
    Suppose that the time kernel is
    given by \eqref{formula: time kernel}.

        (\romone) Suppose the evaluation time $\{ t_i \}_{i=1}^n$ is
            uniform, that is, $t_i = \frac{T}{n}$.
            (a) Suppose the space kernel is the squared exponential kernel.
            Then,
            with high probability,
            \begin{align*}
                R_n =
                \begin{cases}
                    \tilde{O} (\sqrt{n})
                    & \mbox{if } \epsilon T < n^{- \frac{3}{2}},\\
                    \tilde{O} \left( n^\frac{4}{5} T^\frac{1}{5} \epsilon^\frac{1}{5} \right)
                    & \mbox{if } n^{- \frac{3}{2}} \le \epsilon T \le n,\\
                    \tilde{O} \left(n 
                    \left(1
                    +
                    \left(
                        \frac{\epsilon T}{n}
                    \right)^\frac{1}{2}
                    \right)
                    \right)
                    & \mbox{if } n \le \epsilon T.
                \end{cases}
            \end{align*}
            (b) Suppose the space kernel is the Mat\'{e}rn kernel with parameter $\nu$
            and $c = \frac{d(d+1)}{2 \nu + d(d + 1)}$.
            Then, 
            with high probability,
            \begin{align*}
                R_n =
                \begin{cases}
                    \tilde{O} (\sqrt{n^{1+c}})
                    & \mbox{if } \epsilon T < n^{- \frac{3}{2}},\\
                    \tilde{O} \left( n^\frac{4-c}{5-2c} T^\frac{1-c}{5-2c} \epsilon^\frac{1-c}{5-2c} \right)
                    & \mbox{if } n^{- \frac{3}{2}} \le \epsilon T \le n,\\
                    \tilde{O} \left(n 
                    \left(1
                    +
                    \left(
                        \frac{\epsilon T}{n}
                    \right)^\frac{1}{2}
                    \right)
                    \right)
                    & \mbox{if } n \le \epsilon T.
                \end{cases}
            \end{align*}
        \\
        (\romsec) Suppose the evaluation time
            $\{ t_i \}_{i=1}^n$ is
            extremely biased, that is,
            $t_i = 0$ when $i \neq n_0$ and $t_{n_0} = T$.
            (a) Suppose the space kernel is the squared exponential kernel. Then,
            with high probability,
            $
            R_n = \tilde{O} 
            \left(
                \sqrt{
                    n
                }
            \right).
            $
            (b) Suppose the space kernel is the Mat\'{e}rn kernel with parameter $\nu$ and $c = \frac{d(d+1)}{2 \nu + d(d + 1)}$.
            Then,
            with high probability,
            $
            R_n = \tilde{O} 
            \left(
                \sqrt{
                    n^{1+c}
                }
            \right).
            $
\end{thm}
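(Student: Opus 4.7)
The plan is to specialize the generic bound \eqref{formula: reg upper bound} of Theorem~\ref{thm: main} by (i) plugging in kernel-specific bounds for the maximum space information gain $\gamma_M$, (ii) plugging in the closed-form expressions for the evaluation time uniformity $C_{\epsilon, \ct_i}$ from Lemmas~\ref{lem: uni uni} and \ref{lem: bia uni}, (iii) choosing a convenient partition $\{d_i\}$ (uniform blocks of size $M$, so $N = n/M$), and (iv) optimizing the block size $M$. From \citet{Srinivas_et_al_2010} I will use $\gamma_M = \tilde O(1)$ for the squared exponential kernel and $\gamma_M = \tilde O(M^c)$ with $c = d(d+1)/(2\nu+d(d+1))$ for the Mat\'ern kernel, so that $N\gamma_M = \tilde O(n M^{c-1})$ (with $c=0$ in the squared exponential case).

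For case (\romone) (uniform evaluation time) I will restrict to $M \le n/(\epsilon T)$ so that the first part of Lemma~\ref{lem: uni uni} gives $C_{\epsilon, \ct_i} \le T^2 M^4/(6 n^2)$. Using $\phi(x) \le x$, the drift contribution in \eqref{formula: reg upper bound} is bounded by
\begin{equation*}
\tfrac12 \sum_{i=1}^{N} M_i\, \phi\!\bigl(\sigma^{-2}\epsilon\sqrt{C_{\epsilon,\ct_i}/M_i}\bigr) \;\le\; \tfrac{\sigma^{-2}\epsilon T}{2\sqrt{6}}\, M^{3/2},
\end{equation*}
so the bracket in \eqref{formula: reg upper bound} is $\tilde O\!\bigl(n M^{c-1} + \epsilon T M^{3/2}\bigr)$. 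Balancing these two terms yields $M^\star = \Theta\bigl((n/(\epsilon T))^{2/(5-2c)}\bigr)$; back-substitution produces the middle-regime rate $\tilde O\!\bigl(n^{(4-c)/(5-2c)}(\epsilon T)^{(1-c)/(5-2c)}\bigr)$, valid when $1 \le M^\star \le n/(\epsilon T)$, i.e.\ roughly $n^{-3/2} \le \epsilon T \le n$. In the low-drift regime $\epsilon T < n^{-3/2}$ the optimizer satisfies $M^\star \ge n$, so I instead take the single block $N=1$, $M=n$; the drift term is then dominated by $\gamma_n$, giving $\tilde O(\sqrt{n\gamma_n})$, which reproduces $\tilde O(\sqrt{n})$ for squared exponential and $\tilde O(\sqrt{n^{1+c}})$ for Mat\'ern. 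In the high-drift regime $\epsilon T \ge n$ the middle-regime formula is no longer informative; the stated bound $\tilde O\!\bigl(n(1+\sqrt{\epsilon T/n})\bigr)$ follows either by continuity (plugging $\epsilon T = n$ into the middle-regime bound and noting monotonicity in $\epsilon T$) or, more sharply, by falling back on singleton blocks where $C_{\epsilon,\ct_i}=0$ and any UCB-based analysis yields at most linear cumulative regret.

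For case (\romsec) (extremely biased evaluation time) I will use the two-block partition $\ct_1 = \{\tau_1,\dots,\tau_{n_0-1}\}$ and $\ct_2 = \{\tau_{n_0},\dots,\tau_n\}$. Within $\ct_1$ every $\tau_j = 0$ and within $\ct_2$ every $\tau_j = T$, so Lemma~\ref{lem: bia uni} (or direct inspection) gives $C_{\epsilon, \ct_1} = C_{\epsilon, \ct_2} = 0$ and the entire drift sum vanishes. The bracket collapses to $2\gamma_n$, and the time-invariant rates $\tilde O(\sqrt{n})$ and $\tilde O(\sqrt{n^{1+c}})$ for the squared exponential and Mat\'ern kernels follow immediately.

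The main obstacle is the case analysis for the uniform setting: one must verify that the optimizer $M^\star$ lies inside the regime $M \le n/(\epsilon T)$ in which the first part of Lemma~\ref{lem: uni uni} applies (so that the bound $C_{\epsilon,\ct_i} = O(T^2 M^4/n^2)$ is valid), and carefully pin down the two boundary thresholds $\epsilon T \asymp n^{-3/2}$ and $\epsilon T \asymp n$ at which one switches among the three sub-cases; this also forces a separate argument to close the high-drift regime since the middle-regime optimization no longer gives a non-trivial rate there. Once the correct regime is identified, the remaining algebra (computing $M^\star$, evaluating the bracket at $M^\star$, and multiplying by $\sqrt{C\beta_n n}$) is routine.
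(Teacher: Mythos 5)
Your proposal is correct and follows the same overall strategy as the paper's Appendix~D: specialize \eqref{formula: reg upper bound} with $\gamma_M=\tilde O(1)$ (squared exponential) or $\tilde O(M^c)$ (Mat\'ern), plug in Lemmas~\ref{lem: uni uni} and~\ref{lem: bia uni}, and optimize the common block size. You deviate in a few small but genuine ways, all of which work. In the uniform case you stay entirely inside the branch $i\le n/(\epsilon T)$ of Lemma~\ref{lem: uni uni} and never touch the second closed form; the paper instead carries out an explicit two-part case analysis (its Cases 3--1/3--2 and 4) in which that second branch is evaluated at $i_0=\lfloor n/(\epsilon T)\rfloor+1$ and at $i=1$. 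Your singleton-block fallback for $\epsilon T\ge n$, where every $C_{\epsilon,\ct_i}=0$ and the bracket collapses to $n\gamma_1=\tilde O(n)$, is simpler and actually yields $R_n=\tilde O(n)$, which implies (and slightly sharpens) the stated $\tilde O\bigl(n(1+\sqrt{\epsilon T/n})\bigr)$; do rely on that route rather than the ``continuity/monotonicity'' remark, which is not a proof. In the extremely biased case your two-block split at $n_0$ makes every $C_{\epsilon,\ct_i}$ vanish exactly (indeed Lemma~\ref{lem: bia uni} with $k_0=n_0-1$ gives a factor $n_0-k_0-1=0$), whereas the paper uses a three-block partition with a constant-width middle block whose drift term is $\tilde O(1)$; both give the same rates. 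One caveat you share with the paper: for the Mat\'ern kernel the binding constraint is $M^\star\le n$, which translates to $\epsilon T\ge n^{c-3/2}$ rather than $n^{-3/2}$, so on the sliver $n^{-3/2}\le\epsilon T<n^{c-3/2}$ the optimization only delivers $\tilde O(\sqrt{n^{1+c}})$ and not the stated middle-regime rate; the paper's proof of Lemma~\ref{lem: matern uniform} (``almost the same'') glosses over this exactly as you do.
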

A proof of this theorem is given in Appendix \ref{sec: proof of derived thm}.
To the best of our knowledge, this result is the first asymptotic regret analysis capturing the dependence
not only on the number of iterations $n$ but also
on $T$.

Theorem \ref{thm: derived} gives results for two settings of the variation of $\{ t_i \}_{i=1}^n$, that is, the {\it uniform} and {\it extremely biased} settings.
We discuss the results for these two settings in more detail as follows.

The uniform setting with $T = n$ is equivalent to the setting of \citet{Bogunovic_et_al_2016}.
This setting leads to the regret bound of $\tilde{O}(n)$ for both the squared exponential kernel and Mat\'{e}rn kernel \citep{Bogunovic_et_al_2016}.
However, in terms of $\epsilon$, our regret achieves $\tilde {O}(\epsilon^{\frac{1}{5}})$ for the squared exponential kernel and $\tilde{O} \left( \epsilon^\frac{1-c}{5-2c}\right)$ for the Mat\'{e}rn kernel with $ n^{- \frac{3}{2}} \le \epsilon \le n$.
This regret strictly improves that of \citet{Bogunovic_et_al_2016}: $\tilde{O} \left(\epsilon^{\frac{1}{6}} \right)$ for the squared exponential kernel and $\tilde{O} \left(\epsilon^\frac{1-c}{6-2c}\right)$ for the Mat\'{e}rn kernel.

In the uniform setting with $T = n$, if we fix $n$ and take a limit of $\epsilon \rightarrow 0$,
the regret bound becomes 
$\tilde{O}(\sqrt{n})$ for the squared exponential kernel 
and $\tilde{O}(\sqrt{n^{1+c}})$ for the {\matern} kernel.
This bound matches
that of the time-invariant setting \citep{Srinivas_et_al_2010}.
On the other hand, if we fix $\epsilon$ and take a limit of $n \rightarrow \infty$,
the regret bound becomes 
$\tilde{O}(n)$ for both kernels, which match that of the existing time-varying setting \citep{Bogunovic_et_al_2016}.
Our result links the time-invariant setting and the time-varying setting and 
shows
the effect of $n$ and $1/\epsilon$ on the regret upper bound.

In the extremely biased setting, the regret bounds are the same as those of the time-invariant setting \citep{Srinivas_et_al_2010}.
This regret upper bound is totally different from that of the uniform setting because we can obtain a \emph{sublinear regret}.
Our result shows that even in the time-varying GP bandit problem, the sublinear regret can be achieved by our algorithm depending on the sequence of evaluation time.

Our results show that the order of the regret bound significantly changes depending on the sequence of evaluation time.
One may argue that both the uniform and extremely biased settings are unrealistic. 
We emphasize that the purpose of our analysis is to illustrate that in the time-varying GP bandit with non-constant evaluation time, the sequence of non-constant evaluation time can significantly affect the difficulty of the problem, which is theoretically justified from our analysis.

\section{Experiments} \label{sec: ex}

In this section, we report the experimental results that compared our algorithms CTV and its simplified version CTV-simple with commonly
used
existing algorithms, which are GP-UCB \citep{Srinivas_et_al_2010} and
the existing algorithm designed for the time-varying setting,
TV-GP-UCB (TV) \citep{Bogunovic_et_al_2016}.
We used the CTV-fixed algorithm as a golden standard method in experiments, since it is assumed that the CTV-fixed algorithm knows the all evaluation time $t$ for any $x$ beforehand.
In \citet{Bogunovic_et_al_2016},
two types of the TV-GP-UCB algorithms were proposed: Resetting-GP-UCB and Time-varying-GP-UCB.
Since it is known that Resetting-GP-UCB performs
poorly compared with
Time-varying-GP-UCB \citep{Bogunovic_et_al_2016}, we did not include Resetting-GP-UCB in our experiment.


We used the {\matern(5/2)} kernel as the GP kernel of GP-UCB and for the space kernel of TV, CTV, and CTV-simple.
We used the special case of the exponential kernel
in
\eqref{formula: time kernel}
for the time kernel of TV, CTV and CTV-simple.
We used the {\matern(5/2)} kernel for a GP modeling of $t = \exp (g(x))$ in CTV and CTV-simple.

We use the same $\beta_n$ as in the previous study \cite{Bogunovic_et_al_2016}.
We used the Limited-memory Broyden-Fletcher-Goldfarb-Shanno algorithm (L-BFGS) \citep{Liu_1989} for maximization of the acquisition function in all algorithms.

\subsection{Data Description}
We used a two-dimensional input domain $\cd = [0,1]^2$ and quantized it into $50 \times 50$ uniformly divided points.
We generated the time-varying objective function according to the following time-varying GP model.
The time-varying objective function $f_i$ after $i$ seconds is according to
$f_0 \sim \cg \cp (0, k)$ and
$f_{i+1} = \sqrt{1 - \lambda} f_{i} + \sqrt{\lambda} \eta$, where $\lambda = 0.01$,
$\eta \sim \cg \cp (0, k)$ and $k$ is a kernel function such as the squared exponential kernel or {\matern}($5/2$) kernel.
We set $l = 0.2$ and $\theta = 1.0$ for the kernel parameters.
All sampling noises were
set to 0.01 to achieve $1\%$ of signal variance.

We considered two
settings of evaluation time, which are
the uniform setting and the biased setting.
Note that the uniform setting 
coincides with the setting in
\citet{Bogunovic_et_al_2016}.
In the uniform setting, the evaluation time at any point is fixed to 3.
For the biased setting, which is novel, the evaluation depends on the points.
We set
the evaluation time $t$ as
$
    t(x) = 2(\sin(\sqrt{2} \pi \|x\|_2) + 2).
$

\subsection{Experimental Design}
We conducted two experiments to validate our algorithms.
The hyperparameters of all algorithms were carefully chosen to maximize their performance using cross-validation or manually selected.

In the first
experiment,
we validated our algorithms in terms of the 
cumulative regret per round $R_n/n$.
There were four experimental settings for the objective function generated from two kernels and two settings of the evaluation time.
For each experimental setting, we ran the five GP BO algorithms, i.e., GP-UCB, TV, CTV, CTV-simple, and CTV-fixed in $100$ iterations.
We conducted this experiment 30 times.
Figure \ref{fig: avg reg} is a plot between the number of iterations and the cumulative regret per round $R_n/n$ with the mean and the standard deviation.
Note that since we randomly chose 30 points for the initialization, the horizontal axis of those graphs starts from $30$.

In the second experiment, we compared proposed and existing algorithms with different length of the evaluation time $t$ in the uniform setting by the squared exponential kernel.
We set $t$ to $1$ and $10$.
If we set $t = 1$, the evaluation time is exactly the same as that of the existing study
\citep{Bogunovic_et_al_2016}.
Therefore, TV is expected to perform best in this setting.
On the other hand, if we set $t = 10$, the evaluation time could be too long that all methods may fail.
The result is shown in Figure \ref{fig: step}.

\begin{figure*}[t]
    \centering
    \begin{tabular}{c}
        \begin{minipage}{0.4\hsize}
            \centering
            \includegraphics[width = 0.9\columnwidth]{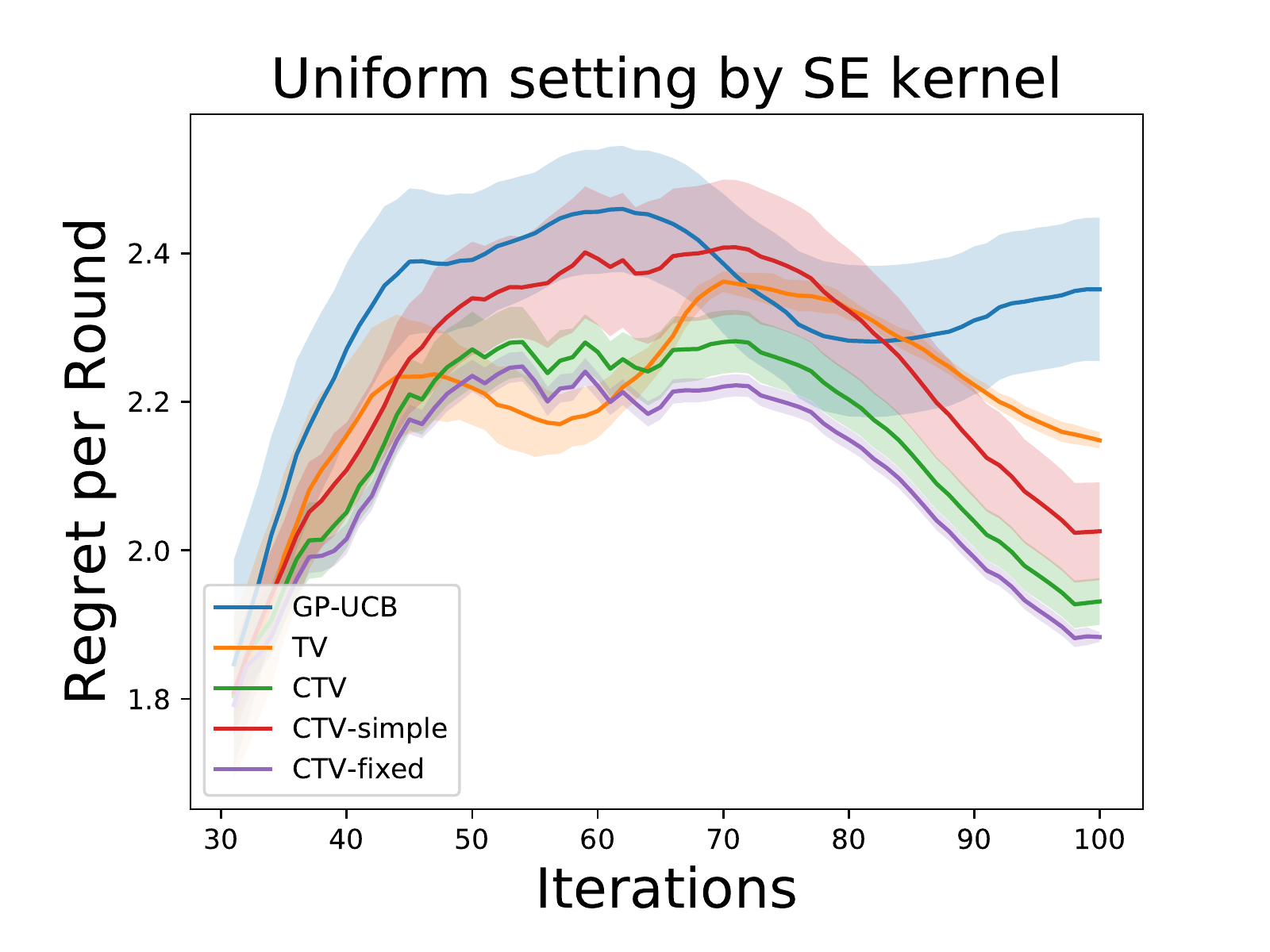}
        \end{minipage}
        \begin{minipage}{0.4\hsize}
        \centering
            \includegraphics[width = 0.9\columnwidth]{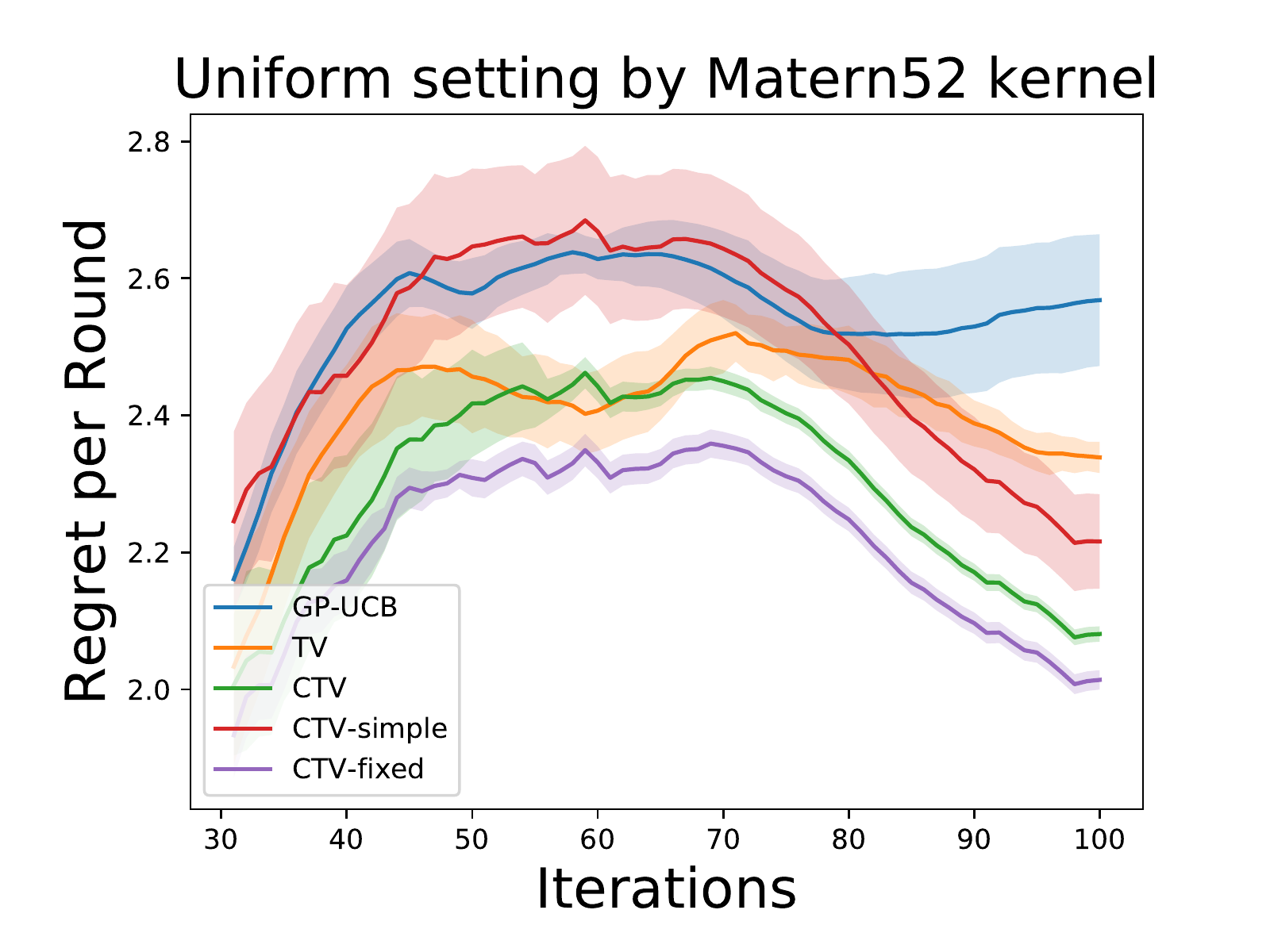}
        \end{minipage} \\
        \begin{minipage}{0.4\hsize}
            \centering
            \includegraphics[width = 0.9\columnwidth]{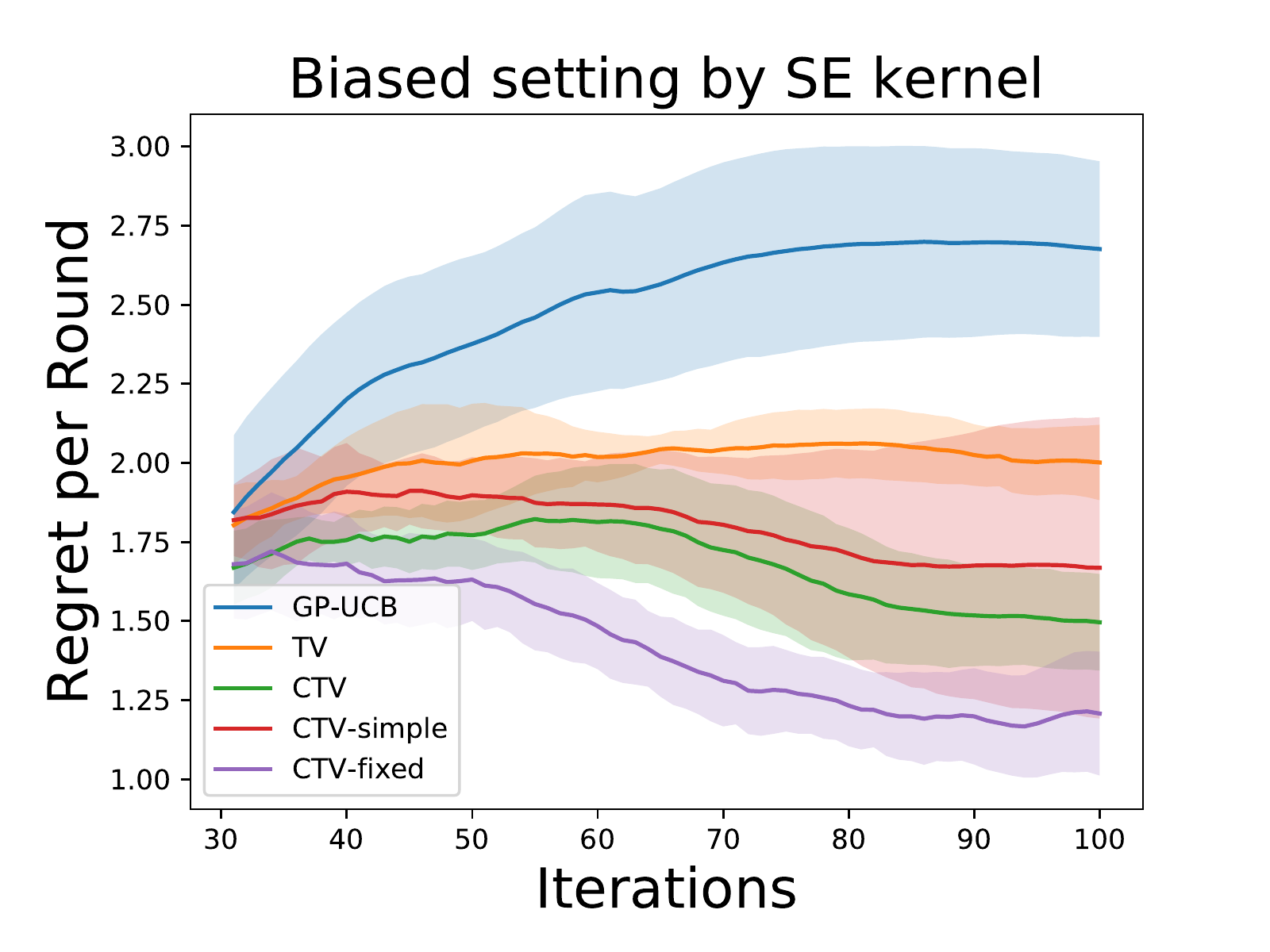}
        \end{minipage}
        \begin{minipage}{0.4\hsize}
        \centering
            \includegraphics[width = 0.9\columnwidth]{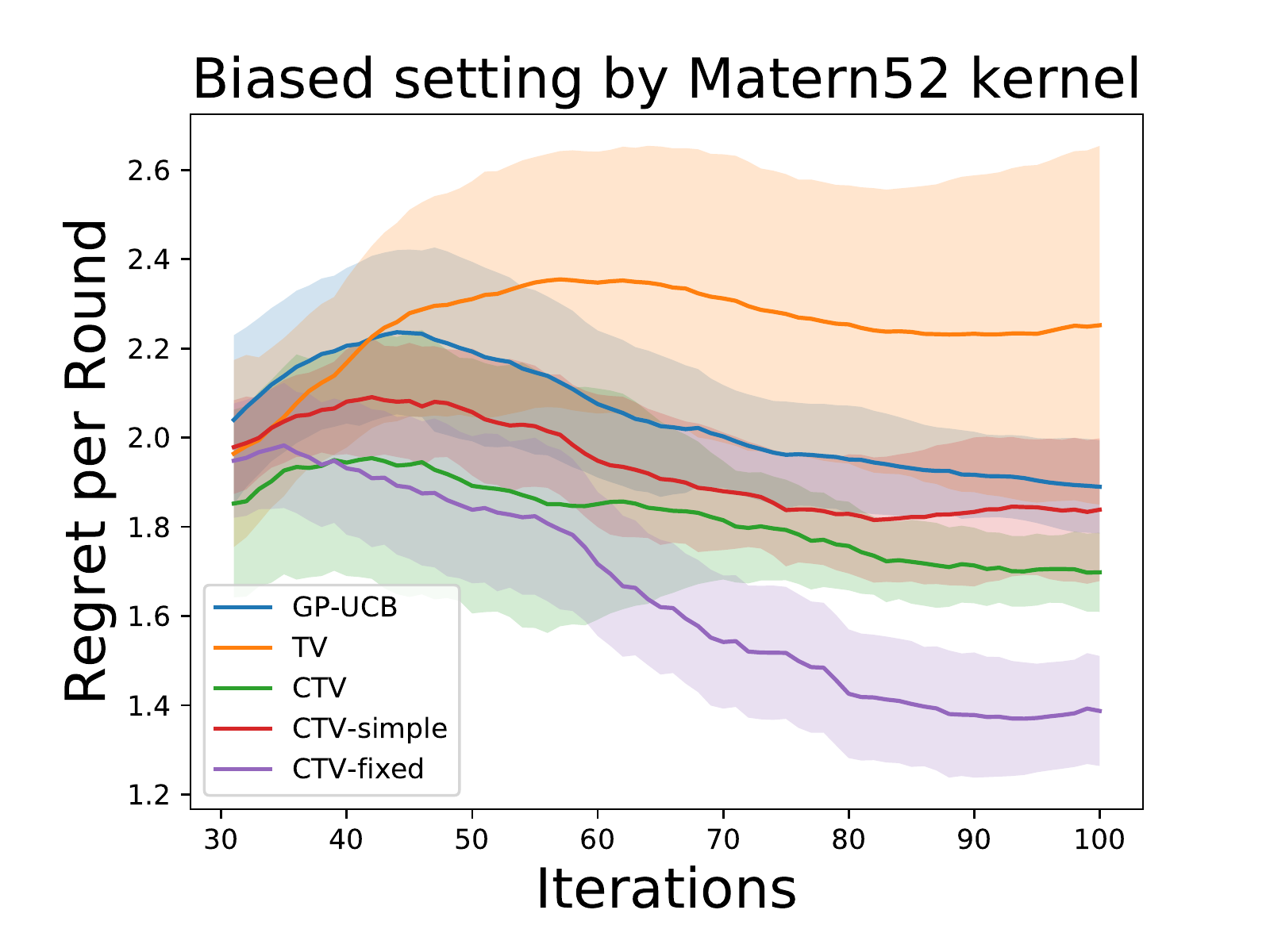}
        \end{minipage}
    \end{tabular}
    \caption{Averaged cumulative regret for the squared exponential and {\matern} (5/2) kernel in the uniform and biased settings.}
    \label{fig: avg reg}
\end{figure*}
\begin{figure*}[t]
    \centering
    \small{
    \begin{tabular}{c}
    \begin{minipage}{0.4\hsize}
    \centering
        \includegraphics[width = 0.9\columnwidth]{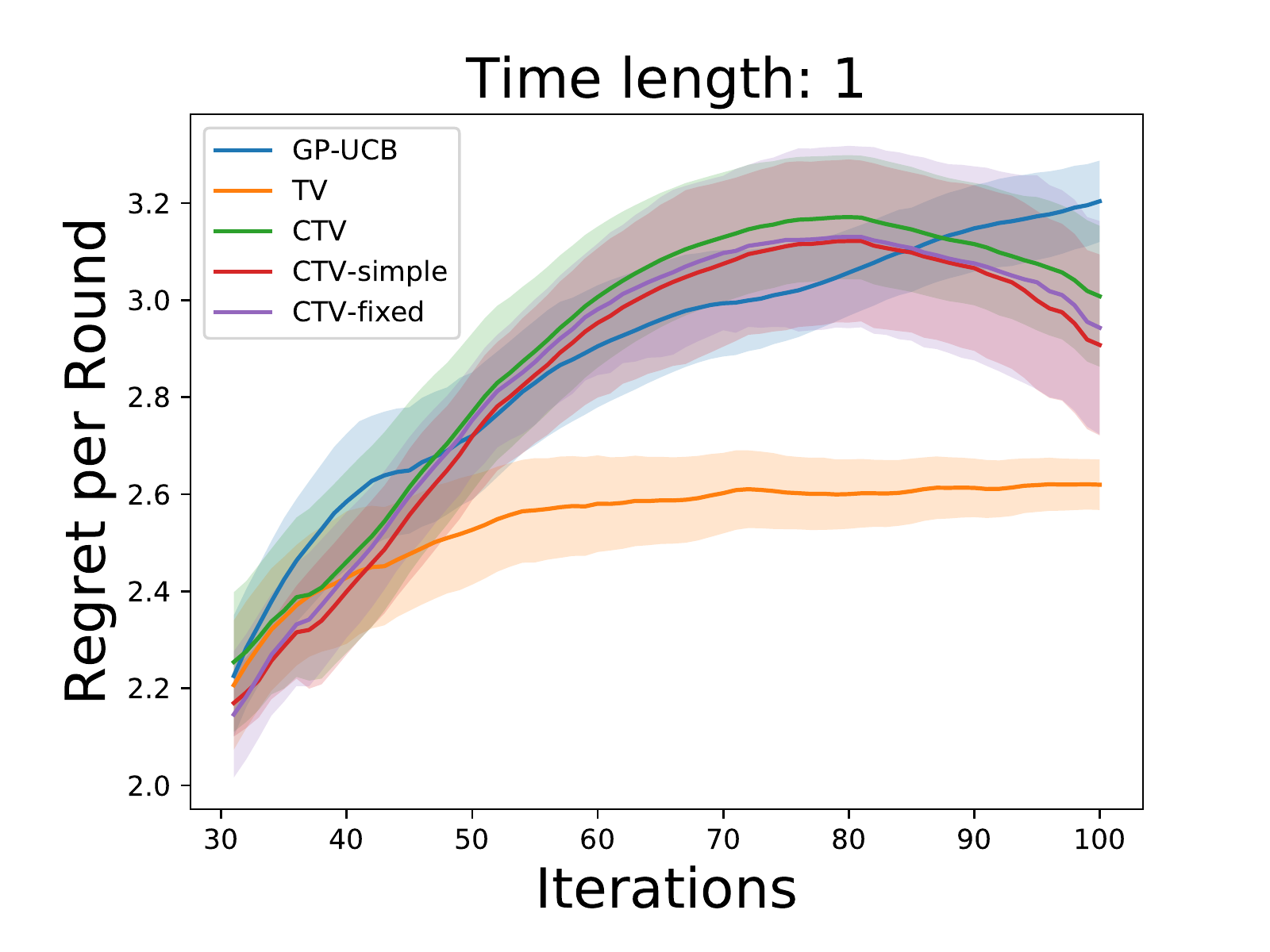}
    \end{minipage}
    \begin{minipage}{0.4\hsize}
        \centering
        \includegraphics[width = 0.9\columnwidth]{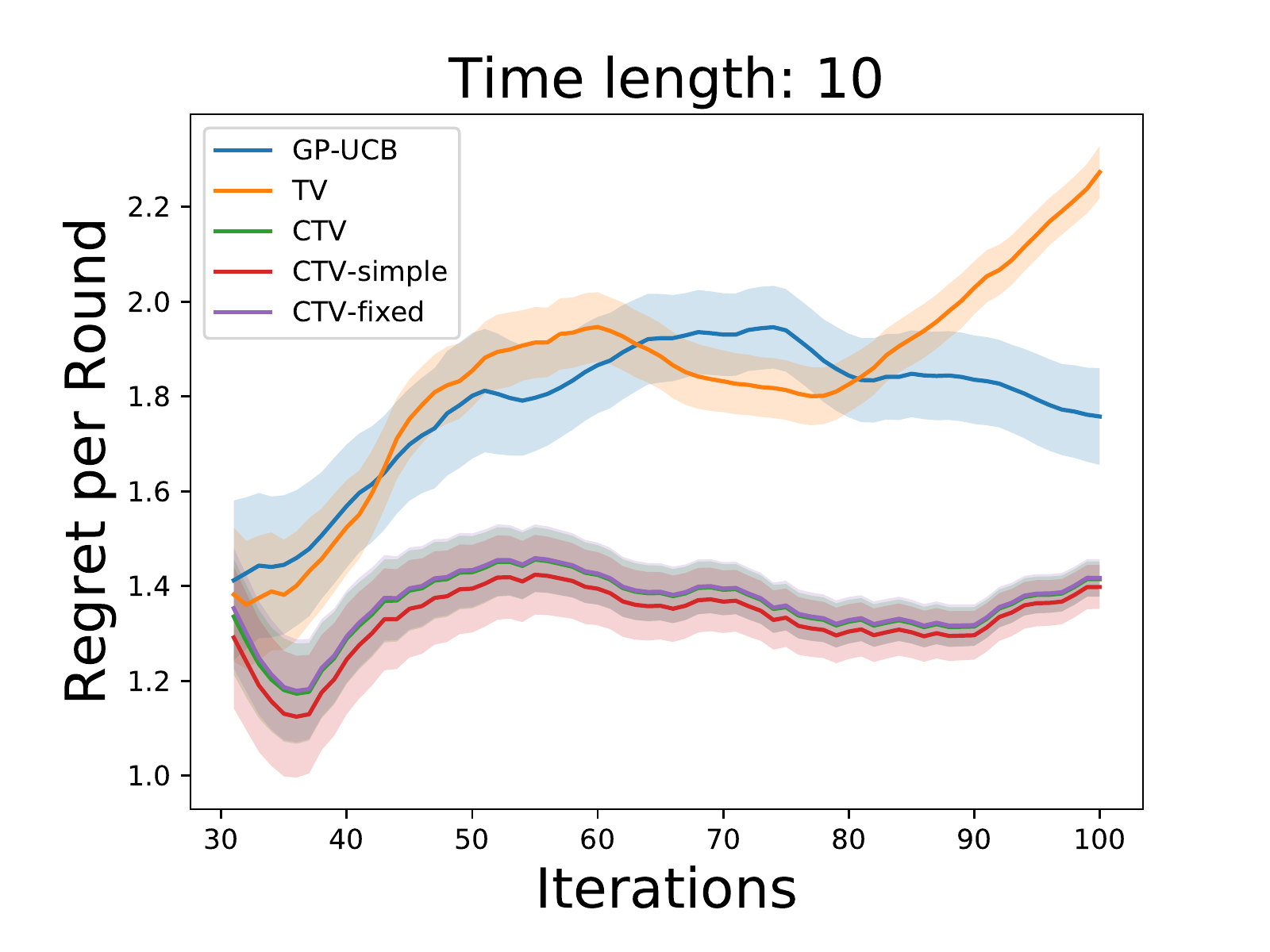}
    \end{minipage}
    \end{tabular}
    }
    \caption{Average cumulative regret for the squared exponential with the different length of evaluation time in the uniform setting.}
    \label{fig: step}
\end{figure*}

\subsection{Discussion}
Figure \ref{fig: avg reg} shows that
our algorithms are superior to the existing algorithms.
We can see that the performance of TV is superior than GP-UCB in the uniform setting.
In the uniform setting, the performance of TV is stable from the perspective of the variance of the cumulative regret per round.
On the other hand, the performance of GP-UCB is unstable in the uniform setting.
In the biased setting, the performance of both GP-UCB and TV are unstable.
Furthermore, it can be observed that CTV, CTV-simple, and CTV-fixed, which are our proposed methods, outperformed both TV and GP-UCB in all settings.
From Figure \ref{fig: avg reg}, we can see that CTV-fixed performs best among all methods, which is reasonable because only CTV-fixed knows all evaluation time beforehand.

It may be confusing that the performance of TV is worse than that of the proposed methods not only in the biased setting but also in the uniform setting,
The biased setting is constructed to produce the imbalances of the evaluation time.
On the other hand, the uniform setting is constructed similarly to the setting in the existing study \citep{Bogunovic_et_al_2016}.
However, there is a difference of our uniform setting and the setting of the existing study \citep{Bogunovic_et_al_2016}, which is the length of evaluation time, although it is identical for all rounds.
In our uniform setting, the all evaluation time is set to be $3$, 
but 
the existing study only considers the case where all evaluation time is set to be $1$.
Therefore, the actual performance of TV and our proposed methods are different as shown in Figure \ref{fig: avg reg}.

Except for CTV-fixed, our second proposed method CTV performs best among all methods.
The mean of the achieved cumulative regret per round of CTV is minimum in those of GP-UCB, TV and CTV-simple.
The other notable point is the standard deviation of the cumulative regret per round of CTV.
In the uniform setting, the standard deviation of CTV is smaller than that of CTV-simple.
Moreover, in the biased setting, the standard deviation of CTV is much smaller than CTV-simple with both the squared exponential kernel and the \matern (5/2) kernel.
In the biased setting by the squared exponential kernel, the standard deviation of CTV is much smaller than that of GP-UCB, and in the same setting by the \matern (5/2) kernel, that of CTV is much smaller than that of TV.
This implies that the our proposed method CTV demonstrates stable performance in all cases. 

From the left of Figure \ref{fig: step}, if we set $t=1$, TV performs best among all methods.
This is reasonable because the model of the generated data is identical to the assumed model of the objective function in TV.
It may be confusing that the performance of CTV-fixed is totally different from that of TV.
This is because that the GP posterior of CTV-fixed is constructed from the measured time data.
The measured time data contains some noises, which is critical to the performance of CTV, CTV-simple, and CTV-fixed, when the relative noise magnitude is large compared to the time data.

In Figure \ref{fig: step} (right), if we set $t = 10$, our proposed methods perform best compared to existing methods.
We note that the performances of CTV, CTV-simple, and CTV-fixed are almost the same.
This suggests that if the relative noise magnitude is small compared to the time data, it is expected that our proposed methods perform well and can outperform existing methods.

\section{Conclusion} \label{sec: conc}
We proposed a novel time-varying GP bandit algorithm 
which takes the non-constant evaluation time into account.
To the best of our knowledge, this work is the first to theoretically link the time-invariant and time-varying settings.
We also showed that a sub-linear regret can be achieved even in the time-varying setting in contrast to the proven fact in \citet{Bogunovic_et_al_2016}.
This means that the sequence of the evaluation time greatly affects the regret order.
For future work, it is important to study the effects of the sequence of the evaluation time in more detail.

\ifisICML
\else
\section*{Acknowledgements}
HI was supported by JST AIP Challenge.
NC was supported by MEXT scholarship and JST AIP Challenge.
MS was supported by KAKENHI 17H00757.
\fi

\newpage
\bibliography{ref}

\begin{thebibliography}{26}
\providecommand{\natexlab}[1]{#1}
\providecommand{\url}[1]{\texttt{#1}}
\expandafter\ifx\csname urlstyle\endcsname\relax
  \providecommand{\doi}[1]{doi: #1}\else
  \providecommand{\doi}{doi: \begingroup \urlstyle{rm}\Url}\fi

\bibitem[Auer(2002)]{Auer_2002}
P.~Auer.
\newblock Using confidence bounds for exploitation-exploration tradeoffs.
\newblock \emph{Journal of Machine Learning Research}, 3:\penalty0 397--422,
  2002.

\bibitem[Besbes et~al.(2014)Besbes, Gur, and Zeevi]{Besbes_et_al_2014}
O.~Besbes, Y.~Gur, and A.~Zeevi.
\newblock Stochastic multi-armed-bandit problem with non-stationary rewards.
\newblock \emph{In Advances in Neural Information Processing Systems}, pages
  199--207, 2014.

\bibitem[Bogunovic et~al.(2016)Bogunovic, Scarlett, and
  Cevher]{Bogunovic_et_al_2016}
I.~Bogunovic, J.~Scarlett, and V.~Cevher.
\newblock Time-varying {Gaussian} process bandit optimization.
\newblock \emph{In Proceedings of the 19th International Conference on
  Artificial Intelligence and Statistics (AISTATS)}, 2016.

\bibitem[Bogunovic et~al.(2018)Bogunovic, Scarlett, Jegelka, and
  Cevher]{Bogunovic_et_al_2018}
I.~Bogunovic, J.~Scarlett, S.~Jegelka, and V.~Cevher.
\newblock Adversarially robust optimization with {Gaussian} processes.
\newblock \emph{In Advances in Neural Information Processing Systems}, 2018.

\bibitem[Contal et~al.(2014)Contal, Perchet, and Vayatis]{Contal_et_al_2014}
E.~Contal, V.~Perchet, and N.~Vayatis.
\newblock {Gaussian} process optimization with mutual information.
\newblock \emph{In Proceedings of the 31th International Conference on Machine
  Learning}, 2014.

\bibitem[Cover and Thomas(1991)]{Cover_and_Thomas_1991}
T.~M. Cover and J.~A. Thomas.
\newblock \emph{Elements of Information Theory}.
\newblock Wiley Interscience, 1991.

\bibitem[Hariri et~al.(2015)Hariri, Mobasher, and Burke]{Hariri_et_al_2015}
Negar Hariri, Bamshad Mobasher, and Robin Burke.
\newblock Adapting to user preference changes in interactive recommendation.
\newblock In \emph{Proceedings of the 24th International Conference on
  Artificial Intelligence}, IJCAI'15, pages 4268--4274. AAAI Press, 2015.

\bibitem[Heaton and Lucas(1999)]{Heaton_and_Lucas_1999}
John Heaton and Deborah Lucas.
\newblock Stock prices and fundamentals.
\newblock \emph{NBER Macroeconomics Annual}, 14:\penalty0 213--242, 1999.

\bibitem[Henning and Schuler(2012)]{Henning_and_Schuler_2012}
P.~Henning and C.~Schuler.
\newblock Entropy search for information-efficient global optimization.
\newblock \emph{Journal of Machine Learning Research}, 13:\penalty0 1809--1837,
  2012.

\bibitem[Hern\'{a}ndez-Lobato et~al.(2014)Hern\'{a}ndez-Lobato, Hoffman, and
  Ghahramani]{Hernandez_Lobato_et_al_2014}
J.~Hern\'{a}ndez-Lobato, M.~Hoffman, and Z.~Ghahramani.
\newblock Predictive entropy search for efficient global optimization of
  black-box functions.
\newblock \emph{In Advances in Neural Information Processing Systems}, 2014.

\bibitem[Horn and Johnson(2012)]{Horn_and_Johnson_2012}
R.~A. Horn and C.~R. Johnson.
\newblock \emph{Matrix Analysis}.
\newblock Cambridge University Press, 2nd edition, 2012.

\bibitem[Jones et~al.(1998)Jones, Schonlau, and Welch]{Jones_et_al_1998}
D.~Jones, M.~Schonlau, and W.~Welch.
\newblock Expensive global optimization of expensive black-box functions.
\newblock \emph{Joumal of Global Optimization}, 13:\penalty0 455--492, 1998.

\bibitem[Klein et~al.(2017)Klein, Falkner, Bartels, Hennig, and
  Hutter]{Klein_et_al_2017}
A.~Klein, S.~Falkner, S.~Bartels, P.~Hennig, and F.~Hutter.
\newblock Fast {B}ayesian optimization of machine learning hyperparameters on
  large datasets.
\newblock \emph{In Proceedings of the 20th International Conference on
  Artificial Intelligence and Statistics (AISTATS)}, 2017.

\bibitem[Krause and Ong(2011)]{Krause_and_Ong_2011}
A.~Krause and C.~S. Ong.
\newblock Contextual {Gaussian} process bandit optimization.
\newblock \emph{In Advances in Neural Information Processing Systems}, pages
  2447--2455, 2011.

\bibitem[Liu and Nocedal(1989)]{Liu_1989}
Dong~C. Liu and Jorge Nocedal.
\newblock On the limited memory {B}{F}{G}{S} method for large scale
  optimization.
\newblock \emph{Mathematical Programming}, 45\penalty0 (1):\penalty0 503--528,
  Aug 1989.

\bibitem[Lizotte et~al.(2007)Lizotte, Wang, Bowling, and
  Schuurmans]{Lizotte_et_al_2007}
D.~J. Lizotte, T.~Wang, M.~H. Bowling, and D.~Schuurmans.
\newblock Automatic gait optimization with {Gaussian} process regression.
\newblock \emph{In International Joint Conference on Artificial Intelligence
  (IJCAI)}, pages 944--949, 2007.

\bibitem[Mockus et~al.(1978)Mockus, Tiesis, and Zilinskas]{Mockus_et_al_1978}
J.~Mockus, V.~Tiesis, and A.~Zilinskas.
\newblock The application of {B}ayesian methods for seeking the extremum.
\newblock \emph{Towards Global Optimization}, pages 117--129, 1978.

\bibitem[Rasmussen and Williams(2006)]{Rasmussen_and_Williams_2006}
C.~Rasmussen and C.~Williams.
\newblock \emph{{Gaussian} Processes for Machine Learning}.
\newblock The MIT Press, 2006.

\bibitem[Shahriari et~al.(2015)Shahriari, Swersky, Wang, Adams, and
  de~Freitas]{Shahriari_et_al_2015}
B.~Shahriari, K.~Swersky, Z.~Wang, R.~Adams, and N.~de~Freitas.
\newblock Taking the human out of the loop: A review of {B}ayesian
  optimization.
\newblock \emph{In Proceedings of the IEEE}, (1), 2015.

\bibitem[Slivkins and Upfal(2008)]{Slivkins_and_Upfal_2008}
A.~Slivkins and E.~Upfal.
\newblock Adapting to a changing environment: the brownian restless bandits.
\newblock \emph{In Conference on Learning Theory}, 2008.

\bibitem[Snoek et~al.(2012)Snoek, Larochelle, and Adams]{Snoek_et_al_2012}
J.~Snoek, H.~Larochelle, and R.~P. Adams.
\newblock Practical {B}ayesian optimization of machine learning algorithms.
\newblock \emph{In Advances in Neural Information Processing Systems}, 2012.

\bibitem[Srinivas et~al.(2010)Srinivas, Krause, Kakade, and
  Seeger]{Srinivas_et_al_2010}
N.~Srinivas, A.~Krause, S.~Kakade, and M.~Seeger.
\newblock {Gaussian} process optimization in the bandit setting: No regret and
  experimental design.
\newblock \emph{In Proceedings of the 27th International Conference on Machine
  Learning}, 2010.

\bibitem[Swersky et~al.(2013)Swersky, Snoek, and Adams]{Swersky_et_al_2013}
K.~Swersky, J.~Snoek, and R.~P. Adams.
\newblock Multi-task {Bayesian} optimization.
\newblock \emph{In Advances in Neural Information Processing Systems}, 2013.

\bibitem[Szegedy et~al.(2013)Szegedy, Zaremba, Sutskever, Bruna, Erhan,
  Goodfellow, and Fergus]{Szegedy_et_al_2013}
C.~Szegedy, W.~Zaremba, I.~Sutskever, J.~Bruna, D.~Erhan, I.~Goodfellow, and
  R.~Fergus.
\newblock Intriguing properties of neural networks.
\newblock \emph{International Conference on Learning Representations (ICLR)},
  2013.

\bibitem[Vanchinathan et~al.(2014)Vanchinathan, Nikolic, Bona, and
  Krause]{Vanchinathan_et_al_2014}
H.~P. Vanchinathan, I.~Nikolic, F.~D. Bona, and A.~Krause.
\newblock Explore-exploit in top-n recommender systems via {Gaussian}
  processes.
\newblock \emph{In Proceedings of the 8th ACM Conference on Recommender
  systems}, pages 225--232, 2014.

\bibitem[Wang and Jegelka(2017)]{Wang_and_Jegelka_2017}
Z.~Wang and S.~Jegelka.
\newblock Max-value entropy search for efficient {B}ayesian optimization.
\newblock \emph{In Proceedings of the 34th International Conference on Machine
  Learning}, 2017.

\end{thebibliography}
\ifisICML
\bibliographystyle{icml2020}
\else 
\bibliographystyle{plainnat}
\fi

\appendix
\onecolumn

\section{Gradients of Acquisition Functions}
In this section, we show how to calculate the gradients of our proposed acquisition functions with respect to its input $x \in \cd$.
The gradients of acquisition functions can be used in the hyperparameter optimization of GPs and the optimization of acquisition functions in each round.
For details of the hyperparameter optimization and the acquisition function optimization, see existing works \citep{Rasmussen_and_Williams_2006, Snoek_et_al_2012}.

First, we give the gradient of the CTV-fixed acquisition function \eqref{formula: acq fixed}.
Assume that we have an access to the gradient of the base acquisition function
$\nabla_{x, \tau} \alpha_{\mathrm{base}}(x, \tau)$.
Using the gradient of the base acquisition function, we can calculate the gradient of the CTV-fixed acquisition function as follows.
\begin{lem}
For the CTV-fixed acquisition function, the gradient is given as follows.
\begin{align*}
    \nabla_{x} \alpha(x | \mathscr{D}_n)
    =
    \nabla_{x, \tau} \alpha_{\mathrm{base}}(x, \tau | \mathscr{D}_n) \mid_{\tau = \tau_n + t}
    =
    \nabla_{x, \tau} \alpha_{\mathrm{base}}(x, \tau_n + t | \mathscr{D}_n).
\end{align*}
\end{lem}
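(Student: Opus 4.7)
The plan is a direct chain-rule calculation applied to the definition
\[
\alpha(x \mid \mathscr{D}_n) = \alpha_{\mathrm{base}}(x, \tau_n + t \mid \mathscr{D}_n).
\]
First I would note that $\tau_n = \sum_{i=1}^n t_i$ is determined entirely by past rounds and hence independent of $x$, and that in the CTV-fixed setting the evaluation time $t$ is supplied externally as a known scalar before the acquisition value is computed, so it too is treated as constant with respect to $x$ for the purpose of this derivative. With both pieces fixed, the only dependence of $\alpha(x \mid \mathscr{D}_n)$ on $x$ comes through the first slot of $\alpha_{\mathrm{base}}$.

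From there I would apply the chain rule to obtain
\[
\nabla_x \alpha(x \mid \mathscr{D}_n) = \left. \frac{\partial}{\partial x} \alpha_{\mathrm{base}}(x, \tau \mid \mathscr{D}_n) \right|_{\tau = \tau_n + t},
\]
which is exactly the first equality in the lemma once the notation $\nabla_{x,\tau}\,\alpha_{\mathrm{base}}(x,\tau \mid \mathscr{D}_n)\bigl|_{\tau = \tau_n+t}$ is read as ``the gradient operator acting on the $x$-argument of $\alpha_{\mathrm{base}}$ viewed as a function of the two arguments $(x,\tau)$, with $\tau$ subsequently set to $\tau_n+t$''. The second equality in the statement is the same object with the substitution $\tau = \tau_n + t$ performed inside the notation, which is valid because this pointwise evaluation commutes with $\partial/\partial x$.

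There is no substantial mathematical obstacle here; the proof is essentially a one-line chain rule. The only point worth flagging, and the one potential source of confusion, is the treatment of $t$: in CTV-fixed the evaluation time is provided as a lookup value per candidate $x$ and is plugged into $\alpha_{\mathrm{base}}$ as data, not as a differentiable function of $x$. Were one to instead model $t = t(x)$ smoothly and differentiate through it, an additional term $\partial_\tau \alpha_{\mathrm{base}}(x,\tau_n+t \mid \mathscr{D}_n)\cdot \nabla_x t(x)$ would appear. The lemma's formula corresponds to the standard convention of not incorporating this extra Jacobian contribution, and this convention should be stated explicitly at the start of the proof so that the single chain-rule step then closes the argument.
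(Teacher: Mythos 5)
Your proof is correct and follows essentially the same route as the paper, whose own proof is simply the remark that the identity ``immediately follows from the elemental calculation''; you supply the chain-rule step and the justification that $\tau_n$ and $t$ are constants with respect to $x$, which is all that is needed. Your closing caveat is well taken: since the paper states elsewhere that in CTV-fixed the evaluation time $t$ ``is determined by $x$'', the absence of the extra term $\partial_\tau \alpha_{\mathrm{base}}(x,\tau_n+t \mid \mathscr{D}_n)\,\nabla_x t(x)$ (which does appear in the analogous CTV and CTV-simple gradient lemmas) only makes sense under the convention you identify, namely that $t$ is plugged in as data rather than differentiated through.
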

\begin{proof}
    The result immediately follows from the elemental calculation.
\end{proof}

Second, we give the gradient of the CTV acquisition function.
Assume that we can use the gradient of the base acquisition function
$\nabla_{x, \tau} \alpha_{\mathrm{base}}(x, \tau)$.
Using the gradient of the base acquisition function and exchanging the differentiation and integration, we can calculate the gradient of the CTV acquisition function as follows.
\begin{lem}
    For the CTV acquisition function, the gradient is given as follows.
    \begin{align*}
        \frac{\partial}{\partial x_i} \alpha(x | \mathscr{D}_n)
        =
        \int
        \left(
            \frac{\partial \alpha_{\mathrm{base}}}{\partial x_i}(x, s)
            +
            \frac{\partial \alpha_{\mathrm{base}}}{\partial \tau}(x, s)
            \frac{\partial \tau}{\partial x_i}(x, s)
        \right) \frac{e^{- s^2}}{\sqrt{\pi}} ds,
    \end{align*}
    where
    \begin{align*}
        t(x, s) &= \exp \left(
            \sqrt{2} \sigma_n^g(x) s + \mu_n^g(x)
        \right),\\
        \frac{\partial \alpha_{\mathrm{base}}}{\partial x_i}(x, s)
        &=
        \frac{\partial}{\partial x_i}
        \alpha_{\mathrm{base}} \left(x, \tau | \mathscr{D}_n \right) 
        \mid_{\tau = \tau_n + t(x, s)},\\
        \frac{\partial \alpha_{\mathrm{base}}}{\partial \tau}(x, s)
        &=
        \frac{\partial}{\partial \tau}
        \alpha_{\mathrm{base}} \left(x, \tau | \mathscr{D}_n \right) 
        \mid_{\tau = \tau_n + t(x, s)},\\
        \frac{\partial \tau}{\partial x_i}(x, s)
        &= 
        \frac{\partial}{\partial x_i} \left(
        \exp \left(
            \sqrt{2} \sigma_n^g(x) s + \mu_n^g(x)
        \right)
        \right)\\
        &=
        \exp \left(
            \sqrt{2} \sigma_n^g(x) s + \mu_n^g(x)
        \right)
        \left(
            \sqrt{2}
            \frac{\partial \sigma_n^g(x)}{x_i}
            s
            +
            \frac{\partial \mu_n^g(x)}{x_i}
        \right).
    \end{align*}
\end{lem}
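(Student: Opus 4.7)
The plan is to reduce the $x$-dependent integration against the log-normal density to an integration against a fixed Gaussian-type weight, so that I can interchange the derivative and the integral, and then apply the chain rule to the integrand.

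First I would rewrite the CTV acquisition function in \eqref{formula: acq integral} by performing the change of variables suggested by the formula for $t(x,s)$. Concretely, set $t = \exp(\sqrt{2}\sigma_n^g(x) s + \mu_n^g(x))$, so that $\log t = \sqrt{2}\sigma_n^g(x) s + \mu_n^g(x)$ and $dt = \sqrt{2}\sigma_n^g(x)\, t\, ds$. Substituting into the log-normal density $\Lambda(t\mid\mu_n^g(x),(\sigma_n^g)^2(x))$, the factors of $t$ and $\sigma_n^g(x)$ cancel and the exponent becomes $-s^2$, yielding
\begin{equation*}
\alpha(x\mid\mathscr{D}_n) \;=\; \int \alpha_{\mathrm{base}}\bigl(x,\tau_n + t(x,s)\,\big|\,\mathscr{D}_n\bigr)\,\frac{e^{-s^{2}}}{\sqrt{\pi}}\,ds.
\end{equation*}
The crucial feature is that the weight $e^{-s^{2}}/\sqrt{\pi}$ no longer depends on $x$, so all the $x$-dependence is concentrated in the integrand.

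Next I would justify differentiating under the integral sign. Under Assumption \ref{ass}(\romone), $f$ and hence the posterior mean and variance of both $f$ and $g$ are continuously differentiable in $x$ on the compact domain $\cd$, so $t(x,s)$ and $\alpha_{\mathrm{base}}(x,\tau_n+t(x,s))$ are $C^{1}$ in $x$. Because $\alpha_{\mathrm{base}}$ is a UCB-type function composed with $t(x,s)$, its $x$-gradient grows at most polynomially in $|s|$ times $\exp(\sqrt{2}\sigma_n^g(x)|s|)$, which is dominated by $e^{s^{2}/2}$-type bounds uniformly on compact neighborhoods of $x$; hence the integrand's gradient is dominated by an $L^1$ function against $e^{-s^{2}}/\sqrt{\pi}$ and dominated convergence permits interchanging $\partial/\partial x_i$ with the integral.

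With the interchange in hand, I would apply the chain rule to $\alpha_{\mathrm{base}}(x,\tau_n+t(x,s))$, which depends on $x_i$ both directly through its first argument and indirectly through $\tau = \tau_n + t(x,s)$. This produces the two terms
\begin{equation*}
\frac{\partial \alpha_{\mathrm{base}}}{\partial x_i}(x,s) + \frac{\partial \alpha_{\mathrm{base}}}{\partial \tau}(x,s)\cdot\frac{\partial \tau}{\partial x_i}(x,s),
\end{equation*}
and the final step is to compute $\partial \tau/\partial x_i = \partial t(x,s)/\partial x_i$ by differentiating $\exp(\sqrt{2}\sigma_n^g(x)s + \mu_n^g(x))$ directly, giving the stated product of the exponential and the bracket $\sqrt{2}(\partial\sigma_n^g/\partial x_i)s + \partial\mu_n^g/\partial x_i$.

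The main obstacle I anticipate is the domination bound needed for exchanging derivative and integral, because $t(x,s)$ grows exponentially in $s$ and $\sigma_n^g(x)$; the argument rests on the fact that the Gaussian weight $e^{-s^{2}}$ dominates any polynomial-times-sub-Gaussian growth, and on $\sigma_n^g(x)$ being uniformly bounded on $\cd$ (since $k_g$ is continuous on the compact $\cd\times\cd$). Everything else is mechanical substitution and the chain rule.
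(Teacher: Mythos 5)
Your proposal is correct and follows essentially the same route as the paper's proof: the substitution $t = \exp(\sqrt{2}\,\sigma_n^g(x)s + \mu_n^g(x))$ to turn the log-normal integral into one against the $x$-independent weight $e^{-s^2}/\sqrt{\pi}$, followed by differentiating under the integral and applying the chain rule. Your added dominated-convergence justification for the interchange is a point the paper simply asserts without argument, so it only strengthens the write-up.
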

\begin{proof}
    Recall that the CTV acquisition function is defined as follows.
    \begin{align*}
        \alpha(x | \mathscr{D}_n)
        =
        \int 
        \alpha_{\mathrm{base}}(x,\tau_n+t|\mathscr{D}_n) 
        p(t|x,\mathscr{D}_n)
        dt.
    \end{align*}
    Here, the posterior distribution of $t$ is a log-normal distribution as follows.
    \begin{align*}
        p(t|x,\mathscr{D}_n)
        &=
        \Lambda \left(t|\mu_n^g(x), (\sigma_n^g)^2(x) \right)\\
        &= \frac{1}{\sqrt{2\pi} \sigma_n^g(x) t}
        \exp
        \left(
        - \frac{(\log t - \mu_n^g(x))^2}{2 (\sigma_n^g(x))^2}
        \right).
    \end{align*}
    By replacement of 
    $t = t(x, s) = \exp \left(
            \sqrt{2} \sigma_n^g(x) s + \mu_n^g(x)
    \right)$, we get the following expression of the CTV acquisition function.
    \begin{align*}
        \alpha(x | \mathscr{D}_n)
        &= 
        \int 
        \alpha_{\mathrm{base}}(x,\tau_n+t|\mathscr{D}_n) 
        p(t|x,\mathscr{D}_n)
        dt\\
        &=
        \int 
        \alpha_{\mathrm{base}}(x,\tau_n+t|\mathscr{D}_n) 
        \frac{1}{\sqrt{2\pi} \sigma_n^g(x) t}
        \exp
        \left(
        - \frac{(\log t - \mu_n^g(x))^2}{2 (\sigma_n^g(x))^2}
        \right)
        dt\\
        &=
        \int 
        \alpha_{\mathrm{base}}(x,\tau_n+t(x, s)|\mathscr{D}_n) 
        \frac{
        \exp
        \left(- s^2 \right)
        t(x, s) \sqrt{2} \sigma_n^g(x)}
        {\sqrt{2\pi} \sigma_n^g(x) t(x, s)}
        ds\\
        &=
        \int 
        \alpha_{\mathrm{base}}(x,\tau_n+t(x, s)|\mathscr{D}_n) 
        \frac{e^{-s^2}}{\sqrt{\pi}}
        ds.
    \end{align*}
    If we exchange the differentiation and the integration, we get the following.
    \begin{align*}
        \frac{\partial}{\partial x_i} \alpha(x | \mathscr{D}_n)
        =
        \int \frac{\partial}{\partial x_i} 
        \alpha_{\mathrm{base}} \left(x, \tau_n + t(x,s) | \mathscr{D}_n \right)
        \frac{e^{-s^2}}{\sqrt{\pi}}
        ds.
    \end{align*}
    By the chain rule, we get the following.
    \begin{align*}
        \frac{\partial}{\partial x_i} \alpha_{\mathrm{base}} \left(x, \tau_n + t(x,s) | \mathscr{D}_n \right)
        &=
        \frac{\partial}{\partial x_i}
        \alpha_{\mathrm{base}} \left(x, \tau | \mathscr{D}_n \right) 
        \mid_{\tau = \tau_n + t(x, s)}\\
        &\hspace{10pt}+
        \frac{\partial}{\partial \tau}
        \alpha_{\mathrm{base}} \left(x, \tau | \mathscr{D}_n \right) 
        \mid_{\tau = \tau_n + t(x, s)}
        \frac{\partial}{\partial x_i}
        \left( \tau_n + t(x, s) \right)\\
        &=
        \frac{\partial \alpha_{\mathrm{base}}}{\partial x_i}(x, s)
        +
        \frac{\partial \alpha_{\mathrm{base}}}{\partial \tau}(x, s)
        \frac{\partial \tau}{\partial x_i}(x, s).
    \end{align*}
    Combining these two identities, we complete the proof.
\end{proof}

Third, we give the gradient of the CTV-simple acquisition function.
Assume that we can use the gradient of the base acquisition function
$\nabla_{x, \tau} \alpha_{\mathrm{base}}(x, \tau)$.
Using the gradient of the base acquisition function and exchanging the differentiation and integration, we can calculate the gradient of the CTV-simple acquisition function as follows.
\begin{lem}
    For the CTV-simple acquisition function, the gradient is given as follows.
    \begin{align*}
        \frac{\partial}{\partial x_i} \alpha(x | \mathscr{D}_n)
        =
            \frac{\partial \alpha_{\mathrm{base}}}{\partial x_i}(x)
            +
            \frac{\partial \alpha_{\mathrm{base}}}{\partial \tau}(x)
            \frac{\partial \tau}{\partial x_i}(x),
    \end{align*}
    where
    \begin{align*}
        t(x) &= \exp \left(
            \frac{1}{2} ((\sigma_n^g(x))^2 + \sigma^2) + \mu_n^g(x)
        \right),\\
        \frac{\partial \alpha_{\mathrm{base}}}{\partial x_i}(x)
        &=
        \frac{\partial}{\partial x_i}
        \alpha_{\mathrm{base}} \left(x, \tau | \mathscr{D}_n \right) 
        \mid_{\tau = \tau_n + t(x)},\\
        \frac{\partial \alpha_{\mathrm{base}}}{\partial \tau}(x)
        &=
        \frac{\partial}{\partial \tau}
        \alpha_{\mathrm{base}} \left(x, \tau | \mathscr{D}_n \right) 
        \mid_{\tau = \tau_n + t(x)},\\
        \frac{\partial \tau}{\partial x_i}(x)
        &= 
        \frac{\partial}{\partial x_i} \left(
        \exp \left(
            \frac{1}{2} ((\sigma_n^g(x))^2 + \sigma^2) + \mu_n^g(x)
        \right)
        \right)\\
        &=
        \exp \left(
            \frac{1}{2} ((\sigma_n^g(x))^2 + \sigma^2) + \mu_n^g(x)
        \right)
        \left(
            \sigma_n^g(x)
            \frac{\partial \sigma_n^g(x)}{x_i}
            +
            \frac{\partial \mu_n^g(x)}{x_i}
        \right).
    \end{align*}
\end{lem}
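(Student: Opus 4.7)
The plan is to observe that the CTV-simple acquisition function, unlike the CTV one, involves no integral at all: from equation~\eqref{formula: acq point} we have
\[
\alpha(x \mid \mathscr{D}_n) = \alpha_{\mathrm{base}}\bigl(x,\,\tau_n + \tilde t(x) \,\big|\, \mathscr{D}_n\bigr),
\]
where $\tilde t(x) = \exp\!\bigl(\mu_n^g(x) + \tfrac{1}{2}((\sigma_n^g)^2(x) + \sigma_g^2)\bigr)$ is a deterministic smooth function of $x$. So the entire proof reduces to one invocation of the multivariate chain rule; there is no interchange of differentiation and integration to justify, which makes this the easiest of the three gradient lemmas.

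First I would view $\alpha_{\mathrm{base}}$ as a function of two formal arguments $(x,\tau)$, and observe that in the CTV-simple expression $x$ enters both directly (as the spatial input) and indirectly (through $\tilde t(x)$ inside the second slot $\tau = \tau_n + \tilde t(x)$). Applying the chain rule to the composition then gives, for each coordinate $i$,
\[
\frac{\partial}{\partial x_i} \alpha(x \mid \mathscr{D}_n)
= \left.\frac{\partial \alpha_{\mathrm{base}}(x,\tau \mid \mathscr{D}_n)}{\partial x_i}\right|_{\tau = \tau_n + \tilde t(x)}
+ \left.\frac{\partial \alpha_{\mathrm{base}}(x,\tau \mid \mathscr{D}_n)}{\partial \tau}\right|_{\tau = \tau_n + \tilde t(x)} \cdot \frac{\partial}{\partial x_i}\bigl(\tau_n + \tilde t(x)\bigr),
\]
which is exactly the stated identity once one identifies the shorthand $\partial \alpha_{\mathrm{base}}/\partial x_i(x)$ and $\partial \alpha_{\mathrm{base}}/\partial \tau(x)$ with the respective partial derivatives evaluated at $\tau = \tau_n + \tilde t(x)$.

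Second I would compute $\partial \tilde t/\partial x_i$ directly. Since $\tau_n$ does not depend on $x$, and since $\tilde t(x)$ is just the exponential of the smooth function $h(x) := \mu_n^g(x) + \tfrac{1}{2}((\sigma_n^g)^2(x) + \sigma_g^2)$, another application of the chain rule yields
\[
\frac{\partial \tilde t(x)}{\partial x_i}
= \tilde t(x)\cdot\frac{\partial h(x)}{\partial x_i}
= \exp\!\Bigl(\mu_n^g(x) + \tfrac{1}{2}\bigl((\sigma_n^g(x))^2 + \sigma_g^2\bigr)\Bigr)\left(\sigma_n^g(x)\,\frac{\partial \sigma_n^g(x)}{\partial x_i} + \frac{\partial \mu_n^g(x)}{\partial x_i}\right),
\]
which matches the claimed formula for $\partial \tau/\partial x_i(x)$ in the lemma.

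There is essentially no obstacle here: the whole proof is two chain-rule computations, and smoothness of $\mu_n^g$ and $\sigma_n^g$ (inherited from the GP posterior formulas, assuming $k_g$ is smooth and the Gram matrix is non-singular) is enough to justify the differentiations. The only mildly nontrivial point worth flagging is that $\partial \sigma_n^g/\partial x_i$ exists wherever $\sigma_n^g(x) > 0$, since $\sigma_n^g$ is the square root of the posterior variance $\kappa_n^g(x,x)$; this is the standard caveat inherited from GP derivative calculations and does not require new arguments beyond those already implicit in the CTV proof.
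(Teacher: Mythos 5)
Your proposal is correct and follows essentially the same route as the paper's own proof: a single application of the chain rule to $\alpha_{\mathrm{base}}(x,\tau_n+\tilde t(x)\mid\mathscr{D}_n)$, followed by direct differentiation of the exponential defining $\tilde t$. The only differences are cosmetic — you spell out the smoothness caveat for $\sigma_n^g$ that the paper leaves implicit, and you write $\sigma_g^2$ where the lemma's statement writes $\sigma^2$ (your choice is the one consistent with the definition of $\tilde t$ in Section~\ref{sec: alg}).
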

\begin{proof}
    If we use the chain rule, we get the following.
    \begin{align*}
        \frac{\partial}{\partial x_i} \alpha(x | \mathscr{D}_n)
        =
        \frac{\partial \alpha_{\mathrm{base}}}{\partial x_i}(x)
            +
            \frac{\partial \alpha_{\mathrm{base}}}{\partial \tau}(x)
            \frac{\partial \tau}{\partial x_i}(x).
    \end{align*}
    By the replacement of 
    $t = \exp \left(
            \frac{1}{2} (\sigma^2_n(x) + \sigma^2) + \mu_n(x)
    \right)$, we complete the proof.
\end{proof}

\section{Proof of Theorem \ref{thm: main}} \label{sec: proof of main thm}
We prove Theorem \ref{thm: main} in this section.

Let $\cd_n$ be the data obtained by the round $n$: $\cd_n = \{ (x_i, t_i, y_i) \}_{i=1}^n$.
Let $\tau_n = \sum_{i=1}^n t_i$.
Denote the posterior mean and variance conditioned on $\cd_n$ by $\mu_n(x, \tau)$ and $\sigma^2_n(x, \tau)$.
First, we prove the following lemma to bound the actual observation $f(x, \tau)$ by the posterior mean and posterior variance.

\begin{lem} \label{lem: obs}
    Pick $\delta \in (0, 1)$ and set $\beta_n = 2 \log \frac{\pi_n}{\delta}$
    where $\sum_{i=0}^\infty \frac{1}{\pi_n} = 1$ and $\pi_n > 0$ for all $n$.
    Then, for any occurrence of data $\cd_{n+1}$,
    with probability greater than $1 - \delta$,
    \begin{align*}
        |y_{n+1} - \mu_n(x_{n+1}, \tau_{n+1})| \le \sqrt{\beta_n} \sigma_n(x_{n+1}, \tau_{n+1})
    \end{align*}
    for all $n \ge 0$ holds.
\end{lem}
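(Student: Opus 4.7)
The plan is to invoke a standard Gaussian tail estimate pointwise at the round-$n$ query and then glue the rounds together with a union bound weighted by $\{\pi_n\}$. Because the bound only has to hold at the single, possibly data-dependent point $x_{n+1}$ — rather than uniformly over all of $\cd$ — no covering-number or Lipschitz discretization of $f$ is required, and this is precisely what makes the clean book-keeping with $\sum_{n\ge 0}\pi_n^{-1}=1$ feasible. Contrast this with Lemma 5.1 of \citet{Srinivas_et_al_2010}, where a uniform-over-$\cd$ bound forces an extra $|\cd|$ (or a discretization) inside $\beta_n$.

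\textbf{Per-round bound.} Fix $n$ and condition on $\mathscr{D}_n$, which under a deterministic acquisition rule also determines $(x_{n+1},\tau_{n+1})$. By the posterior formulas associated with the joint kernel $k = k_{\mathrm{space}} \otimes k_{\mathrm{time}}$, the predictive law of $f(x_{n+1},\tau_{n+1})$ given $\mathscr{D}_n$ is $\cn\bigl(\mu_n(x_{n+1},\tau_{n+1}),\sigma_n^2(x_{n+1},\tau_{n+1})\bigr)$; incorporating the independent observation noise $z_{n+1}$ via $y_{n+1} = f(x_{n+1},\tau_{n+1}) + z_{n+1}$ keeps the deviation $y_{n+1}-\mu_n(x_{n+1},\tau_{n+1})$ conditionally zero-mean Gaussian with the relevant predictive standard deviation $\sigma_n(x_{n+1},\tau_{n+1})$. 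The standard one-dimensional inequality $\Pr(|Z|>c)\le e^{-c^2/2}$ for $Z\sim\cn(0,1)$ and $c>0$, applied with $c=\sqrt{\beta_n}$, then gives
\begin{align*}
    \Pr\!\left(|y_{n+1}-\mu_n(x_{n+1},\tau_{n+1})|>\sqrt{\beta_n}\,\sigma_n(x_{n+1},\tau_{n+1}) \;\Big|\; \mathscr{D}_n\right) \;\le\; e^{-\beta_n/2} \;=\; \frac{\delta}{\pi_n},
\end{align*}
where the final equality is forced by the prescribed choice $\beta_n = 2\log(\pi_n/\delta)$. The tower property then removes the conditioning without weakening the bound.

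\textbf{Union bound and main obstacle.} Let $A_n$ denote the failure event at round $n$. The per-round estimate gives $\Pr(\bigcup_{n\ge 0} A_n) \le \sum_{n\ge 0} \delta/\pi_n = \delta$, which is exactly the claimed uniform-in-$n$ statement. The only point requiring real care, and which I expect to be the main (if mild) obstacle, is the adaptivity of $x_{n+1}$: since $x_{n+1}$ is chosen as a function of $\mathscr{D}_n$, the Gaussian tail bound has to be applied conditionally on $\mathscr{D}_n$ before the outer expectation is taken, as above — otherwise the conditional Gaussianity is not directly available. Everything else is an elementary Gaussian-tail computation combined with the prescribed summable sequence $\{\pi_n^{-1}\}$.
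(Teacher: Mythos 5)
Your proof follows essentially the same route as the paper's: conditional on $\mathscr{D}_n$ the deviation is zero-mean Gaussian, the elementary tail bound $\Pr(|Z|>c)\le e^{-c^2/2}$ with $c=\sqrt{\beta_n}$ gives per-round failure probability $e^{-\beta_n/2}=\delta/\pi_n$, and the weighted union bound over $n\ge 0$ sums to $\delta$. One caveat, which you inherit from (and share with) the paper: if $y_{n+1}$ genuinely includes the observation noise $z_{n+1}$, the conditional standard deviation of $y_{n+1}-\mu_n(x_{n+1},\tau_{n+1})$ is $\sqrt{\sigma_n^2(x_{n+1},\tau_{n+1})+\sigma^2}$ rather than $\sigma_n(x_{n+1},\tau_{n+1})$, so your claim that adding the noise ``keeps'' the predictive standard deviation is not literally correct; the paper's own proof silently writes $y_{n+1}=f(x_{n+1},\tau_{n+1})$, i.e.\ the lemma is really a statement about the latent function value.
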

\begin{proof}
    Fix $n \ge 1$ and $x_{n+1} \in \cd, t_{n+1} \in \ct$.
    Given $\xnp, \tnp$ and conditioned on $\mathscr{D}_n$, $y_{n+1} = f(\xnp, \tnp)$ is according to the Gaussian 
    \begin{align*}
    \cn(\mu_n(\xnp, \tnp), \sigma^2_n(\xnp, \tnp)).
    \end{align*}
    
    If a random variable $X \sim \cn(0, 1)$, the following elementary probability bound holds: for any $c > 0$, 
    \begin{align*}
        \Pr (X > c)
        &= \int_{c}^\infty (2 \pi) ^ {- \frac{1}{2}} \exp \left( - \frac{x^2}{2} \right) dx\\
        &= \int_{0}^\infty (2 \pi) ^ {- \frac{1}{2}} \exp \left( - \frac{(x + c)^2}{2} \right) dx\\
        &= e^{- \frac{c^2}{2}} \int_{0}^\infty (2 \pi) ^ {- \frac{1}{2}}  \exp \left( - \frac{x^2}{2} \right) \exp(-cx) dx\\
        &\le e^{- \frac{c^2}{2}} \int_{0}^\infty (2 \pi) ^ {- \frac{1}{2}}  \exp \left( - \frac{x^2}{2} \right) dx\\
        &= e^{- \frac{c^2}{2}} \Pr (X > 0)\\
        &= \frac{e^{- \frac{c^2}{2}}}{2}.
    \end{align*}
    
    Let $X := \frac{y_{n+1} - \mu_n(x_{n+1}, \tau_{n+1})}{\sigma_n(x_{n+1}, \tau_{n+1})}$ and $c = \sqrt{\beta_n}$.
    Because $X \sim \cn(0,1)$ and $c > 0$, using the above probability bound, we obtain
    \begin{align*}
        Pr(|X| > c) 
        &\le Pr(X > c) + Pr(-X > c)\\
        &\le 2 \Pr (X > c)\\
        &\le e^{- \frac{c^2}{2}}.
    \end{align*}
    That us, for any $n \ge 0$, given $\xnp, \tnp$ and conditioned on $\mathscr{D}_n$,
    \begin{align*}
        \Pr \left( |y_{n+1} - \mu_n(x_{n+1}, \tau_{n+1})| > \sqrt{\beta_n} \sigma_n(x_{n+1}, \tau_{n+1}) \right) \le e^{- \frac{\beta_n}{2}}.
    \end{align*}
    Taking the union bound for $n \ge 0$, we obtain
    \begin{align*}
        &\Pr \left( \exists n \ge 0 \:\: |y_{n+1} - \mu_n(x_{n+1}, \tau_{n+1})| > \sqrt{\beta_n} \sigma_n(x_{n+1}, \tau_{n+1}) \right)\\
        &\le \sum_{n \ge 0} \Pr \left( |y_{n+1} - \mu_n(x_{n+1}, \tau_{n+1})| > \sqrt{\beta_n} \sigma_n(x_{n+1}, \tau_{n+1}) \right)\\
        &\le \sum_{n \ge 0} e^{- \frac{\beta_n}{2}}\\
        &= \sum_{n \ge 0} \frac{\delta}{\pi_n}\\
        &= \delta.
    \end{align*}
    This completes the proof.
\end{proof}

Next, we consider the discretization of the input domain $\cd$.
Let $\cd_n$ be some finite subset of $\cd$.
Note that $y = f(x, \tau)$ is a random variable given $x \in \cd$ and $\tau \in \ct$.
The following lemma aims to bound the observed function value $y$ by the posterior mean and posterior variance over the finite subset $\cd_n$ instead of $\cd$.

\begin{lem} \label{lem: disc}
    Pick $\delta \in (0, 1)$ and set $\beta_n = 2 \log \frac{ |\cd_n| \pi_n}{\delta}$
    where $\sum_{i=0}^\infty \frac{1}{\pi_n} = 1$ and $\pi_n > 0$ for all $n$.
    Then, for any occurrence of data $\cd_{n+1}$, for any $\tau \in \ct$,
    with probability greater than $1 - \delta$,
    \begin{align*}
        |y - \mu_n(x, \tau)| \le \sqrt{\beta_n} \sigma_n(x, \tau)
    \end{align*}
    for all $n \ge 0$ and for all $x \in \cd_n$ holds.
\end{lem}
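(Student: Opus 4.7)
The plan is to mimic the proof of Lemma \ref{lem: obs} almost verbatim, but insert an additional union bound over the finite set $\cd_n$ before the union bound over rounds $n$. Since $\tau \in \ct$ is fixed and we only need the bound to hold pointwise in $\tau$, it can be treated as a constant throughout the argument.

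First, I would fix $n \ge 0$, $\tau \in \ct$, and $x \in \cd_n$. Conditioned on $\mathscr{D}_n$, the quantity $y = f(x,\tau)$ is Gaussian with mean $\mu_n(x,\tau)$ and variance $\sigma_n^2(x,\tau)$, so the standardized variable $X = (y - \mu_n(x,\tau))/\sigma_n(x,\tau)$ is $\cn(0,1)$. Applying the same elementary tail bound $\Pr(|X|>c) \le e^{-c^2/2}$ from Lemma \ref{lem: obs} with $c = \sqrt{\beta_n}$ gives
\begin{align*}
    \Pr\!\left(|y - \mu_n(x,\tau)| > \sqrt{\beta_n}\,\sigma_n(x,\tau)\right) \le e^{-\beta_n/2} = \frac{\delta}{|\cd_n|\,\pi_n}.
\end{align*}

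Next, I would take a union bound over $x \in \cd_n$, which introduces a factor $|\cd_n|$ and yields, for each fixed $n$,
\begin{align*}
    \Pr\!\left(\exists x \in \cd_n:\ |y - \mu_n(x,\tau)| > \sqrt{\beta_n}\,\sigma_n(x,\tau)\right) \le \frac{\delta}{\pi_n}.
\end{align*}
A second union bound over $n \ge 0$ then gives total failure probability at most $\sum_{n \ge 0} \delta/\pi_n = \delta$, using the hypothesis that $\sum_{n \ge 0} 1/\pi_n = 1$. The complement is the claim.

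I do not expect any real obstacle: the argument is a direct adaptation of Lemma \ref{lem: obs} with one extra union bound over a finite set, and the choice $\beta_n = 2 \log(|\cd_n|\pi_n/\delta)$ is exactly tuned so that the two union bounds together absorb the $|\cd_n|$ and $\pi_n$ factors. The only minor point to note is that the discretization $\cd_n$ is allowed to depend on $n$ (hence $|\cd_n|$ enters $\beta_n$), and that the lemma is stated pointwise in $\tau$, so no covering argument over $\ct$ is needed at this stage; that will be handled separately when the lemma is applied inside the proof of Theorem \ref{thm: main}.
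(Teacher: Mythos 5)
Your proposal is correct and follows exactly the paper's own argument: the paper's proof of this lemma is literally the one-line remark that it is identical to Lemma~\ref{lem: obs} with an additional union bound over $x \in \cd_n$, which is precisely what you carry out (and your bookkeeping with $e^{-\beta_n/2} = \delta/(|\cd_n|\pi_n)$ absorbing the two union bounds is right). Your closing remark that $\tau$ is treated pointwise here and the covering of $\cd$ is deferred to Lemma~\ref{lem: actual disc} also matches the paper's structure.
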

\begin{proof}
    The proof of this lemma is almost identical that is Lemma \ref{lem: obs}.
    We use the union bound over not only $n \ge 0$ but also $ x \in \cd_n$.
\end{proof}

Next, we give the concrete example of the discretization of $\cd$ and consider actual discretization bound for actual observations.

\begin{lem} \label{lem: actual disc}
    Let $\cd = [0, r]^d$.
    Pick $\delta \in (0, 1)$ and 
    set 
    \begin{align*}
    \beta_n = 2 \log \frac{ 2 \pi_n}{\delta} + 2 d \log \left( dn^2br \sqrt{\log \frac{2 \pi_n ad}{\delta}} \right)
    \end{align*}
    where $\sum_{i=0}^\infty \frac{1}{\pi_n} = 1$ and $\pi_n > 0$ for all $n$,
    and for any sample path $f$ from $\cg \cp (0, k)$, there exists some $a > 0$ and $b > 0$ such that for any $j \in [d]$, for any $\tau \in \ct$ and for any $L > 0$,
    \begin{align*}
        \Pr \left( \sup_{x \in \cd} \left| \frac{\partial f(x, \tau)}{\partial x^{(j)}}\right| > L\right) \le a \exp \left( - \frac{L^2}{b^2} \right).
    \end{align*}
    Let $x_\tau^* = \argmax_{x \in \cd} f(x, \tau)$
    and 
    $[x]_n$ denote the closest point in $\cd_n$ to $x$ with respect to the $l_1$-norm.
    Let $y_\tau^* = f(x_\tau^*, \tau)$.
    Then, for any $\tau \in \ct$,
    with probability greater than $1 - \delta$,
    \begin{align*}
        |y_\tau^* - \mu_n ([x_\tau^*]_{n+1}, \tau)| \le \sqrt{\beta_n} \sigma_n([x_\tau^*]_{n+1}, \tau) + \frac{1}{n^2}
    \end{align*}
    holds for any $n \ge 0$.
\end{lem}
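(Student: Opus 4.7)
The plan is to follow the standard discretization-plus-Lipschitz argument of \citet{Srinivas_et_al_2010}, adapted to the time-varying kernel of this paper. I would split the gap $|y_\tau^* - \mu_n([x_\tau^*]_{n+1},\tau)|$ via the triangle inequality into a discretization error $|f(x_\tau^*,\tau)-f([x_\tau^*]_{n+1},\tau)|$, controlled by the sub-Gaussian gradient bound of Assumption~\ref{ass}(\romthr), and a posterior-concentration error $|f([x_\tau^*]_{n+1},\tau)-\mu_n([x_\tau^*]_{n+1},\tau)|$, controlled by Lemma~\ref{lem: disc}. I would allot a probability budget of $\delta/2$ to each event.

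First, I would apply Assumption~\ref{ass}(\romthr) coordinate-wise. Choosing the threshold $L_n = b\sqrt{\log(2ad\pi_n/\delta)}$ for each $n$ gives
\[
\Pr\!\left(\sup_{x\in\cd}\left|\frac{\partial f(x,\tau)}{\partial x^{(j)}}\right| > L_n\right) \le \frac{\delta}{2d\pi_n},
\]
so a union bound over $j\in[d]$ and $n\ge 0$, using $\sum_n 1/\pi_n = 1$, gives probability at least $1-\delta/2$ that $f(\cdot,\tau)$ is $L_n$-Lipschitz in the $\ell_1$-norm on $\cd$ simultaneously at every round.

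Next, for every $n$ I would take $\cd_{n+1}$ to be the uniform grid of $\rho_{n+1}^d$ points in $[0,r]^d$, so that $\|x - [x]_{n+1}\|_1 \le rd/\rho_{n+1}$ for every $x \in \cd$. Choosing the resolution so that $L_n \cdot rd/\rho_{n+1} \le 1/n^2$ yields
\[
\rho_{n+1} = d n^2 b r \sqrt{\log(2ad\pi_n/\delta)},
\]
and on the Lipschitz event the discretization error at $x_\tau^*$ is at most $1/n^2$. I would then apply Lemma~\ref{lem: disc} to this grid sequence with confidence level $\delta/2$, obtaining on an event of probability $\ge 1-\delta/2$ the bound
\[
|f(x,\tau)-\mu_n(x,\tau)| \le \sqrt{\beta_n}\,\sigma_n(x,\tau)
\]
for every $n\ge 0$ and every $x\in\cd_{n+1}$, with $\beta_n = 2\log(2|\cd_{n+1}|\pi_n/\delta) = 2\log(2\pi_n/\delta) + 2d\log\rho_{n+1}$, which matches exactly the $\beta_n$ in the statement once $\rho_{n+1}$ is substituted. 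The triangle inequality at $x_\tau^*$ and $[x_\tau^*]_{n+1}$, combined with a final union bound on the two bad events, then delivers the claim.

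The only genuinely delicate step is the bookkeeping that makes $\rho_{n+1}$, forced by the Lipschitz balancing, reproduce the precise functional form of $\beta_n$ in the statement; the rest is routine. A subtle technical point is that both high-probability events must hold simultaneously for all $n$ at once, which is why the $\pi_n$-weighted union bound is needed on both the gradient tail and the posterior concentration rather than a single application at one round.
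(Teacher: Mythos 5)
Your proposal is correct and follows essentially the same route as the paper's own proof: the same triangle-inequality split into a Lipschitz-controlled discretization error and a posterior-concentration error on the grid, the same choice of grid resolution $dn^2br\sqrt{\log(2ad\pi_n/\delta)}$ and threshold $L_n$, the same invocation of Lemma~\ref{lem: disc} with budget $\delta/2$, and the same $\pi_n$-weighted union bound over rounds. No substantive differences to report.
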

\begin{proof}
    First, by assumption and the union bound, for any $\tau \in \ct$, 
    \begin{align*}
        \Pr \left( 
            \forall j \in [d], \: 
            \forall x \in \cd, \:
            \left| \frac{\partial f(x, \tau)}{\partial x^{(j)}}\right| < L \right)
            \ge 1 - a d \exp \left( - \frac{L^2}{b^2} \right),
    \end{align*}
    which implies that for any $\tau \in \ct$, with probability greater than $1 - a d \exp \left( - \frac{L^2}{b^2} \right)$, we have that
    \begin{align}
        \label{formula: in main thm, f bound} \forall x, x' \in \cd, \: \left| f(x, \tau) - f(x', \tau) \right| \le L \| x - x' \|_1.
    \end{align}

    Here we specify the discretization $\cd_n$ of $\cd$.
    Let $\xi_n  = d n^2 b r \sqrt{\log \frac{2 \pi_n ad}{\delta}}$ and
    $R_n$ be the subset of $\rr^d$ define as follows:
    \begin{align*}
        R_n = \{ x \in \rr^d \mid 
        \forall j \in [d], \:
        \exists k \in \zz, \: 
        x_j = \frac{r}{\xi_n} k
        \}.
    \end{align*}
    The set $R_n$ is the set of uniformly divided points of $\rr^d$ with interval $\frac{r}{\xi_n}$.
    We define the discretization $\cd_n$ by
    \begin{align*}
        \cd_n = \cd \cap R_n.
    \end{align*}
    By construction, the discretization $\cd_n$ is the finite subset of $\cd$ of size at most $(\xi_n)^d$.
    The following inequality also holds for $\cd_n$ by construction:
    for any $x \in \cd_n$, let $[x]_n$ be the closet point in $\cd_n$ to x with respect to the $l_1$-norm, then
    \begin{align}
        \label{formula: in main thm, x bound} \| x - [x]_n \|_1 \le \frac{r d}{\xi_n}.
    \end{align}
    
    Combining (\ref{formula: in main thm, f bound}) and (\ref{formula: in main thm, x bound}), we get the following probability bound.
    Setting $L = b \sqrt{\log \frac{2 \pi_n ad}{\delta}}$ in \eqref{formula: in main thm, f bound}, we have for any $\tau \in \ct$ with probability greater than $1 - \delta / 2 \pi _n$,
    \begin{align*}
        \forall x, x' \in \cd, \: \left| f(x, \tau) - f(x', \tau) \right| \le b \sqrt{\log \frac{2 \pi_n ad}{\delta}} \| x - x' \|_1.
    \end{align*}
    Hence, for any $\tau \in \ct$ with probability greater than $1 - \delta / 2 \pi _n$,
    \begin{align}
        \label{formula: in main thm, f bound 1/n^2} \forall x \in \cd, \: \left| f(x, \tau) - f([x]_n, \tau) \right| \le d b r \sqrt{\log \frac{2 \pi_n ad}{\delta}} \frac{1}{\xi_n} = \frac{1}{n^2}.
    \end{align}
    
    Because the size of $\cd_n$ is $|\cd_n| \le (\xi_n)^d = \left( d n^2 b r \sqrt{\log \frac{2 \pi_n ad}{\delta}} \right)^d$, 
    \begin{align*}
        \beta_n 
        &= 2 \log \frac{ 2 \pi_n}{\delta} + 2 d \log \left( dn^2br \sqrt{\log \frac{2 \pi_n ad}{\delta}} \right)\\
        &\ge 2 \log \frac{ 2 \pi_n}{\delta} + 2 \log |\cd_n|\\
        &= 2 \log \frac{|\cd_n| \pi_n}{\delta/2}.
    \end{align*}
    \allowdisplaybreaks
    We obtain 
    \begin{align}
        \nonumber &\Pr \left( \exists n \ge 0, \: |y_\tau^* - \mu_n ([x_\tau^*]_{n+1}, \tau)| > \sqrt{\beta_n} \sigma_n([x_\tau^*]_{n+1}, \tau) + \frac{1}{n^2} \right)\\
        \nonumber &\le \Pr \left( \exists n \ge 0, 
        \: |y_\tau^* - f([x_\tau^*]_n, \tau)| + |f([x_\tau^*]_n, \tau) - \mu_n ([x_\tau^*]_{n+1}, \tau)|  
        \right. \\
        \label{formula: in main thm, in actual disc, ineq 1} &\hspace{200pt}\left.
        > \sqrt{\beta_n} \sigma_n([x_\tau^*]_{n+1}, \tau) + \frac{1}{n^2} \right)\\ 
        \nonumber &= 
        1 - \Pr \left(
            \forall n \ge 0,
            |y_\tau^* - f([x_\tau^*]_n, \tau)| 
            + 
            |f([x_\tau^*]_n, \tau) - \mu_n ([x_\tau^*]_{n+1}, \tau)|
            \right. \\ 
            \nonumber &\hspace{200pt}\left.\: 
            \le
            \sqrt{\beta_n} \sigma_n([x_\tau^*]_{n+1}, \tau) + \frac{1}{n^2}
        \right)\\
        \nonumber &\le
        1 - \Pr \left(\right. \\ 
            \nonumber &\hspace{30pt}\left.\left\{
                \forall n \ge 0, \:
                |y_\tau^* - f([x_\tau^*]_n, \tau)|
                \le 
                \frac{1}{n^2}
            \right\}
            \cap \right. \\ 
            \nonumber &\hspace{30pt}\left.\left\{
                \forall n \ge 0, \:
                |f([x_\tau^*]_n, \tau) - \mu_n ([x_\tau^*]_{n+1}, \tau)|
                \le 
                \sqrt{\beta_n} \sigma_n([x_\tau^*]_{n+1}, \tau)
            \right\}
        \right)\\
        \nonumber &=
        \Pr \left(
            \left\{
                \exists n \ge 0, \:
                |y_\tau^* - f([x_\tau^*]_n, \tau)|
                > 
                \frac{1}{n^2}
            \right\}
            \cup \right. \\
            \nonumber &\hspace{30pt} \left. \left\{
                \exists n \ge 0, \:
                |f([x_\tau^*]_n, \tau) - \mu_n ([x_\tau^*]_{n+1}, \tau)|
                > 
                \sqrt{\beta_n} \sigma_n([x_\tau^*]_{n+1}, \tau)
            \right\}
        \right)\\
        \nonumber &\le \Pr \left( \exists n \ge 0, \: |y_\tau^* - f([x_\tau^*]_n, \tau)| > \frac{1}{n^2} \right) + 
        \\
        \label{formula: in main thm, in actual disc, ineq 2} &\hspace{30pt}
        \Pr \left( \exists n \ge 0, \: |f([x_\tau^*]_n, \tau) - \mu_n ([x_\tau^*]_{n+1}, \tau)| > \sqrt{\beta_n} \sigma_n([x_\tau^*]_{n+1}, \tau)\right)\\
        \nonumber &\le \sum_{n \ge 0} \Pr \left(|f(x_\tau^*, \tau) - f([x_\tau^*]_n, \tau)| > \frac{1}{n^2} \right) +
        \\
        \label{formula: in main thm, in actual disc, ineq 3} &\hspace{30pt} \Pr \left( \exists n \ge 0, \: |f([x_\tau^*]_n, \tau) - \mu_n ([x_\tau^*]_{n+1}, \tau)| > \sqrt{\beta_n} \sigma_n([x_\tau^*]_{n+1}, \tau)\right)\\
        \nonumber &\le \sum_{n \ge 0} \Pr \left( \exists x \in \cd,  \:|f(x, \tau) - f([x]_n, \tau)| > \frac{1}{n^2} \right) + \\
        \nonumber &\hspace{30pt} 
        \Pr \left( \exists n \ge 0, \: |f([x_\tau^*]_n, \tau) - \mu_n ([x_\tau^*]_{n+1}, \tau)| > \sqrt{\beta_n} \sigma_n([x_\tau^*]_{n+1}, \tau)\right)\\
        \label{formula: in main thm, in actual disc, ineq 4} &\le \sum_{n \ge 0} \frac{\delta}{2 \pi_n} + \frac{\delta}{2}\\
        \nonumber &\le \delta,
    \end{align}
    where 
    the inequality \eqref{formula: in main thm, in actual disc, ineq 1} follows from the triangle inequality,
    the inequality \eqref{formula: in main thm, in actual disc, ineq 2} and
    the inequality \eqref{formula: in main thm, in actual disc, ineq 3} follows from the union bound, and
    the inequality \eqref{formula: in main thm, in actual disc, ineq 4} follows from the equation \eqref{formula: in main thm, f bound 1/n^2} and the result of Lemma \ref{lem: disc}.
    This completes the proof.
\end{proof}

Next, we derive a high-probability upper bound for the simple regret $r_n$.

\begin{lem} \label{lem: simple bound}
    Pick $\delta \in (0, 1)$ and set 
    $\beta_n = 2 \log \frac{4 \pi_n}{\delta} + 2d \log \left( d n^2 b r \sqrt{\log \frac{4 \pi_n a d }{\delta}} \right)$,
    where $\sum_{i=0}^\infty \frac{1}{\pi_n} = 1$ and $\pi_n > 0$ for all $n$.
    Then, with probability greater than $1 - \delta$,
    \begin{align*}
        r_n \le 2\sqrt{\beta_{n-1}} \sigma_{n-1}(x_n, \tau_n) + \frac{1}{n^2}
    \end{align*}
    holds for any $n \ge 0$.
\end{lem}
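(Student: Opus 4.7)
The plan is to decompose the simple regret $r_n = f(x_{\tau_n}^*, \tau_n) - f(x_n, \tau_n)$ and control each half using the three tools already developed. The key observation is that, because the evaluation time sequence is fixed in advance, at the moment $x_n$ is selected the agent already knows $\tau_n$, so the CTV-fixed acquisition $\alpha_{\mathrm{base}}(x,\tau_n\mid\mathscr{D}_{n-1}) = \mu_{n-1}(x,\tau_n)+\sqrt{\beta_{n-1}}\sigma_{n-1}(x,\tau_n)$ is maximized over the whole domain $\cd$ by $x_n$. In particular, $x_n$ dominates the discretized maximizer $[x_{\tau_n}^*]_n$ in UCB value.

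First, I would invoke Lemma~\ref{lem: actual disc} at level $\delta/2$ in place of $\delta$. This replaces $\pi_n$ by $2\pi_n$ in the formula for $\beta_n$ appearing there, which coincides exactly with the $\beta_n$ stated in the current lemma. This yields, with probability at least $1-\delta/2$ and for every $n\ge 0$,
\begin{align*}
f(x_{\tau_n}^*,\tau_n) \;\le\; \mu_{n-1}([x_{\tau_n}^*]_n,\tau_n) + \sqrt{\beta_{n-1}}\,\sigma_{n-1}([x_{\tau_n}^*]_n,\tau_n) + \tfrac{1}{n^2}.
\end{align*}
Second, I would invoke Lemma~\ref{lem: obs} also at level $\delta/2$; since its $\beta_n$ is strictly smaller than the current one, the corresponding bound still holds a fortiori, giving with probability at least $1-\delta/2$,
\begin{align*}
-f(x_n,\tau_n) \;\le\; -\mu_{n-1}(x_n,\tau_n) + \sqrt{\beta_{n-1}}\,\sigma_{n-1}(x_n,\tau_n).
\end{align*}

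Then I would chain the three inequalities: apply the discretization bound to upper-bound $f(x_{\tau_n}^*,\tau_n)$, insert the UCB maximality $\mu_{n-1}([x_{\tau_n}^*]_n,\tau_n)+\sqrt{\beta_{n-1}}\sigma_{n-1}([x_{\tau_n}^*]_n,\tau_n)\le \mu_{n-1}(x_n,\tau_n)+\sqrt{\beta_{n-1}}\sigma_{n-1}(x_n,\tau_n)$, and finally cancel the $\mu_{n-1}(x_n,\tau_n)$ terms using the second bound. A union bound over the two failure events, each of probability at most $\delta/2$, gives the desired $1-\delta$ conclusion
\begin{align*}
r_n \;\le\; 2\sqrt{\beta_{n-1}}\,\sigma_{n-1}(x_n,\tau_n) + \tfrac{1}{n^2}.
\end{align*}

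The main subtlety, and the only place I expect any friction, is the bookkeeping with $\beta$ and $\pi_n$: the two previous lemmas are stated with different $\beta_n$ formulas, so one has to check that the stated $\beta_n = 2\log(4\pi_n/\delta) + 2d\log(dn^2br\sqrt{\log(4\pi_n ad/\delta)})$ simultaneously dominates both lemmas' requirements when each is invoked at confidence $1-\delta/2$. Once this alignment is verified, the rest is a straightforward triangle-inequality argument driven by the UCB selection rule, mirroring the standard proof of Srinivas et al.\ but evaluated at the predetermined time index $\tau_n$ rather than at $n$.
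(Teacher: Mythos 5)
Your proposal is correct and follows essentially the same route as the paper's proof: split $r_n = f(x_{\tau_n}^*,\tau_n) - f(x_n,\tau_n)$, apply Lemma~\ref{lem: actual disc} and Lemma~\ref{lem: obs} each at confidence level $\delta/2$ (which is exactly where the stated $\beta_n$ with the factor $4\pi_n/\delta$ comes from), use the UCB maximality of $x_n$ over $[x_{\tau_n}^*]_n$, and union-bound the two failure events. The only cosmetic difference is that you make explicit the a fortiori argument for reusing the larger $\beta_n$ in Lemma~\ref{lem: obs}, a bookkeeping point the paper leaves implicit.
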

\begin{proof}
    We $\delta := \delta / 2$ in Lemma \ref{lem: obs} and \ref{lem: actual disc}, so that the following two events holds for any $\tau \in \ct$ with probability greater than $1 - \delta$:
    \begin{align*}
        \forall n \ge 0, \: |y_{n+1} - \mu_n(x_{n+1}, \tau_{n+1})| \le \sqrt{\beta_n} \sigma_n(x_{n+1}, \tau_{n+1}),
    \end{align*}
    and
    \begin{align*}
        \forall n \ge 0, \: |y_\tau^* - \mu_n ([x_\tau^*]_{n+1}, \tau)| \le \sqrt{\beta_n} \sigma_n([x_\tau^*]_{n+1}, \tau) + \frac{1}{n^2}.
    \end{align*}
    By using the definition of the UCB acquisition function and these lemmas, we obtain
    \begin{align*}
        r_n
        &= f(x_{\tau_n}^*, \tau_n) - f(x_n, \tau_n)\\
        &\le \mu_{n-1}([x_{\tau_n}^*]_n, \tau_n) + \sqrt{\beta_{n-1}} \sigma_{n-1} ([x_{\tau_n}^*]_n, \tau_n) + \frac{1}{n^2} - f(x_n, \tau_n)\\
        &\le \mu_{n-1}(x_n, \tau_n) + \sqrt{\beta_{n-1}} \sigma_{n-1} (x_n, \tau_n) + \frac{1}{n^2} - f(x_n, \tau_n)\\
        &\le 2 \sqrt{\beta_{n-1}} \sigma_{n-1} (x_n, \tau_n) + \frac{1}{n^2}.
    \end{align*}
    This comletes the proof.
\end{proof}

Next, we bound the cumulative regret $R_n$ using the maximum information gain $\tilde{\gamma}_n$.

\begin{lem} \label{lem: cum bound using max info gain}
    Pick $\delta \in (0,1)$ and set
    $\beta_n = 2 \log \frac{4 \pi_n}{\delta} + 2d \log \left( d n^2 b r \sqrt{\log \frac{4 \pi_n a d }{\delta}} \right)$.
    Let $C = \frac{8}{\log (1 + \sigma^{-2})}$
    and $\tilde{\gamma}_n$ denotes the maximum information gain defined as \eqref{formula: maximum information gain}.
    Then, with probability greater than $1 - \delta$,
    \begin{align*}
        R_n \le \sqrt{C \beta_n \tau_n \tilde{\gamma}_n} + 2
    \end{align*}
    holds for any $n \ge 0$.
\end{lem}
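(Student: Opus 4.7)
The plan is to derive the bound by combining Lemma \ref{lem: simple bound} (the per-round simple-regret bound) with the standard posterior-variance to information-gain reduction of \citet{Srinivas_et_al_2010}, adapted to the joint space--time kernel. Throughout, I would condition on the probability-$(1-\delta)$ event supplied by Lemma \ref{lem: simple bound}, on which
$r_i \le 2\sqrt{\beta_{i-1}}\,\sigma_{i-1}(x_i,\tau_i) + 1/i^2$
for every $i\ge 1$. Summing over $i=1,\dots,n$, the residual $\sum_{i\ge 1} 1/i^2 = \pi^2/6 < 2$ accounts for the additive $+2$, and the monotonicity of $\{\beta_i\}$ together with Cauchy--Schwarz gives
\begin{align*}
R_n \,\le\, 2\sqrt{\beta_n}\,\sqrt{n \sum_{i=1}^n \sigma_{i-1}^2(x_i,\tau_i)} \;+\; 2.
\end{align*}

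Next I would bound $\sum_{i=1}^n \sigma_{i-1}^2(x_i,\tau_i)$ by the maximum information gain. Assuming the joint kernel is normalized with $k((x,\tau),(x,\tau))\le 1$ (so $\sigma_{i-1}^2\le 1$), the elementary inequality $u\le \tfrac{\sigma^{-2}}{\log(1+\sigma^{-2})}\log(1+u)$ valid on $u\in[0,\sigma^{-2}]$, applied with $u=\sigma^{-2}\sigma_{i-1}^2(x_i,\tau_i)$, yields $\sigma_{i-1}^2(x_i,\tau_i)\le \tfrac{1}{\log(1+\sigma^{-2})}\log\!\bigl(1+\sigma^{-2}\sigma_{i-1}^2(x_i,\tau_i)\bigr)$. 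The step that makes this useful is the Gaussian chain-rule identity
\begin{align*}
\tilde{I}(\by_n;\bff_n) \,=\, \tfrac{1}{2}\sum_{i=1}^n \log\!\bigl(1+\sigma^{-2}\sigma_{i-1}^2(x_i,\tau_i)\bigr),
\end{align*}
which follows from the standard chain-rule decomposition of the mutual information of sequentially conditional Gaussians and relies only on the fact that $\sigma_{i-1}^2(\cdot,\cdot)$ is a deterministic functional of the past query-location / time pairs through the joint Gram matrix. Using this together with the worst-case bound $\tilde{I}(\by_n;\bff_n)\le \tilde{\gamma}_n$ gives $\sum_{i=1}^n \sigma_{i-1}^2(x_i,\tau_i) \le \tfrac{2\tilde{\gamma}_n}{\log(1+\sigma^{-2})} = C\tilde{\gamma}_n/4$, and plugging back produces $R_n \le 2\sqrt{n\beta_n\cdot C\tilde{\gamma}_n/4}+2 = \sqrt{C\beta_n n\,\tilde{\gamma}_n}+2$.

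No step in the argument is genuinely difficult: everything reduces to kernel-algebraic identities that carry over verbatim from the time-invariant case, because the posterior variances depend on past observations only through the Gram matrix of the joint kernel, and the non-uniformity of $\{t_i\}$ never enters this computation. The effect of the evaluation-time sequence is entirely deferred to the later analysis of $\tilde{\gamma}_n$ in Theorem \ref{thm: main}. The only bookkeeping point I should check carefully is that the $\beta_n$ declared in the lemma statement matches the one used inside Lemma \ref{lem: simple bound} (both use the same $4\pi_n/\delta$ normalization, so the $(1-\delta)$ event invoked at the start is immediately available).
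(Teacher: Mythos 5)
Your proposal is correct and follows essentially the same route as the paper: the per-round bound from Lemma \ref{lem: simple bound}, the $\sum_i 1/i^2 < 2$ residual, Cauchy--Schwarz with $\beta_{i-1}\le\beta_n$, the elementary inequality $u\le\tfrac{\sigma^{-2}}{\log(1+\sigma^{-2})}\log(1+u)$, and the Gaussian chain-rule identity $\tilde{I}(\by_n;\bff_n)=\tfrac12\sum_i\log(1+\sigma^{-2}\sigma_{i-1}^2(x_i,\tau_i))$ bounded by $\tilde{\gamma}_n$. Note only that you (correctly) obtain the factor $n$ rather than the $\tau_n$ appearing in the lemma statement, which is a typo in the statement since the paper's own proof and Theorem \ref{thm: main} both use $\sqrt{C\beta_n n\tilde{\gamma}_n}+2$.
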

\begin{proof}
\allowdisplaybreaks
    \begin{align}
        R_n
        \nonumber &= \sum_{i=1}^n r_i\\
        \label{formula: in main thm, cum, eq 1} &\le \sum_{i=1}^n \left( 2 \sqrt{\beta_{i-1}} \sigma_i(x_i, \tau_i) + \frac{1}{i^2}\right)\\
        \label{formula: in main thm, cum, eq 2} &\le \sum_{i=1}^n \left(4 \beta_{i-1} \sigma_{i-1}^2(x_i, \tau_i) \right)^{\frac{1}{2}} + 2\\
        \label{formula: in main thm, cum, eq 3} &\le \left( \sum_{i=1}^n 4 \beta_{i-1} \sigma_{i-1}^2(x_i, \tau_i) \right)^{\frac{1}{2}} \left( \sum_{i=1}^n 1 \right)^{\frac{1}{2}}  + 2\\
        \nonumber &= \left( \sum_{i=1}^n 4 \beta_{i-1} \sigma_{i-1}^2(x_i, \tau_i) \right)^{\frac{1}{2}} \sqrt{n}  + 2\\
        \nonumber &= \left( \sum_{i=1}^n 4 \beta_{i-1} \sigma^{2} \sigma^{-2} \sigma_{i-1}^2(x_i, \tau_i) \right)^{\frac{1}{2}} \sqrt{n}  + 2\\
        \label{formula: in main thm, cum, eq 4} &\le \left( \sum_{i=1}^n 4 \beta_{i-1} \sigma^{2} \frac{\sigma^{-2}}{\log ( 1 + \sigma^{-2})} \log \left( 1 + \sigma^{-2}\sigma_{i-1}^2(x_i, \tau_i) \right)
        \right)^{\frac{1}{2}} \sqrt{n}  + 2\\
        \nonumber &\le \left(  \frac{1}{2} C  \beta_n \sum_{i=1}^n \log \left( 1 + \sigma^{-2}\sigma_{i-1}^2(x_i, \tau_i) \right)
        \right)^{\frac{1}{2}} \sqrt{n}  + 2\\
        \label{formula: in main thm, cum, eq 5} &= \sqrt{C \beta_n n \tilde{\gamma}_n} + 2,
    \end{align}
    where
    the inequality \eqref{formula: in main thm, cum, eq 1} follows from Lemma \ref{lem: simple bound},
    the inequality \eqref{formula: in main thm, cum, eq 2} follows from the inequality $\sum_{i=1}^n\frac{1}{i^2} = \frac{\pi^2}{6} < 2$, 
    the inequality \eqref{formula: in main thm, cum, eq 3} follows from the Cauchy-Schwarz's inequality, 
    the inequality \eqref{formula: in main thm, cum, eq 4} follows from tha fact that for any $s^2 \in [0, \sigma^{-2}]$, $s^2 \le \frac{\sigma^{-2}}{\log (1 + \sigma^{-2})} \log (1 + s^2)$, and 
    the equation \eqref{formula: in main thm, cum, eq 5} follows from $\tilde{\gamma}_n = \max_{x_1, \ldots, x_n} \frac{1}{2} \sum_{i=1}^n \log \left( 1 + \sigma^{-2}\sigma_{i-1}^2(x_i, \tau_i) \right)$ by Lemma 5.3 in \citet{Srinivas_et_al_2010}.
    This completes the proof.
\end{proof}

Finally, we evaluate the maximum information gain by using the maximum space information gain and the evaluation time uniformity.

\begin{lem} \label{lem: out-of-time cum bound}
    Assume that the space kernel $k_{\mathrm{space}}$ satisfies for any $x, x' \in \cd$,
    \begin{align}
        \label{formula: in main thm, gamma bound, k_space} k_{\mathrm{space}}(x, x') \le 1 
    \end{align}
    and the time kernel $k_{\mathrm{time}}$ satisfies for some $\epsilon \in [0,1]$, for any $\tau, \tau' \in \ct$, 
    \begin{align*}
    1 - k_{\mathrm{time}}(\tau, \tau') \le \epsilon | \tau - \tau'|.
    \end{align*}
    Let $\phi(x) = \min \left( x, \log x + \frac{1}{x} \right)$ for any $x \in \rr$.
    Let $\{d_i \}_{i=0}^N \subset \{ 0, \ldots, n \}$ be the partition of $\{ 0, \ldots, n \}$ as follows.
    \begin{align*}
        0 = d_0 < d_1 < d_2 < \cdots < d_{N-1} < d_N = n.
    \end{align*}
    Set $M_i = d_i - d_{i-1}$ and $M = \max_{i=1, \ldots, N} M_i$.
    Let
    \begin{align*}
        \ct_{i} = \{ \tau_{d_{i-1}+1}, \ldots, \tau_{d_i} \} \subset \{ \tau_k \}_{k=1}^n,
    \end{align*}
    for any $i \in [n]$.
    Then, 
    \begin{align*}
        \tilde{\gamma}_n \le 
        N \gamma_M 
        +
        \frac{1}{2} \sum_{i = 1}^{N} M_i \phi \left( \sigma^{-2} \epsilon \sqrt{\frac{C_{\epsilon, \ct_{i}}}{M_i}}
        \right).
    \end{align*}
\end{lem}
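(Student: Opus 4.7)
My plan is to first reduce the bound on $\tilde\gamma_n$ to a per-block statement via Fischer's inequality, and then control each block's information gain in terms of the pure-spatial information gain plus a correction measured by the evaluation-time uniformity. Since $I + \sigma^{-2}\tilde K_n$ is positive definite and naturally partitioned into diagonal blocks by the grouping $\{d_i\}$, Fischer's inequality gives
\[
\log\det(I + \sigma^{-2}\tilde K_n) \le \sum_{i=1}^N \log\det(I + \sigma^{-2}\tilde K_n^{(i,i)}),
\]
where $\tilde K_n^{(i,i)}$ is the Gram submatrix indexed by block $i$. Because each $x_j$ lies in exactly one block, the maximum over $x_1,\ldots,x_n$ distributes over this sum, so it suffices to control each per-block maximum separately.

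For a fixed block $i$ I would write $\tilde K_n^{(i,i)} = K^{(i)} - R^{(i)}$ with $K^{(i)} = (k_{\mathrm{space}}(x_j,x_k))_{j,k}$ and $R^{(i)}_{jk} = k_{\mathrm{space}}(x_j,x_k)(1 - k_{\mathrm{time}}(\tau_j,\tau_k))$, so $R^{(i)}$ has zero diagonal. Combining $k_{\mathrm{space}}\le 1$ from \eqref{formula: in main thm, gamma bound, k_space} with $1 - k_{\mathrm{time}}(\tau,\tau') \le \min(1,\epsilon|\tau-\tau'|)$ yields $(R^{(i)}_{jk})^2 \le \epsilon^2\min(1/\epsilon^2,(\tau_j-\tau_k)^2)$, hence $\|R^{(i)}\|_F^2 \le \epsilon^2 C_{\epsilon,\ct_i}$. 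The determinant identity
\[
\log\det(I + \sigma^{-2}\tilde K_n^{(i,i)}) = \log\det(I + \sigma^{-2}K^{(i)}) + \log\det(I + C^{(i)}),
\]
with $C^{(i)} := -\sigma^{-2}(I + \sigma^{-2}K^{(i)})^{-1} R^{(i)}$, reduces the task to an upper bound on $\log\det(I + C^{(i)})$. Although $C^{(i)}$ is not symmetric, it is similar via $(I+\sigma^{-2}K^{(i)})^{1/2}$ to the symmetric matrix $\tilde C^{(i)} := -\sigma^{-2}(I+\sigma^{-2}K^{(i)})^{-1/2}R^{(i)}(I+\sigma^{-2}K^{(i)})^{-1/2}$, so its eigenvalues $\lambda_1,\ldots,\lambda_{M_i}$ are real, lie in $(-1,\infty)$ (since $I + C^{(i)}$ is similar to a positive-definite matrix), and satisfy $\sum_j\lambda_j^2 = \|\tilde C^{(i)}\|_F^2 \le \sigma^{-4}\|R^{(i)}\|_F^2 \le \sigma^{-4}\epsilon^2 C_{\epsilon,\ct_i}$.

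Jensen's inequality applied to the concave function $\log(1+\cdot)$ on $(-1,\infty)$, combined with Cauchy--Schwarz on the mean $\bar\lambda := \tfrac{1}{M_i}\sum_j\lambda_j$, then gives
\[
\sum_{j=1}^{M_i}\log(1+\lambda_j) \le M_i\log(1+\bar\lambda) \le M_i\log\!\left(1+\sigma^{-2}\epsilon\sqrt{C_{\epsilon,\ct_i}/M_i}\right),
\]
with the case $\bar\lambda<0$ handled by the nonnegativity of the right-hand side. Verifying both $\log(1+x)\le x$ and $\log(1+x)\le \log x + 1/x$ for $x>0$ (the latter via $\log(1+1/x)\le 1/x$) yields $\log(1+x)\le \phi(x)$, so each block contributes at most $M_i\phi(\sigma^{-2}\epsilon\sqrt{C_{\epsilon,\ct_i}/M_i})$ in excess of $\log\det(I+\sigma^{-2}K^{(i)})$. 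Maximizing the latter over $\{x_j\}_{j\in\mathrm{block}\,i}$ bounds it by $2\gamma_{M_i}\le 2\gamma_M$, and summing over $i=1,\ldots,N$ and halving produces the claim. The main obstacle is the second step: handling the non-symmetry of $C^{(i)}$ while keeping the Frobenius control tight enough that Jensen followed by $\log(1+x)\le \phi(x)$ recovers both the linear and the logarithmic regimes of $\phi$, rather than only the linear bound $x$.
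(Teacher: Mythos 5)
Your proof is correct and reaches exactly the paper's bound, but the core of your per-block argument is genuinely different from the paper's. The outer skeleton coincides: your Fischer-inequality reduction of $\log\det(I+\sigma^{-2}\tilde K_n)$ to diagonal blocks is the determinant form of the paper's mutual-information chain-rule step (and is actually stated more carefully, since the paper writes that step as an equality when only the inequality holds), you control the time-induced perturbation in Frobenius norm by $\epsilon^2 C_{\epsilon,\ct_i}$ using $k_{\mathrm{space}}\le 1$, and you finish with Jensen and $\log(1+x)\le\phi(x)$ exactly as the paper does. Where you diverge is in how the Frobenius control is transferred to the log-determinant: the paper compares the eigenvalues of the joint-kernel block with those of the space-only block via Mirsky's theorem, getting $\sum_k(\Delta_k^i)^2\le\epsilon^2 C_{\epsilon,\ct_i}$, and then splits $\log(1+\sigma^{-2}\lambda_k(K^i)+\sigma^{-2}\Delta_k^i)\le\log(1+\sigma^{-2}\lambda_k(K^i))+\log(1+\sigma^{-2}\Delta_k^i)$; you instead use the exact multiplicative factorization $\det(I+\sigma^{-2}\tilde K^{(i,i)})=\det(I+\sigma^{-2}K^{(i)})\det(I+C^{(i)})$ and bound the spectrum of the symmetrized correction $\tilde C^{(i)}$, whose Frobenius norm is contracted by $(I+\sigma^{-2}K^{(i)})^{-1/2}$. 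Your route avoids Mirsky altogether and, more substantively, sidesteps a soft spot in the paper's version: the subadditivity $\log(1+a+b)\le\log(1+a)+\log(1+b)$ is justified there only for $a,b>0$, whereas the eigenvalue gaps $\Delta_k^i$ can be negative; your decomposition needs no sign condition and you handle $\bar\lambda<0$ explicitly. The price is the similarity argument keeping the eigenvalues of $C^{(i)}$ real and above $-1$ and the bound $\|A X A\|_F\le\|A\|_{\mathrm{op}}^2\|X\|_F$, both standard. One shared implicit assumption: squaring $R^{(i)}_{jk}$ against $\min(1,\epsilon|\tau_j-\tau_k|)$ uses $0\le k_{\mathrm{time}}\le 1$, which is not literally among the lemma's hypotheses but is relied on by the paper's proof as well.
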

\begin{proof}
    For any positive semi-definite square matrix $A$, we denote $\lambda_i(A)$ as the $i$-th eigenvalue of $A$ in descending order.
    Pick any $i \in [n]$.
    Recall that 
    $K_n = (k((x_k, \tau_k), (x_l, \tau_l)))_{k, l = 1}^n$ and 
    $\tilde{K}_n = (k_{\mathrm{space}}(x_k, x_l))_{k, l = 1}^n$.
    Define the index set $\ci_i = \{ d_{i-1} + 1, \ldots, d_i \}$.
    For submatrices 
    $K^i = (k((x_k, \tau_k), (x_l, \tau_l)))_{k, l \in \ci_i} \in \mathbb{R}^{M_i \times M_i}$ and
    $\tilde{K}^i = (k_{\mathrm{space}}(x_k, x_l))_{k, l \in \ci_i} \in \mathbb{R}^{M_i \times M_i}$,
    let $A^i$ be 
    \begin{align*}
        A^i = \tilde{K}^i - K^i = K^i \circ (J^i - 1_i),
    \end{align*}
    where $\circ$ denotes the hadamard product,
    $J^i = (k_{\mathrm{time}}(\tau_k, \tau_l))_{k,l \in \ci_i} \in \mathbb{R}^{M_i \times M_i}$,
    and $1_i = (1)_{k,l=1}^{M_i} \in \mathbb{R}^{M_i \times M_i}$.
    Let $\Delta_k^{i} = \lambda_k (\tilde{K}^i) - \lambda_k (K^i)$.

    First, we bound the squared sum of $\{ \Delta_k^{i} \}_{k=1}^{M_i}$ for any $i \in [N]$ as follows.
    \begin{align}
        \nonumber \sum_{k=1}^{M_i} \left( \Delta_k^{i} \right)^2
        &= \sum_{k=1}^{M_i} \left( \lambda_k (\tilde{K}^i) - \lambda_k (K^i) \right)^2\\
        \nonumber &= \left\| diag (\lambda_1 (\tilde{K}^i), \ldots, \lambda_i (\tilde{K}^i)) - diag (\lambda_1 (K^i), \ldots, \lambda_i (K^i)) \right\|_F^2\\
        \label{formula: in main thm, gamma bound, eq1} &\le \left\| \tilde{K}^i - K^i \right\|_F^2\\
        \nonumber &= \left\| K^i \circ (J^i - 1_i) \right\|_F^2\\
        \label{formula: in main thm, gamma bound, eq2} &\le \left\| J^i - 1_i \right\|_F^2\\
        \nonumber &\le \sum_{k \in \ci_i} \sum_{l \in \ci_i} \left( 1 - (1 - \epsilon) ^ {\frac{|\tau_k - \tau_l|}{2}}\right) ^2\\
        \nonumber &\le \sum_{k \in \ci_i} \sum_{l \in \ci_i} \min \left( 1, \epsilon^2 (\tau_k - \tau_l)^2 \right)\\
        \nonumber &= \sum_{\tau_k \in \ct_{i}} \sum_{\tau_l \in \ct_{i}} \min \left( 1, \epsilon^2 (\tau_k - \tau_l)^2 \right)\\
        \label{formula: final delta bound} &= \epsilon^2 C_{\epsilon, \ct_{i}},
    \end{align}
    where
    the inequality \eqref{formula: in main thm, gamma bound, eq1} follows from Lemma \ref{lem: mirsky} in the following with $U = \tilde{K}^i, V = K^i$, and $\| \cdot \| = \| \cdot \|_F$, and
    the inequality \eqref{formula: in main thm, gamma bound, eq2} follows from the fact that each entry of $K^i$ is smaller than $1$ by \eqref{formula: in main thm, gamma bound, k_space}.
    
    \begin{lem}[Mirsky's theorem \citep{Horn_and_Johnson_2012}, Cor. 7.4.9.3] \label{lem: mirsky}
        Choose any $i \times i$ matrices $U$ and $V$.
        For any matrix $U$, we denote $\lambda_i(U)$ as the $i$-th eigenvalue of $U$ in descending order.
        Let $\| \cdot \|$ be any unitaly invariant norm.
        Then, we have
        \begin{align*}
            \| \mathrm{diag}(\lambda_1 (U), \ldots, \lambda_i (U)) - \mathrm{diag}(\lambda_1 (V), \ldots, \lambda_i (V)) \| \le \| U - V \|.
        \end{align*}
    \end{lem}
   
    Next, we bound the maximum information gain $\tilde{\gamma}_n$ by the maximum space information gain $\gamma_M$ and the evaluation time uniformity $C_{\epsilon, \ct_i}$.
    Recall that $\bff_n = (f(x_j, \tau_j))_{j=1}^n$ and $\by_n = (y_j)_{j=1}^n$ with $y_j = f(x_j, \tau_j) + z_j$.
    We define the block vectors of 
    $\bff_n$ and $\by_n$ for any 
    $i \in [N]$ 
    as follows.
    \begin{align*}
        \bff^i &= \left( f(x_{d_{i-1} + 1}, \tau_{d_{i-1} + 1}), \ldots, f(x_{d_i}, \tau_{d_i}) \right)\\
        \by^i &= \left( y_{d_{i-1} + 1}, \ldots, y_{d_i} \right).
    \end{align*}
    If we use the chain rule for conditional mutual information and the independence of the noise sequence $\{ z_n\}$, we have
    \begin{align*}
    \tilde{I}(\by_n; \bff_n) = \sum_{i=1}^{N} \tilde{I}(\by^i; \bff^i).
    \end{align*}
    Note that all $\by^i$ and $\bff^i$ have length at most $M$.
    We maximize each $\tilde{I}(\by^i; \bff^i)$ ($i \in \left\{ 1, \ldots, N \right\}$) with respect to $x_{d_{i-1} + 1}, \ldots, x_{d_i}$
    If we maximize both sides over $x_1, \ldots, x_n$, we obtain
    \begin{align*}
        \tilde{\gamma}_n \le \sum_{i=1}^{N} \tilde{\gamma}^i,
    \end{align*}
    where $\tilde{\gamma}^i$ is the maximum information gain of the $i$-th block defined by
    \begin{align*}
        \tilde{\gamma}^i = \max_{x_{d_{i-1}+1}, \ldots, x_{d_i}} \tilde{I}(\by^i ; \bff^i).
    \end{align*}
    We also define the maximum space information gain of the $i$-th block $\gamma^i$ by
    \begin{align*}
        \gamma^i = \frac{1}{2} \max_{x_{d_{i-1}+1}, \ldots, x_{d_i}} \sum_{k=1}^{M_i} \log (1 + \sigma^{-2}\lambda_k(K^i))
    \end{align*}
    Note that each $\tilde{\gamma}^i$ depends on $\ct_{i}$, but each $\gamma^i$ does not depend on $\ct_{i}$, because the matrix $K^i$ only depends on space information.
    Therefore, $\gamma^i$ only depends on the kernel $k_{\mathrm{space}}$ and the size of the matrix $M_i$.
    We denote $\gamma^i$ as $\gamma_{M_i}$, which is upper bounded by $\gamma_M$.
    \begin{align}
        \gamma_{M_i} \le \gamma_M. \label{formula: in main thm, gamma M bound}
    \end{align}
    
    Using the evaluation (\ref{formula: final delta bound}), we can bound the maximum information gain of the $i$-th block as follows.
    \allowdisplaybreaks
    \begin{align}
        \tilde{\gamma}^i
        \nonumber &= \max_{x_{d_{i-1}+1}, \ldots, x_{d_i}} \tilde{I}(\by^i; \bff^i)\\
        \nonumber &= \frac{1}{2} \max_{x_{d_{i-1}+1}, \ldots, x_{d_i}} \log \det ( I + \sigma^{-2} \tilde{K}^i )\\
        \nonumber &= \frac{1}{2} \max_{x_{d_{i-1}+1}, \ldots, x_{d_i}} \log \prod_{k=1}^{M_i} (1 + \sigma^{-2}\lambda_k(\tilde{K}^i))\\
        \nonumber &= \frac{1}{2} \max_{x_{d_{i-1}+1}, \ldots, x_{d_i}} \sum_{k=1}^{M_i} \log (1 + \sigma^{-2}\lambda_k(\tilde{K}^i))\\
        \label{formula: in main thm, gamma bound, gamma0} &= \frac{1}{2} \max_{x_{d_{i-1}+1}, \ldots, x_{d_i}} \sum_{k=1}^{M_i} \log (1 + \sigma^{-2}\lambda_k(K^i + A^i))\\
        \label{formula: in main thm, gamma bound, gamma1} &\le \frac{1}{2} \max_{x_{d_{i-1}+1}, \ldots, x_{d_i}} \sum_{k=1}^{M_i} \log (1 + \sigma^{-2}\lambda_k(K^i) + \sigma^{-2}\Delta_k^{i})\\
        \label{formula: in main thm, gamma bound, gamma2} &\le \frac{1}{2} \max_{x_{d_{i-1}+1}, \ldots, x_{d_i}} \sum_{k=1}^{M_i} \log (1 + \sigma^{-2}\lambda_k(K^i)) + \frac{1}{2} \sum_{k=1}^{M_i} \log (1 + \sigma^{-2}\Delta_k^{i})\\
        \label{formula: in main thm, gamma bound, gamma3} &= \gamma_{M_i} + \frac{1}{2} \sum_{k=1}^{M_i} \log (1 + \sigma^{-2}\Delta_k^{i})\\
        \label{formula: in main thm, gamma bound, gamma4} &\le \gamma_{M_i} + \frac{M_i}{2} \log \left( 1 + \sigma^{-2} \frac{1}{M_i} \sum_{k=1}^{M_i} \Delta_k^{i} \right)\\
        \label{formula: in main thm, gamma bound, gamma5} &\le \gamma_{M_i} + \frac{M_i}{2} \log \left( 1 + \sigma^{-2} \sqrt{\frac{1}{M_i} \sum_{k=1}^{M_i} \left( \Delta_k^{i} \right)^2 }\right)\\
        \label{formula: in main thm, gamma bound, gamma6} &\le \gamma_{M_i} + \frac{M_i}{2} \log \left( 1 + \sigma^{-2} \sqrt{\frac{\epsilon^2 C_{\epsilon, \ct_{i}}}{M_i}}\right)\\
        \nonumber &= \gamma_{M_i} + \frac{M_i}{2} \log \left( 1 + \sigma^{-2} \epsilon \sqrt{\frac{C_{\epsilon, \ct_{i}}}{M_i}}\right)\\
        \label{formula: in main thm, gamma bound, gamma7} &\le \gamma_{M} + \frac{M_i}{2} \log \left( 1 + \sigma^{-2} \epsilon \sqrt{\frac{C_{\epsilon, \ct_{i}}}{M_i}}\right),
    \end{align}
    where 
    the equality \eqref{formula: in main thm, gamma bound, gamma0} follows from the definition of $A_i^j$, 
    the inequality \eqref{formula: in main thm, gamma bound, gamma1} follows from the definition of $\Delta_k^{i}$,
    the inequality \eqref{formula: in main thm, gamma bound, gamma2} follows from the fact that for any $a > 0, b > 0$, $\log(1 + a + b) \le \log(1 + a) + \log(1 + b)$,
    the equality \eqref{formula: in main thm, gamma bound, gamma3} follows from the definition of $\gamma^i = \gamma_{M_i}$,
    the inequality \eqref{formula: in main thm, gamma bound, gamma4} follows from the Jensen's inequality for $\log(1 + x)$, 
    the inequality \eqref{formula: in main thm, gamma bound, gamma5} follows from the Jensen's inequality for $x^2$,
    the inequality \eqref{formula: in main thm, gamma bound, gamma6} follows from \eqref{formula: final delta bound}, and
    the inequality \eqref{formula: in main thm, gamma bound, gamma7} follows from \eqref{formula: in main thm, gamma M bound}.

    If $\sigma^{-2} \epsilon \sqrt{\frac{C_{\epsilon, \ct_{i}}}{i}} \le 1$, using $\log(1 + x) \le x$, we get
    \begin{align*}
        \gamma_M + \frac{M_i}{2} \log \left( 1 + \sigma^{-2} \epsilon \sqrt{\frac{C_{\epsilon, \ct_{i}}}{M_i}}\right)
        \le
        \gamma_M + \frac{M_i}{2} \left(\sigma^{-2} \epsilon \sqrt{\frac{C_{\epsilon, \ct_{i}}}{M_i}} \right).
    \end{align*}
    On the other hand, if $\sigma^{-2} \epsilon \sqrt{\frac{C_{\epsilon, \ct_{i}}}{M_i}} \ge 1$, using $\log (1 + x) = \log x + \log \left(1 + \frac{1}{x} \right) \le \log x + \frac{1}{x}$, we get
    \begin{align*}
        &\gamma_M + \frac{M_i}{2} \log \left( 1 + \sigma^{-2} \epsilon \sqrt{\frac{C_{\epsilon, \ct_{i}}}{M_i}}\right) \\
        &\le
        \gamma_M + \frac{M_i}{2} \left( \log \left( \sigma^{-2} \epsilon \sqrt{\frac{C_{\epsilon, \ct_{i}}}{M_i}} \right) + \left( \sigma^{-2} \epsilon \sqrt{\frac{C_{\epsilon, \ct_{i}}}{M_i}} \right)^{-1}\right).
    \end{align*}
    We combine both cases using 
    $\phi(x) = \min \left( x, \log x + \frac{1}{x} \right)$ as follows.
    \begin{align*}
        \gamma_M + \frac{M_i}{2} \log \left( 1 + \sigma^{-2} \epsilon \sqrt{\frac{C_{\epsilon, \ct_{i}}}{M_i}}\right)
        \le
         \gamma_M + \frac{M_i}{2} \phi \left(\sigma^{-2} \epsilon \sqrt{\frac{C_{\epsilon, \ct_{i}}}{M_i}} \right).
    \end{align*}
    This completes the proof.
\end{proof}

Using Lemma \ref{lem: cum bound using max info gain} and \ref{lem: out-of-time cum bound}, we obtain the result of the Theorem \ref{thm: main}.

\section{Proof of Lemmas \ref{lem: uni uni} and \ref{lem: bia uni}} \label{sec: proof of lem}
\begin{proof}[Proof of Lemma \ref{lem: uni uni}]
    Recall that $\ct'$ is a subset of $\{ \tau_k \}_{k=1}^n$ with $i$ consecutive elements, that is, for some $k_0 \le n - i$,
    \begin{align*}
        \ct' = \{ \tau_{k_0 + 1}, \ldots, \tau_{k_0 + i} \}.
    \end{align*}
    Since $t_k = \frac{T}{n}$ for any $k \in [n]$ in the uniform setting, 
    $\tau_k = \frac{T}{n}k$.
    Therefore, 
    \begin{align}
        C_{\epsilon, \ct'} 
        \nonumber &= \sum_{k=1}^i \sum_{l=1}^i \min \left(\frac{1}{\epsilon^2}, \left( \tau_{k_0 + k} - \tau_{k_0 + l} \right)^2 \right)\\
        \nonumber &= \sum_{k=1}^i \sum_{l=1}^i \min \left(\frac{1}{\epsilon^2}, \left( \frac{T}{n}k - \frac{T}{n}l\right)^2 \right)\\
        &= \frac{T^2}{n^2} \sum_{k=1}^i \sum_{l=1}^i \min \left(\frac{n^2}{\epsilon^2 T^2}, (k-l)^2\right) \label{formula: sum uni uni}.
    \end{align}
    To calculate the above sum \eqref{formula: sum uni uni}, we consider two cases: (1) $\frac{n}{\epsilon T} \ge i$ and (2) $\frac{n}{\epsilon T} \le i$.
    If $\frac{n}{\epsilon T} \ge i$, then all minimums inside of \eqref{formula: sum uni uni} are equal to $(k-l)^2$.
    Therefore, 
    \allowdisplaybreaks
    \begin{align*}
        C_{\epsilon, \ct'} 
        &= \frac{T^2}{n^2} \sum_{k=1}^i \sum_{l=1}^i (k-l)^2\\
        &= \frac{T^2}{n^2} \frac{1}{6} i^2 (i^2 - 1)\\
        &= 
        \frac{1}{6} \frac{T^2}{n^2} i^2 (i^2 - 1).
    \end{align*}
    On the other hand, if $\frac{n}{\epsilon T} \le i$, then the minimums inside of \eqref{formula: sum uni uni} are equal to 
    $(k-l)^2$ when $|k-l| \le \frac{n}{\epsilon T}$ 
    and equal to 
    $\frac{n^2}{\epsilon^2 T^2}$ when $|k-l| > \frac{n}{\epsilon T}$.
    For simplicity, we assume that $\frac{n}{\epsilon T}$ is integer.
    Therefore, 
    \allowdisplaybreaks
    \begin{align*}
        C_{\epsilon, \ct'} 
        &= \frac{T^2}{n^2} \sum_{k=1}^i \sum_{l=1}^i \min \left( \frac{n^2}{\epsilon^2 T^2}, (k-l)^2 \right)\\
        &= \frac{T^2}{n^2} \left( 
            \sum_{|k-l| \le \frac{n}{\epsilon T} } (k-l)^2
            + 
            \sum_{|k-l| > \frac{n}{\epsilon T}} \frac{n^2}{\epsilon^2 T^2}
            \right)\\
        &= \frac{T^2}{n^2} \left( 
            \sum_{l=1}^\frac{n}{\epsilon T} l^2 (i-l)
            + 
            \frac{n^2}{\epsilon^2 T^2} \sum_{l=1}^{i-\frac{n}{\epsilon T} - 1} l
            \right)\\
        &= \frac{T^2}{n^2} \frac{n}{\epsilon T} \left( 
            i^2 \frac{n}{\epsilon T} 
            - \frac{4}{3}i \left(\frac{n}{\epsilon T}\right)^2
            + \frac{1}{2} \left(\frac{n}{\epsilon T}\right)^3
            + \frac{i}{3}
            - \frac{1}{2} \frac{n}{\epsilon T}
            \right)\\
        &= 
        \frac{T}{\epsilon n}
        \left(
            \frac{1}{2} \left( \frac{n}{T \epsilon} \right)^3 
            - \frac{4}{3} i \left( \frac{n}{T \epsilon} \right)^2
            + \left( i^2 - \frac{1}{2} \right)\frac{n}{T \epsilon}
            + \frac{i}{3} \right).
    \end{align*}
    This completes the proof.
\end{proof}
\begin{proof}[Proof of Lemma \ref{lem: bia uni}]
    Recall that $\ct'$ be a subset of $\{ \tau_k \}_{k=1}^n$ with $i$ consecutive elements, that is, for some $k_0 \le n - i$,
    \begin{align*}
        \ct' = \{ \tau_{k_0 + 1}, \ldots, \tau_{k_0 + i} \}.
    \end{align*}
    Since $t_{n_0} = T$ and $t_i = 0$ for $i \neq n_0$ in the extremely biased setting, $\tau_i = 0$ for $i < n_0$ and $\tau_i = T$ for $i \ge n_0$.
    Therefore, if $\tau_{n_0} \not \in \ct'$, 
    \begin{align*}
        C_{\epsilon, \ct'}
        = \sum_{k=1}^i \sum_{l=1}^i 0 = 0.
    \end{align*}
    On the other hand, if $\tau_{n_0} \in \ct'$,
    \allowdisplaybreaks
    \begin{align*}
        C_{\epsilon, \ct'}
        &= \sum_{k=1}^i \sum_{l=1}^i \min \left( \frac{1}{\epsilon^2}, (\tau_{k_0 + k} - \tau_{k_0 + l})^2 \right)\\
        &= \sum_{k=1}^i \left( 
        \sum_{l=1}^{n_0 - k_0 -1} \min \left( \frac{1}{\epsilon^2}, \tau_{k_0 + k}^2 \right)
        +
        \sum_{l=n_0 - k_0}^i \min \left( \frac{1}{\epsilon^2}, (\tau_{l_0 + k} - T)^2 \right)
        \right)\\
        &= \sum_{k=1}^i \left( 
        (n_0-k_0-1) \min \left( \frac{1}{\epsilon^2}, \tau_{k_0 + k}^2 \right)
        + \right. \\
        &\hspace{100pt} \left.  (k_0 + i - n_0 + 1) \min \left( \frac{1}{\epsilon^2}, (\tau_{k_0 + k} - T)^2 \right)
        \right) \\
        &= 
        (n_0-k_0-1) (k_0 + i-n_0+1)\min \left( \frac{1}{\epsilon^2}, T^2 \right)
        +\\
        &\hspace{100pt} (k_0 + i-n_0+1) (n_0-k_0-1) \min \left( \frac{1}{\epsilon^2}, T^2 \right)\\
        &= 2 (n_0-k_0-1) (k_0+i-n_0+1)\min \left( \frac{1}{\epsilon^2}, T^2 \right).
    \end{align*}
    This completes the proof.
\end{proof}

\section{Proof of Theorem \ref{thm: derived}} \label{sec: proof of derived thm}
We prove Theorem \ref{thm: derived} in this section.
There are four cases in Theorem \ref{thm: derived}, so we prove them in order.
Recall that $T$ is $T = \tau_n = \sum_{i=1}^n t_i$, and the notation $\tilde{O}(\cdot)$ denotes the asymptotic growth rate up to logarithmic factors and suppose that the time kernel is the following special case of the exponential kernel:
    \begin{align*}
        k_{\mathrm{time}}(\tau, \tau') = (1 - \epsilon) ^ {\frac{|\tau - \tau'|}{2}}.
    \end{align*} 

\begin{lem} \label{lem: sq uniform}
    Suppose that the space kernel is the squared exponential kernel and evaluation time $\{ t_i \}_{i=1}^n$ are uniform, that is, $t_i = \frac{T}{n}$.
    Then,
    if $\epsilon T < n^{- \frac{3}{2}}$, then
    \begin{align*}
        R_n = \tilde{O} (\sqrt{n}),
    \end{align*}
    and if $n^{- \frac{3}{2}} \le \epsilon T \le n$, then
    \begin{align*}
        R_n = \tilde{O} 
        \left( 
            n^\frac{4}{5} 
            T^\frac{1}{5} 
            \epsilon^\frac{1}{5} 
        \right),
    \end{align*}
    and if $n < \epsilon T$, then
    \begin{align*}
        R_n = \tilde{O} \left(n 
        \left(1
            +
            \left( 
                \frac{\epsilon T}{n}
            \right)^{\frac{1}{2}}
            \right)
        \right)
    \end{align*}
    with high probability.
\end{lem}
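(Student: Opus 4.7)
The plan is to specialize Theorem~\ref{thm: main} with a uniform block partition, $M_i \equiv M$ and $N = n/M$, and couple it with the explicit uniform-setting formula for $C_{\epsilon,\ct}$ provided by Lemma~\ref{lem: uni uni}. For the squared exponential kernel it is well known \citep{Srinivas_et_al_2010} that the maximum space information gain satisfies $\gamma_M = O((\log M)^{d+1})$, so the second line of the bound in Theorem~\ref{thm: main} becomes
\begin{align*}
R_n \;\lesssim\; \sqrt{\beta_n\, n\, \tilde\gamma_n} + 2,
\qquad
\tilde\gamma_n \;\le\; \frac{n}{M}\gamma_M + \frac{n}{2}\,\phi\!\left(\sigma^{-2}\epsilon\sqrt{C_{\epsilon,\ct}/M}\right).
\end{align*}
Within the branch $M \le n/(\epsilon T)$ of Lemma~\ref{lem: uni uni}, $C_{\epsilon,\ct} = T^2 M^2(M^2-1)/(6n^2) = \Theta(T^2 M^4/n^2)$, so the $\phi$-argument is $\Theta(\sigma^{-2}\epsilon T M^{3/2}/n)$; this stays below $1$ exactly when $M \le (n/(\epsilon T))^{2/3}$, and on that sub-range the bound $\phi(x)\le x$ reduces the quantity under the square root (up to logarithms and constants) to $n^2 M^{-1}(\log M)^{d+1} + n\,\epsilon T\,M^{3/2}$.

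Balancing the two summands gives $M^\star \asymp (n/(\epsilon T))^{2/5}$. The admissibility conditions collapse nicely: $M^\star \le n$ iff $\epsilon T \ge n^{-3/2}$, and $M^\star \le (n/(\epsilon T))^{2/3}$ iff $\epsilon T \le n$; hence the choice $M = M^\star$ is valid precisely in the interior regime $n^{-3/2} \le \epsilon T \le n$. Plugging $M^\star$ back in yields the balanced value $\tilde\gamma_n = \tilde{O}(n^{3/5}(\epsilon T)^{2/5})$ and therefore $R_n = \tilde{O}(n^{4/5}T^{1/5}\epsilon^{1/5})$, matching the stated middle bound.

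For the extreme regimes the optimizer is simply pushed to a boundary. When $\epsilon T < n^{-3/2}$, $M^\star$ exceeds $n$, so I take $M = n$ (that is, $N = 1$); the $\phi$-argument is then $\Theta(\epsilon T \sqrt{n}) = o(1)$, making the $\phi$-term $O(n \epsilon T \sqrt{n}) = o(1)$, leaving $\tilde\gamma_n = O((\log n)^{d+1})$ and hence $R_n = \tilde{O}(\sqrt{n})$. When $\epsilon T > n$, I use the trivial partition $M = 1$, $N = n$; every block is a singleton so $C_{\epsilon,\ct} = 0$, the $\phi$-term vanishes, and $\tilde\gamma_n \le n \gamma_1 = O(n)$, giving $R_n = \tilde{O}(n)$, which is dominated by $\tilde{O}(n(1+(\epsilon T/n)^{1/2}))$.

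The main subtlety will be tracking the polylogarithmic factors carefully through the balancing step so that they affect only $M^\star$ and the $\tilde{O}$-notation but not the declared exponents, and verifying that the chosen $M^\star$ really lies in the branch of Lemma~\ref{lem: uni uni} that I invoked; once these are handled, the two boundary cases are forced by the admissibility constraints, so no further optimization is needed there.
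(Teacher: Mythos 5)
Your proposal is correct and follows essentially the same route as the paper's proof: specialize Theorem~\ref{thm: main} to a uniform block partition, insert the uniform-setting value of $C_{\epsilon,\ct'}$ from Lemma~\ref{lem: uni uni} together with $\gamma_M=\tilde O(1)$ for the squared exponential kernel, and balance the two terms to get $M^\star\asymp(n/(\epsilon T))^{2/5}$, pushing to the boundary choices $M=n$ and $M=1$ in the extreme regimes. The only real deviation is in the regime $n<\epsilon T$, where your singleton partition with $C_{\epsilon,\ct_i}=0$ yields the (stronger, and trivially sufficient) bound $\tilde O(n)$, whereas the paper keeps $i=1$ in the second branch of Lemma~\ref{lem: uni uni} and evaluates $\psi(1)$ to land exactly on $\tilde O\bigl(n(1+(\epsilon T/n)^{1/2})\bigr)$; both arguments establish the stated bound.
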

\begin{proof}
Pick any $i \in [n]$.
We consider the following partition of $\{ 0, \ldots, n \}$ in the uniform setting.
\begin{align*}
    \{ d_j \}_{j=1}^{\lceil n / i \rceil} = \{ \min (ij, n) \}_{j=1}^{\lceil n / i \rceil}.
\end{align*}
This corresponds to the following finite subset of $\ct$:
\begin{align*}
        \ct_{i,j} = \{ \tau_{i(j-1)+1}, \ldots, \tau_{\min(ij, n)} \} \subset \{ \tau_k \}_{k=1}^n,
\end{align*}
for any $i \in [n]$ and $j \in [\lceil n / i\rceil]$.
Here, we can assume that $\min (ij, n) = ij$, since if $\min (ij, n) = n$ we add $ij - n$ dummy elements so that $\ct_{i,j}$ has $i$ consecutive elements.
Such dummy elements are
$
\frac{T}{n}(n+k)
$ for $k \in \{ 1, \ldots, ij-n\}$.

    We set $\ct' = \ct_{i, j}$ in Lemma \ref{lem: uni uni}. 
    In the uniform setting, we get the evaluation time uniformity $C_{\epsilon, \ct_{i,j}}$ is if $\frac{n}{\epsilon T} \ge i$
    \begin{align*}
        C_{\epsilon, \ct_{i,j}} = \frac{1}{6} \frac{T^2}{n^2} i^2 (i^2 - 1), \numberthis \label{formula: in sq uniform, c bound1}
    \end{align*}
    and if $\frac{n}{\epsilon T} \le i$
    \begin{align*}
        C_{\epsilon, \ct_{i,j}} = 
        \frac{T}{\epsilon n}
        \left(
            \frac{1}{2} \left( \frac{n}{T \epsilon} \right)^3 
            - \frac{4}{3} i \left( \frac{n}{T \epsilon} \right)^2
            + \left( i^2 - \frac{1}{2} \right)\frac{n}{T \epsilon}
            + \frac{i}{3} \right). \numberthis \label{formula: in sq uniform, c bound2}
    \end{align*}
    We denote the RHS of \eqref{formula: in sq uniform, c bound1} as $C^1_{\epsilon, i}$ and that of \eqref{formula: in sq uniform, c bound2} as $C^2_{\epsilon, i}$.
    
    In this setting, the upper bound of the cumulative regret \eqref{formula: reg upper bound} becomes as follows:
    \begin{align*}
        R_n &\le 
        \sqrt{C \beta_n n 
       \left( \left\lceil \frac{n}{i} \right\rceil  \gamma_i 
       +
       \frac{i}{2} \sum_{j=1}^{\lceil n / i \rceil} \phi \left( \sigma^{-2} \epsilon \sqrt{\frac{C_{\epsilon, \ct_{i,j}}}{i}} \right) \right) } + 2 \numberthis \label{formula: in sq uniform, reg bound}.
    \end{align*}
    \\
    First, we consider two cases,
    $\epsilon T < n^{-\frac{3}{2}}$ ({\bf Case 1}) and
    $n^{-\frac{3}{2}} \le \epsilon T < 1$ ({\bf Case 2}).
    In these cases, $\frac{n}{\epsilon T} \frac{1}{i} > \frac{n}{i} > 1$ for any $i \in [n]$.
    This means $\frac{n}{\epsilon T} > i$.
    Therefore, all $C_{\epsilon, \ct_{i,j}}$ in the sum of \eqref{formula: in sq uniform, reg bound} are equal to $C^1_{\epsilon, i}$.
    By substituting \eqref{formula: in sq uniform, c bound1} into \eqref{formula: in sq uniform, reg bound}, we get the following.
    \begin{align*}
        R_n &\le
        \sqrt{
            C \beta_n n \left(
                \left\lceil \frac{n}{i} \right\rceil  \gamma_i 
       +
       \frac{i}{2} \sum_{j=1}^{\lceil n / i \rceil} \phi \left( \sigma^{-2} \epsilon \sqrt{\frac{C_{\epsilon, \ct_{i,j}}}{i}} \right)
            \right)
        } +2 \\
        &= 
        \sqrt{
            C \beta_n n \left(
                \left\lceil \frac{n}{i} \right\rceil  \gamma_i 
       +
       \frac{i}{2} \sum_{j=1}^{\lceil n / i \rceil} \phi \left( \sigma^{-2} \epsilon \sqrt{\frac{C^1_{\epsilon, i}}{i}} \right)
            \right)
        } + 2\\
        &= 
        \sqrt{
            C \beta_n n \left(
                \left\lceil \frac{n}{i} \right\rceil  \gamma_i 
       +
       \frac{i}{2} \left\lceil \frac{n}{i} \right\rceil \phi \left( \sigma^{-2} \epsilon \sqrt{\frac{C^1_{\epsilon, i}}{i}} \right)
            \right)
        } + 2\\
        &= 
        \sqrt{
            C \beta_n n \left(
                \left\lceil \frac{n}{i} \right\rceil  \gamma_i 
       +
       \frac{i}{2} \left\lceil \frac{n}{i} \right\rceil \phi \left( \sigma^{-2} \epsilon \sqrt{\frac{\frac{1}{6} \frac{T^2}{n^2} i^2 (i^2 - 1)}{i}} \right)
            \right)
        } + 2. \numberthis \label{formula: in sq uniform, case1 bound1}
    \end{align*}
    For the squared exponential kernel, the maximum space information gain $\gamma_i$ is $\tilde{O}(1)$ \citep{Srinivas_et_al_2010}.
    By substituting this result into \eqref{formula: in sq uniform, case1 bound1} and simplifying it, we get the following.
    \begin{align*}
        R_n
        &\le
        \sqrt{
            C \beta_n n \left(
                \left\lceil \frac{n}{i} \right\rceil  \gamma_i 
       +
       \frac{i}{2} \left\lceil \frac{n}{i} \right\rceil \phi \left( \sigma^{-2} \epsilon \sqrt{\frac{\frac{1}{6} \frac{T^2}{n^2} i^2 (i^2 - 1)}{i}} \right)
            \right)
        } + 2\\
        &\le
        C_n
        n
        \sqrt{
            \frac{1}{i}
            +
            i
            \frac{1}{i}
            \phi\left(
                \sqrt{
                    \epsilon^2 \frac{T^2}{n^2} i^3
                }
            \right)
        }\\
        &\le
        C_n
        n
        \sqrt{
            \frac{1}{i}
            +
            \phi\left(
                \frac{\epsilon T}{n}
                i^{\frac{3}{2}}
            \right)
        }.
    \end{align*}
    Here, we introduce the constant $C_n$ satisfying $C_n = \tilde{O}(1)$. As a result, we get the following.
    \begin{align*}
        R_n 
        &= \tilde{O}
        \left(
            n \sqrt{
                \frac{1}{i}
                +
                \phi \left(
                    \frac{\epsilon T}{n} i^{\frac{3}{2}} 
                \right)
            }
        \right). \numberthis \label{formula: in sq uniform, case1 bound2}
    \end{align*}
    We minimize the RHS with respect to $i \in [n]$.
    Recall that 
    \begin{align*}
    \phi(x) = \min \left( x, \log x + \frac{1}{x} \right).
    \end{align*}
    This function takes the value of $x$ when $x \le 1$, 
    but takes the value of $\log x + \frac{1}{x}$ when $1 \le x$.
    We get the the following.
    \begin{align*}
    &\min_{
        i \in [n]
    } 
    \left(\frac{1}{i}
                +
                \phi \left(
                    \frac{\epsilon T}{n} i^{\frac{3}{2}} 
                \right)
    \right)\\
    &= \min 
    \left(
        \min_{\frac{\epsilon T}{n} i^{\frac{3}{2}} \le 1} 
        \left( \frac{1}{i}
            +
            \frac{\epsilon T}{n} i^{\frac{3}{2}} \right),
        \min_{1 \le \frac{\epsilon T}{n} i^{\frac{3}{2}}} 
        \left( \frac{1}{i}
            +
            \log \left(\frac{\epsilon T}{n} i^{\frac{3}{2}} \right)
            +
            \left( \frac{\epsilon T}{n} i^{\frac{3}{2}} \right)^{-1} \right)
    \right)
    \end{align*}
    First, we consider 
    $
    \min_{\frac{\epsilon T}{n} i^{\frac{3}{2}} \le 1} 
        \left( \frac{1}{i}
            +
            \frac{\epsilon T}{n} i^{\frac{3}{2}} \right)
    $.
    Since $1 > \epsilon T$ in both {\bf Case 1} and {\bf Case 2}, $\left( \frac{n}{\epsilon T} \right)^{\frac{2}{5}} \ge 1$ holds.
    In {\bf Case 1}, the following holds.
    \begin{align*}
    &\epsilon T < n^{- \frac{3}{2}}\\
    &\Leftrightarrow \frac{1}{\epsilon T} > n^\frac{3}{2}\\
    &\Leftrightarrow \frac{n}{\epsilon T} > n^\frac{5}{2}\\
    &\Leftrightarrow \left(
        \frac{n}{\epsilon T}
    \right)^\frac{2}{5} > n.
    \end{align*}
    Therefore, in {\bf Case 1}, we bound the minimum of
    $
    \frac{1}{i}
    +
    \frac{\epsilon T}{n} i^\frac{3}{2}
    $
    by its value when
    $i = n$ as follows.
    \begin{align*}
        \min_{\frac{\epsilon T}{n} i^{\frac{3}{2}} \le 1} 
        \left( \frac{1}{i}
            +
            \frac{\epsilon T}{n} i^{\frac{3}{2}} 
        \right)
        &=
        \tilde{O} \left(
            \frac{1}{n}
            +
            \frac{\epsilon T}{n} n^\frac{3}{2}
        \right)\\
        &=
        \tilde{O} \left(
            \frac{1}{n}
            +
            \frac{n^{-\frac{3}{2}}}{n} n^\frac{3}{2}
        \right)\\
        &=
        \tilde{O} \left( \frac{1}{n} \right).
    \end{align*}
    On the other hand, in {\bf Case 2}, the following holds.
    \begin{align*}
        &n^{- \frac{3}{2}}\le \epsilon T < 1\\
        &\Leftrightarrow 1 <
        \left(
            \frac{n}{\epsilon T}
        \right)^\frac{2}{5} \le n.
    \end{align*}
    In addition,
    since $
    \left(
        \frac{n}{\epsilon T}
    \right)^\frac{2}{5}
    $ is less than
    $
    \left(
        \frac{n}{\epsilon T}
    \right)^\frac{2}{3}
    $,
    $
    i = 
    \left(
        \frac{n}{\epsilon T}
    \right)^\frac{2}{5}
    $ satisfies
    $\frac{\epsilon T}{n} i^{\frac{3}{2}} \le 1$.
    Therefore, in {\bf Case 2}, we bound the minimum of
    $
    \frac{1}{i}
    +
    \frac{\epsilon T}{n} i^\frac{3}{2}
    $
    by its value when
    $
    i =
    \left(
        \frac{n}{\epsilon T}
    \right)^\frac{2}{5}
    $
    as follows.
    \begin{align*}
        \min_{\frac{\epsilon T}{n} i^{\frac{3}{2}} \le 1}
    \left(
        \frac{1}{i}
        +
        \frac{\epsilon T}{n}
        i^\frac{3}{2}
    \right)
    =
    \tilde{O}\left(
        \left(
            \frac{\epsilon T}{n}
        \right)^\frac{2}{5}
    \right).
    \end{align*}
    As a result, the minimum of
    $
    \frac{1}{i}
    +
    \frac{\epsilon T}{n} i^\frac{3}{2}
    $
    over
    $
    \frac{\epsilon T}{n} i^{\frac{3}{2}} \le 1
    $
    is bounded as follows.
    In {\bf Case 1}, 
    \begin{align*}
        \min_{\frac{\epsilon T}{n} i^{\frac{3}{2}} \le 1}
    \left(
        \frac{1}{i}
        +
        \frac{\epsilon T}{n}
        i^\frac{3}{2}
    \right)
    &=
    \tilde{O}
    \left(
        \frac{1}{n}
    \right).
    \end{align*}
    In {\bf Case 2},
    \begin{align*}
        \min_{\frac{\epsilon T}{n} i^{\frac{3}{2}} \le 1}
    \left(
        \frac{1}{i}
        +
        \frac{\epsilon T}{n}
        i^\frac{3}{2}
    \right)
    &=
    \tilde{O}
    \left(
        \left(
            \frac{\epsilon T}{n}
        \right)
        ^\frac{2}{5}
    \right).
    \end{align*}
    Second, we consider
    $
    \min_{
        1 
        \le 
        \frac{\epsilon T}{n} i^{\frac{3}{2}}
    } 
    \left( 
        \frac{1}{i}
        +
        \log \left(
            \frac{\epsilon T}{n} 
            i^{\frac{3}{2}} 
        \right)
        +
        \left( 
            \frac{\epsilon T}{n} 
            i^{\frac{3}{2}} 
        \right)^{-1} 
    \right).
    $
    In both cases, we bound the minimum by the value when $i = n$ as follows.
    \begin{align*}
        &\min_{
        1 
        \le 
        \frac{\epsilon T}{n} i^{\frac{3}{2}}
        }
        \left(
            \frac{1}{i}
            +
            \log\left(
                \frac{\epsilon T}{n} i^\frac{3}{2}
            \right)
            +
            \left(
                \frac{\epsilon T}{n} i^\frac{3}{2}
            \right)^{-1}
        \right)\\
        &=
        \tilde{O}
        \left(
            \frac{1}{n}
            +
            \log \left(
                \frac{\epsilon T}{n} n^\frac{3}{2}
            \right)
            +
            \left(
                \frac{\epsilon T}{n} n^\frac{3}{2}
            \right)^{-1}
        \right)\\
        &=
        \tilde{O}
        \left(
            \frac{1}{n}
            +
            1
            +
            \frac{1}{\epsilon T \sqrt{n}}
        \right)\\
        &=
        \tilde{O} \left(
            1 + \frac{1}{\epsilon T}
        \right).
    \end{align*}
    At the end of these cases {\bf Case 1} and {\bf Case 2}, the minimum of \eqref{formula: in sq uniform, case1 bound2} over $i \in [n]$ is bounded as follows.
    In {\bf Case 1},
    \begin{align*}
        R_n &= \tilde{O} 
        \left( 
        n 
        \sqrt{
            \min 
            \left(
                \frac{1}{n},
                1 + \frac{1}{\epsilon T} 
            \right) 
        }
        \right)\\
        &= 
        \tilde{O} 
        \left( 
            n 
            \sqrt{
            \frac{1}{n}
            }
        \right)\\
        &= 
        \tilde{O} 
        \left( 
            \sqrt{n}
        \right). \numberthis \label{formula: in case 1, reg}
    \end{align*}
    This is the end of {\bf Case 1}.
    On the other hand, in {\bf Case 2},
    \begin{align*}
        R_n &= \tilde{O} 
        \left( 
        n 
        \sqrt{
            \min 
            \left(
                \left(
                    \frac{\epsilon T}{n}
                \right)^\frac{2}{5},
                1 + \frac{1}{\epsilon T} 
            \right) 
        }
        \right)\\
        &= 
        \tilde{O} 
        \left( 
            n
            \sqrt{
                \left(
                    \frac{\epsilon T}{n}
                \right)^\frac{2}{5}
            }
        \right)\\
        &= 
        \tilde{O} 
        \left( 
            n^\frac{4}{5} T^\frac{1}{5} \epsilon^\frac{1}{5} 
        \right). \numberthis \label{formula: in case 2, reg}
    \end{align*}
    This is the end of {\bf Case 2}.
    \\
    {\bf Case 3 and Case 4:} 
    Consider the case of $1 \le \epsilon T \le n$ ({\bf Case 3}) 
    and $n < \epsilon T$ ({\bf Case 4}).
    We minimize the upper bound of the cumulative regret \eqref{formula: in sq uniform, reg bound} with respect to $i \in [n]$.
    In {\bf Case 3}, we divide the range of $i$ into two sections: 
    $1 \le i \le \frac{n}{\epsilon T}$
    and
    $\frac{n}{\epsilon T} \le i \le n$.
    We consider the minimum of \eqref{formula: in sq uniform, reg bound} in these two ranges, and then combine those results.
    On the other hand, in {\bf Case 4}, since $\frac{n}{\epsilon T} < 1$, we cannot consider the range $1 \le i \le \frac{n}{\epsilon T}$.
    We minimize \eqref{formula: in sq uniform, reg bound} directly.
    \\
    {\bf Case 3:}
    In this case, 
    since $1 \le \epsilon T \le n$,
    there is a constant $i_0 \in [n]$ which satisfies $\frac{n}{\epsilon T} < i_0 \le \frac{n}{\epsilon T} + 1$,
    that is,
    we define $i_0$ by
    \begin{align}
        i_0 = \left\lfloor \frac{n}{\epsilon T} \right\rfloor + 1. \label{formula: in sq uniform, case2 def i_0}
    \end{align}
    We will discuss later the case where there does not exist such a constant $i_0$
    in {\bf Case 4}.
    We divide the range of $i$ into two sections:
    $1 \le i \le i_0$
    and
    $i_0 \le i \le n$.
    \\
    {\bf Case 3-1:} If $1 \le i \le i_0$, $C_{\epsilon, \ct_{i,j}}$ in the sum of \eqref{formula: in sq uniform, reg bound} are equal to $C^1_{\epsilon, i}$.
    By substituting \eqref{formula: in sq uniform, c bound1} into \eqref{formula: in sq uniform, reg bound}, we get the following.
    \begin{align*}
        R_n &\le
        \sqrt{
            C \beta_n n \left(
                \left\lceil \frac{n}{i} \right\rceil  \gamma_i 
       +
       \frac{i}{2} \sum_{j=1}^{\lceil n / i \rceil} \phi \left( \sigma^{-2} \epsilon \sqrt{\frac{C_{\epsilon, \ct_{i,j}}}{i}} \right)
            \right)
        } +2 \\
        &= 
        \sqrt{
            C \beta_n n \left(
                \left\lceil \frac{n}{i} \right\rceil  \gamma_i 
       +
       \frac{i}{2} \sum_{j=1}^{\lceil n / i \rceil} \phi \left( \sigma^{-2} \epsilon \sqrt{\frac{C^1_{\epsilon, i}}{i}} \right)
            \right)
        } + 2\\
        &= 
        \sqrt{
            C \beta_n n \left(
                \left\lceil \frac{n}{i} \right\rceil  \gamma_i 
       +
       \frac{i}{2} \left\lceil \frac{n}{i} \right\rceil \phi \left( \sigma^{-2} \epsilon \sqrt{\frac{C^1_{\epsilon, i}}{i}} \right)
            \right)
        } + 2\\
        &= 
        \sqrt{
            C \beta_n n \left(
                \left\lceil \frac{n}{i} \right\rceil  \gamma_i 
       +
       \frac{i}{2} \left\lceil \frac{n}{i} \right\rceil \phi \left( \sigma^{-2} \epsilon \sqrt{\frac{\frac{1}{6} \frac{T^2}{n^2} i^2 (i^2 - 1)}{i}} \right)
            \right)
        } + 2. \numberthis \label{formula: in sq uniform, case1 bound3-1}
    \end{align*}
    For the squared exponential kernel, the maximum space information gain $\gamma_i$ is $\tilde{O}(1)$ \citep{Srinivas_et_al_2010}.
    By substituting this result into \eqref{formula: in sq uniform, case1 bound3-1} and simplifying it, we get the following.
    \begin{align*}
        R_n
        &\le
        \sqrt{
            C \beta_n n \left(
                \left\lceil \frac{n}{i} \right\rceil  \gamma_i 
       +
       \frac{i}{2} \left\lceil \frac{n}{i} \right\rceil \phi \left( \sigma^{-2} \epsilon \sqrt{\frac{\frac{1}{6} \frac{T^2}{n^2} i^2 (i^2 - 1)}{i}} \right)
            \right)
        } + 2\\
        &\le
        C_n
        n
        \sqrt{
            \frac{1}{i}
            +
            i
            \frac{1}{i}
            \phi\left(
                \sqrt{
                    \epsilon^2 \frac{T^2}{n^2} i^3
                }
            \right)
        }\\
        &\le
        C_n
        n
        \sqrt{
            \frac{1}{i}
            +
            \phi\left(
                \frac{\epsilon T}{n}
                i^{\frac{3}{2}}
            \right)
        }.
    \end{align*}
    Here, we introduce the constant $C_n$ satisfying $C_n = \tilde{O}(1)$. As a result, we get the following.
    \begin{align*}
        R_n 
        &= \tilde{O}
        \left(
            n \sqrt{
                \frac{1}{i}
                +
                \phi \left(
                    \frac{\epsilon T}{n} i^{\frac{3}{2}} 
                \right)
            }
        \right). \numberthis \label{formula: in case 3-1, reg}
    \end{align*}
    We minimize the RHS with respect to $i \le i_0$.
    Recall that 
    \begin{align*}
    \phi(x) = \min \left( x, \log x + \frac{1}{x} \right).
    \end{align*}
    This function takes the value of $x$ when $x \le 1$, 
    but takes the value of $\log x + \frac{1}{x}$ when $1 \le x$.
    We get the the following.
    \begin{align*}
    &\min_{
        1 \le i \le i_0
    } 
    \left(\frac{1}{i}
                +
                \phi \left(
                    \frac{\epsilon T}{n} i^{\frac{3}{2}} 
                \right)
    \right)\\
    &= \min 
    \left(
        \min_{\frac{\epsilon T}{n} i^{\frac{3}{2}} \le 1} 
        \left( \frac{1}{i}
            +
            \frac{\epsilon T}{n} i^{\frac{3}{2}} \right),
        \min_{1 \le \frac{\epsilon T}{n} i^{\frac{3}{2}}} 
        \left( \frac{1}{i}
            +
            \log \left(\frac{\epsilon T}{n} i^{\frac{3}{2}} \right)
            +
            \left( \frac{\epsilon T}{n} i^{\frac{3}{2}} \right)^{-1} \right)
    \right)
    \end{align*}
    In this case, since
    $1 \le \frac{n}{\epsilon T}$,
    $1 \le
    \left(
        \frac{n}{\epsilon T}
    \right)^\frac{2}{5}
    \le
    \left(
        \frac{n}{\epsilon T}
    \right)^\frac{2}{3}
    \le
    \frac{n}{\epsilon T}
    $.
    The$i = 
    \left(
        \frac{n}{\epsilon T}
    \right)^\frac{2}{5}
    $ satisfies
    $\frac{\epsilon T}{n} i^{\frac{3}{2}} \le 1$.
    Therefore, in {\bf Case 3-1}, we bound the minimum of
    $
    \frac{1}{i}
    +
    \frac{\epsilon T}{n} i^\frac{3}{2}
    $
    by its value when
    $
    i =
    \left(
        \frac{n}{\epsilon T}
    \right)^\frac{2}{5}
    $
    as follows.
    \begin{align*}
        \min_{\frac{\epsilon T}{n} i^{\frac{3}{2}} \le 1}
    \left(
        \frac{1}{i}
        +
        \frac{\epsilon T}{n}
        i^\frac{3}{2}
    \right)
    =
    \tilde{O}\left(
        \left(
            \frac{\epsilon T}{n}
        \right)^\frac{2}{5}
    \right). \numberthis \label{formula: in case 3-1, left bound}
    \end{align*}
    We combine \eqref{formula: in case 3-1, reg} and \eqref{formula: in case 3-1, left bound} and get the following in {\bf Case 3-1}.
    \begin{align*}
        R_n
        &=
        \tilde{O}
        \left(
            n \sqrt{
            \left(
                \frac{\epsilon T}{n}
            \right)^\frac{2}{5}
            }
        \right)\\
        &=
        \tilde{O}
        \left(
            n^\frac{4}{5}
            T^\frac{1}{5}
            \epsilon^\frac{1}{5}
        \right).
    \end{align*}
    \\
    {\bf Case 3-2}
    On the other hand, if $i_0 \le i \le n$, all $C_{\epsilon, \ct_{i, j}}$ in the sum of \eqref{formula: in sq uniform, reg bound} are equal to $C^2_{\epsilon, i}$.
    By substituting \eqref{formula: in sq uniform, c bound2} in to \eqref{formula: in sq uniform, reg bound}, we get the following.
    \begin{align*}
        R_n &\le
        \sqrt{
            C \beta_n n \left(
                \left\lceil \frac{n}{i} \right\rceil  \gamma_i 
       +
       \frac{i}{2} \sum_{j=1}^{\lceil n / i \rceil} \phi \left( \sigma^{-2} \epsilon \sqrt{\frac{C_{\epsilon, \ct_{i,j}}}{i}} \right)
            \right)
        } +2 \\
        &= 
        \sqrt{
            C \beta_n n \left(
                \left\lceil \frac{n}{i} \right\rceil  \gamma_i 
       +
       \frac{i}{2} \sum_{j=1}^{\lceil n / i \rceil} \phi \left( \sigma^{-2} \epsilon \sqrt{\frac{C^2_{\epsilon, i}}{i}} \right)
            \right)
        } + 2\\
        &= 
        \sqrt{
            C \beta_n n \left(
                \left\lceil \frac{n}{i} \right\rceil  \gamma_i 
       +
       \frac{i}{2} \left\lceil \frac{n}{i} \right\rceil \phi \left( \sigma^{-2} \epsilon \sqrt{\frac{C^2_{\epsilon, i}}{i}} \right)
            \right)
        } + 2\\
        &= 
        \sqrt{
            C \beta_n n \left(
                \left\lceil \frac{n}{i} \right\rceil  \gamma_i 
       +
       \frac{i}{2} A
            \right)
        } + 2, \numberthis \label{formula: in sq uniform, case2-2 bound1}
    \end{align*}
    where $A$ is defined as follows.
    \begin{align*}
        A = 
        \left\lceil \frac{n}{i} \right\rceil \phi \left( \sigma^{-2} \epsilon \sqrt{
       \frac{
       \frac{T}{\epsilon n}
                        \left(
                            \frac{1}{2} \left( \frac{n}{T \epsilon} \right)^3 
                            - \frac{4}{3} i \left( \frac{n}{T \epsilon} \right)^2
                            + \left( i^2 - \frac{1}{2} \right)\frac{n}{T \epsilon}
                            + \frac{i}{3} 
                        \right)
       }
       {i}} \right)
    \end{align*}
    For the squared exponential kernel, the maximum space information gain $\gamma_i$ is $\tilde{O}(1)$ \citep{Srinivas_et_al_2010}.
    By substituting this result into \eqref{formula: in sq uniform, case2-2 bound1}, we get the following.
    \begin{align*}
        R_n
        &=
        \tilde{O}
        \left(
            n \sqrt{
                \frac{1}{i}
                +
                \phi \left(
                    \sqrt{
                        i
                        +
                        \frac{1}{i}
                        \left(
                            \frac{n}{\epsilon T}
                        \right)^2
                        +
                        \frac{\epsilon T}{n}
                    }
                \right)
            }
        \right)\\
        &:= \tilde{O} 
        \left(
            n \sqrt{\psi(i)}
        \right). \numberthis \label{formula: in case3-2, def of psi}
    \end{align*}
    To minimize the RHS in \eqref{formula: in case3-2, def of psi} with respect to $i_0 \le i \le n$, we minimize the function $\psi(i)$ with respect to $i_0 \le i \le n$.
    $\psi(i)$ contains part
    $
    \phi
    \left(
                    \sqrt{
                        i
                        +
                        \frac{1}{i}
                        \left(
                            \frac{n}{\epsilon T}
                        \right)^2
                        +
                        \frac{\epsilon T}{n}
                    }
                \right)$.
    This part can be expanded by the definition of the function $\phi(x)$:
    \begin{align*}
        \phi(x) = \min
        \left(
        x,
        \log x + \frac{1}{x}
        \right),
    \end{align*}
    and
    the fact that
    \begin{align*}
        i + \frac{1}{i} \left( \frac{n}{\epsilon T} \right)^2
        &\ge 2 \sqrt{i \frac{1}{i} \left( \frac{n}{\epsilon T} \right)^2} \numberthis \label{formula: in sq uniform, case2 lem1}\\
        &= \frac{2n}{\epsilon T}\\
        &\ge i_0 \numberthis \label{formula: in sq uniform, case2 lem2}\\
        &\ge 1.
    \end{align*}
    We get the following.
    \begin{align*}
        &\phi \left(
                    \sqrt{
                        i
                        +
                        \frac{1}{i}
                        \left(
                            \frac{n}{\epsilon T}
                        \right)^2
                        +
                        \frac{\epsilon T}{n}
                    }
                \right)\\
        &=
        \log
                \left(
                    \sqrt{
                        i
                        +
                        \frac{1}{i}
                        \left(
                            \frac{n}{\epsilon T}
                        \right)^2
                        +
                        \frac{\epsilon T}{n}
                    }
                \right)  
                +
                \left(
                    \sqrt{
                        i
                        +
                        \frac{1}{i}
                        \left(
                            \frac{n}{\epsilon T}
                        \right)^2
                        +
                        \frac{\epsilon T}{n}
                    }
                \right)^{-1}.
    \end{align*}
    Then, $\psi(i)$ becomes the following.
    \begin{align*}
        \psi(i)
        &=
        \frac{1}{i}
        +
        \log
                \left(
                    \sqrt{
                        i
                        +
                        \frac{1}{i}
                        \left(
                            \frac{n}{\epsilon T}
                        \right)^2
                        +
                        \frac{\epsilon T}{n}
                    }
                \right)  
                +
                \left(
                    \sqrt{
                        i
                        +
                        \frac{1}{i}
                        \left(
                            \frac{n}{\epsilon T}
                        \right)^2
                        +
                        \frac{\epsilon T}{n}
                    }
                \right)^{-1}.
    \end{align*}
    The minimum of $\psi(i)$
    over $i_0 \le i \le n $
    satisfies that 
    $
    \min_{i_0 \le i \le n} \psi(i)
    \le
    \psi \left( i_0 \right)
    $.
    $\psi \left( i_0 \right)$ is
    \begin{align*}
        &\psi \left( i_0 \right)\\
        &=
        \psi \left(
                \left\lfloor \frac{n}{\epsilon T} \right\rfloor + 1
        \right)\\
        &=
        \tilde{O}
        \left(
            \frac{\epsilon T}{n}
            +
            \log \sqrt{
                \frac{n}{\epsilon T}
                +
                \frac{\epsilon T}{n}
                \frac{n^2}{\epsilon^2 T^2}
                +
                \frac{\epsilon T}{n}
            }
            +
            \frac{1}{
            \sqrt{
                \frac{n}{\epsilon T}
                +
                \frac{\epsilon T}{n}
                \frac{n^2}{\epsilon^2 T^2}
                +
                \frac{\epsilon T}{n}
            }
            }
        \right)\\
        &=
        \tilde{O}
        \left(
            \frac{
            \epsilon T}
            {n}
            +
            1
            +
            \frac{
            \left(
                \epsilon T
            \right)^\frac{1}{2}}
            {n^\frac{1}{2}}
            +
            \left(
                \frac{n}{\epsilon T}
            \right)^\frac{1}{2}
        \right)\\
        &=
        \tilde{O}
        \left(
            1 
            +
            \frac{\epsilon T}{n}
            +
            \left(
                \frac{\epsilon T}{n}
            \right)^\frac{1}{2}
            +
            \left(
                \frac{n}{\epsilon T}
            \right)^\frac{1}{2}
        \right). \numberthis \label{formula: in case 3-2, psi i_0 bound}
    \end{align*}
    In {\bf Case 3}, since we know that $1 \le \epsilon T \le n$,
    we get that $\frac{\epsilon T}{n} \le 1$.
    Therefore, we get the following in \eqref{formula: in case 3-2, psi i_0 bound}.
    \begin{align*}
        \psi(i_0)
        &=
        \tilde{O}
        \left(
            1 
            +
            \frac{\epsilon T}{n}
            +
            \left(
                \frac{\epsilon T}{n}
            \right)^\frac{1}{2}
            +
            \left(
                \frac{n}{\epsilon T}
            \right)^\frac{1}{2}
        \right)\\
        &=
        \tilde{O}
        \left(
            1 
            +
            \left(
                \frac{n}{\epsilon T}
            \right)^\frac{1}{2}
        \right).
    \end{align*}
    By using this result, we get the following.
    \begin{align*}
        \min_{i_0 \le i \le n} \psi(i)
        &\le
        \psi \left( i_0 \right)\\
        &=
        \tilde{O}
        \left(
            1 
            +
            \left(
                \frac{n}{\epsilon T}
            \right)^\frac{1}{2}
        \right). \numberthis \label{formula: in case 3-2, psi bound}
    \end{align*}
    Combining \eqref{formula: in case3-2, def of psi} and \eqref{formula: in case 3-2, psi bound}, the achieved minimum regret over $i_0 \le i \le n$ is the following.
    \begin{align*}
        R_n
        &=
        \tilde{O}
        \left(
            n
            \sqrt{
            1 
            +
            \left(
                \frac{n}{\epsilon T}
            \right)^\frac{1}{2}
            }
        \right)\\
        &= 
        \tilde{O}
        \left(
            n
            \left(
            1 
            +
            \left(
                \frac{n}{\epsilon T}
            \right)^\frac{1}{4}
            \right)
        \right). \numberthis \label{formula: in case 3-2, reg}
    \end{align*}
    As a result, combining the {\bf Case 3-1} \eqref{formula: in case 3-1, reg} and the {\bf Case 3-2} \eqref{formula: in case 3-2, reg}, the achieved minimum regret over $1 \le i \le n$ in {\bf Case 3} is the following.
    \begin{align*}
        R_n &= \tilde{O}
        \left(
            \min \left(
            n^\frac{4}{5}
            T^\frac{1}{5}
            \epsilon^\frac{1}{5}
            ,
            n\left(
            1 
            +
            \left(
                \frac{n}{\epsilon T}
            \right)^\frac{1}{4}
            \right)
            \right)
        \right)\\
        &= \tilde{O}
        \left(
            n^\frac{4}{5}
            T^\frac{1}{5}
            \epsilon^\frac{1}{5}
        \right), \numberthis \label{formula: in case 3, reg}
    \end{align*}
    where the last equality follows from the fact that, since 
    $1 \le \epsilon T \le n$ in {\bf Case 3},
    \begin{align*}
        n^\frac{4}{5}
        T^\frac{1}{5}
        \epsilon^\frac{1}{5}
        &=
        n^\frac{4}{5}
        (\epsilon T)^\frac{1}{5}\\
        &\le
        n^\frac{4}{5}
        n^\frac{1}{5}\\
        &=
        n\\
        &<
        n\left(
            1 
            +
            \left(
                \frac{n}{\epsilon T}
            \right)^\frac{1}{4}
        \right).
    \end{align*}
    This is the end of {\bf Case 3}.
    \\
    {\bf Case 4:}
    In this case, we know that $n < \epsilon T$.
    Therefore, for any $1 \le i \le n$, the inequality of $i > \frac{n}{\epsilon T}$ holds.
    In this case, all $C_{\epsilon, \ct_{i, j}}$ in the sum of \eqref{formula: in sq uniform, reg bound} are equal to $C^2_{\epsilon, i}$.
    We can obtain the same regret bound
    as \eqref{formula: in case3-2, def of psi} with
    $\min_{1 \le i \le n} \psi(i) \le \psi(1)$.
    Recall that the obtained regret bound is as follows.
    \begin{align*}
        R_n
        &=
        \tilde{O}
        \left(
            n \sqrt{
                \frac{1}{i}
                +
                \phi \left(
                    \sqrt{
                        i
                        +
                        \frac{1}{i}
                        \left(
                            \frac{n}{\epsilon T}
                        \right)^2
                        +
                        \frac{\epsilon T}{n}
                    }
                \right)
            }
        \right)\\
        &:= \tilde{O} 
        \left(
            n \sqrt{\psi(i)}
        \right),
    \end{align*}
    where the function $\psi(i)$ is calculated as follows.
    \begin{align*}
        \psi(i)
        &=
        \frac{1}{i}
        +
        \log
                \left(
                    \sqrt{
                        i
                        +
                        \frac{1}{i}
                        \left(
                            \frac{n}{\epsilon T}
                        \right)^2
                        +
                        \frac{\epsilon T}{n}
                    }
                \right)  
                +
                \left(
                    \sqrt{
                        i
                        +
                        \frac{1}{i}
                        \left(
                            \frac{n}{\epsilon T}
                        \right)^2
                        +
                        \frac{\epsilon T}{n}
                    }
                \right)^{-1}.
    \end{align*}
    The $\psi(1)$ is
    \begin{align*}
        \psi(1)
        &=
        \tilde{O}
        \left(
            1
            +
            \log 
            \sqrt{
                1
                +
                \frac{n^2}{\epsilon^2 T^2}
                +
                \frac{\epsilon T}{n}
            }
            +
            \frac{1}{
            \sqrt{
                1
                +
                \frac{n^2}{\epsilon^2 T^2}
                +
                \frac{\epsilon T}{n}
            }
            }
        \right)\\
        &=
        \tilde{O}
        \left(
            1
            +
            \frac{\epsilon T}{n}
            +
            \left(
                \frac{n}{\epsilon T}
            \right)^\frac{1}{2}
        \right)\\
        &=
        \tilde{O}
        \left(
            1
            +
            \frac{\epsilon T}{n}
        \right),
    \end{align*}
    where the last inequality follows from the fact that
    $n < \epsilon T$.
    Therefore, in {\bf Case 4}, the achieved regret upper bound is as follows.
    \begin{align*}
        R_n
        &=
        \tilde{O}
        \left(
            n \sqrt{
            1
            +
            \frac{\epsilon T}{n}
            }
        \right)\\
        &=
        \tilde{O}
        \left(
            n \left(
            1
            +
            \left( \frac{\epsilon T}{n} \right)^\frac{1}{2}
            \right)
        \right). \numberthis \label{formula: in case 4, reg}
    \end{align*}
    This is the end of {\bf Case 4}.
    \\
    Combining all results of 
    {\bf Case 1} \eqref{formula: in case 1, reg}, 
    {\bf Case 2} \eqref{formula: in case 2, reg}, 
    {\bf Case 3} \eqref{formula: in case 3, reg}, 
    and {\bf Case 4} \eqref{formula: in case 4, reg}, we get the following regret upper bound.
    \begin{align*}
        R_n
        =
        \begin{cases}
            \tilde{O}
            \left(
                \sqrt{n}
            \right)
            & (\epsilon T < n^{- \frac{3}{2}}) \\
            \tilde{O}
            \left(
                n^\frac{4}{5}
                T^\frac{1}{5}
                \epsilon^\frac{1}{5}
            \right)
            & (n^{- \frac{3}{2}} \le \epsilon T \le n)\\
            \tilde{O}
            \left(
            n \left(
            1
            +
            \left( \frac{\epsilon T}{n} \right)^\frac{1}{2}
            \right)
            \right)
            & (n < \epsilon T).
        \end{cases}
    \end{align*}
    This completes the proof.
\end{proof}
\begin{lem} \label{lem: sq biased}
    Suppose that the space kernel is the squared exponential kernel and evaluation time $\{ t_i \}_{i=1}^n$ are extremely biased, that is,
    $t_i = 0$ when $i \neq n_0$ and $t_{n_0} = T$.
    Then, 
    \begin{align*}
    R_n = \tilde{O} 
    \left(
        \sqrt{
            n
        }
    \right),
    \end{align*}
    with high probability.
\end{lem}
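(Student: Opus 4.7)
The plan is to apply Theorem~\ref{thm: main} with a carefully chosen partition $\{d_j\}$ that respects the jump point $n_0$, exploiting the fact that in the extremely biased setting the times $\tau_k$ take only two distinct values: $\tau_k = 0$ for $k < n_0$ and $\tau_k = T$ for $k \ge n_0$. The crucial observation, already embedded in Lemma~\ref{lem: bia uni}, is that $C_{\epsilon, \ct'}$ vanishes whenever all elements of $\ct'$ share the same time value. Hence by cutting the partition exactly at $n_0$, every block will be temporally constant.

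Concretely, I would take $N = 2$ and $d_0 = 0$, $d_1 = n_0 - 1$, $d_2 = n$, so that $\ct_1 = \{\tau_1, \ldots, \tau_{n_0 - 1}\}$ consists entirely of zeros and $\ct_2 = \{\tau_{n_0}, \ldots, \tau_n\}$ consists entirely of $T$'s. (If $n_0 = 1$ or $n_0 = n$, take instead the trivial partition $N=1$, $d_0 = 0$, $d_1 = n$, since then the whole sequence is temporally constant.) For each block $i$, every pair $(\tau_j, \tau_k)$ with $\tau_j, \tau_k \in \ct_i$ satisfies $(\tau_j - \tau_k)^2 = 0$, so $C_{\epsilon, \ct_i} = 0$ and consequently $\phi(\sigma^{-2}\epsilon\sqrt{C_{\epsilon,\ct_i}/M_i}) = \phi(0) = 0$. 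This kills the second term in the bound \eqref{formula: reg upper bound} of Theorem~\ref{thm: main} entirely.

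What remains is the space-information gain term: with $M = \max(n_0 - 1, n - n_0 + 1) \le n$ and $N \le 2$, Theorem~\ref{thm: main} yields
\begin{equation*}
  R_n \;\le\; \sqrt{C\beta_n \, n \cdot N\gamma_M} + 2 \;\le\; \sqrt{2C\beta_n \, n\, \gamma_n} + 2
\end{equation*}
with high probability. The standard bound for the squared exponential kernel \citep{Srinivas_et_al_2010} gives $\gamma_n = O((\log n)^{d+1})$, and $\beta_n$ grows polylogarithmically by construction. Substituting these into the display yields $R_n = \tilde{O}(\sqrt{n})$, as claimed.

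There is essentially no hard step in this argument: the whole proof amounts to selecting the partition that makes the temporal deviation term zero, and then invoking the known time-invariant bound on $\gamma_n$. The only thing to be careful about is the edge case $n_0 \in \{1, n\}$, which is handled by falling back to a single-block partition; both variants produce the same asymptotic conclusion.
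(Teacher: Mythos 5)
Your proposal is correct in substance and follows the same overall strategy as the paper: apply Theorem~\ref{thm: main} with a partition adapted to the single jump at $n_0$, use Lemma~\ref{lem: bia uni} (or direct inspection) to control the evaluation time uniformity of each block, and then invoke $\gamma_n = \tilde{O}(1)$ for the squared exponential kernel. The difference is only in the choice of partition. The paper uses three blocks with a constant-width middle block $\ct_2 = \{\tau_{n_0-\alpha},\ldots,\tau_{n_0+\alpha}\}$ (with $\alpha=2$) straddling the jump, so that $C_{\epsilon,\ct_2} = 2(\alpha^2-1)\min(\epsilon^{-2},T^2)$ is nonzero but contributes only an $O(1)$ additive term inside the bound; you instead cut exactly at the jump, making every block temporally constant and killing the deviation term outright. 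Your version is cleaner and yields the same $\tilde{O}(\sqrt{n})$ conclusion.

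One genuine (though easily repaired) slip: your fallback to the trivial single-block partition is wrong for $n_0 = n$. In that case $\tau_1 = \cdots = \tau_{n-1} = 0$ and $\tau_n = T$, so the sequence is \emph{not} temporally constant; Lemma~\ref{lem: bia uni} with $k_0=0$, $i=n$ gives $C_{\epsilon,\ct_1} = 2(n-1)\min(\epsilon^{-2},T^2)$, and the resulting term $\tfrac{n}{2}\phi\bigl(\sigma^{-2}\epsilon\sqrt{C_{\epsilon,\ct_1}/n}\bigr)$ can be $\Theta(n)$ when $\epsilon T \ge 1$, which would only give $R_n = \tilde{O}(n)$. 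The fix is simply not to fall back there: the two-block partition $d_1 = n_0 - 1 = n-1$ remains valid for $n_0 = n$ (it satisfies $0 < d_1 < n$ for $n \ge 2$), so the only case that truly needs the trivial partition is $n_0 = 1$, where the sequence is genuinely constant and $C_{\epsilon,\ct'}=0$ as you claim.
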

\begin{proof}
    We consider the following partition of
    $\{ 0, \ldots, n \}$:
    \begin{align*}
        \{ d_i \} = \{
            d_0 = 0,
            d_1 = n_0 - (\alpha + 1),
            d_2 = n_0 + \alpha,
            d_3 = n
        \},
    \end{align*}
    where $\alpha$ is a positive integer to be specified later,
    which is smaller than $n_0 - 1$ and $n - n_0$.
    This partition corresponds to the following finite subsets of $\ct$.
    \begin{align*}
        \ct_1 &= \{ \tau_1, \ldots, \tau_{n_0 - (\alpha + 1)} \}\\
        \ct_2 &= \{ \tau_{n_0 - \alpha}, \ldots, \tau_{n_0 + \alpha} \}\\
        \ct_3 &= \{ \tau_{n_0 + (\alpha + 1)}, \ldots, \tau_n \}.
    \end{align*}
    We set $\ct' = \ct_i$ in Lemma \ref{lem: bia uni} for each $i = 1, 2, 3$.
    Note that only $\ct_2$ contains $\tau_{n_0}$.
    In the extremely biased setting, we get the evaluation time uniformity
    $C_{\epsilon, \ct_i}$ is
    if $i = 1$ or $i = 3$
    \begin{align*}
        C_{\epsilon, \ct_i} = 0, \numberthis \label{formula: in sq biased, c1}
    \end{align*}
    and if $i = 2$
    \begin{align*}
        C_{\epsilon. \ct_i} =
        2
        (\alpha^2 - 1)
        \min \left(
            \frac{1}{\epsilon^2},
            T^2
        \right). \numberthis \label{formula: in sq biased, c2}
    \end{align*}
    Recall that the upper bound of the cumulative regret is given by \eqref{formula: reg upper bound}.
    By substituting \eqref{formula: in sq biased, c1} and \eqref{formula: in sq biased, c2} into \eqref{formula: in sq uniform, reg bound}, we get folloiwngs.
    \begin{align*}
        R_n
        &\le 
        \sqrt{
            C
            \beta_n
            n
            \left(
                N \gamma_M
                +
                \frac{1}{2}
                \sum_{i=1}^{N}
                M_i
                \phi \left(
                    \sigma^{-2}
                    \epsilon
                    \sqrt{
                        \frac{C_{\epsilon, \ct_i}}{M_i}
                    }
                \right)
            \right)
        } + 2\\
        &=
        \sqrt{
            C
            \beta_n
            n
            \left(
                3 \gamma_M
                +
                \frac{1}{2}
                \phi \left(
                    \sigma^{-2}
                    \epsilon
                    \sqrt{
                        \frac{C_{\epsilon, \ct_2}}{2 \alpha + 1}
                    }
                \right)
            \right)
        } + 2\\
        &=
        \sqrt{
            C
            \beta_n
            n
            \left(
                3 \gamma_M
                +
                \frac{1}{2}
                \phi \left(
                    \sigma^{-2}
                    \epsilon
                    \sqrt{
                        \frac{
                        2
                        (\alpha^2 - 1)
                        \min \left(
                            \frac{1}{\epsilon^2},
                            T^2
                        \right)
                        }
                        {2 \alpha + 1}
                    }
                \right)
            \right)
        } + 2. \numberthis \label{formula: in sq biased, reg bound}
    \end{align*}
    For the squared exponential kernel, the maximum space information gain $\gamma_i$ is $\tilde{O}(1)$ \citep{Srinivas_et_al_2010}.
    By substituting this result and $\alpha = 2$ into \eqref{formula: in sq biased, reg bound}
    and simplifying it, we get the following.
    \begin{align*}
        R_n
        &=
        \tilde{O}
        \left(
            \sqrt{
                n
                \left(
                    1
                    +
                    \min(1, \epsilon T)
                \right)
            }
        \right)\\
        &=\tilde{O}(\sqrt{n}).
    \end{align*}
    This completes the proof.
\end{proof}
\begin{lem} \label{lem: matern uniform}
    Suppose that the space kernel is the {\matern} kernel with parameter $\nu$ and evaluation time $\{ t_i \}_{i=1}^n$ are uniform, that is, $t_i = \frac{T}{n}$.
    Let $c = \frac{d(d+1)}{2 \nu + d(d+1)}$.
    Then, 
    if $\epsilon T < n^{-\frac{3}{2}}$, then
    \begin{align*}
        R_n = \tilde{O}(\sqrt{n^{1+c}},
    \end{align*}
    and if $n^{-\frac{3}{2}} \le \epsilon T \le n$, then
    \begin{align*}
        R_n = \tilde{O} \left(n^\frac{4-c}{5-2c} T^\frac{1-c}{5-2c} \epsilon^\frac{1-c}{5-2c} \right),
    \end{align*}
    and if $n < \epsilon T$, then
    \begin{align*}
        R_n = \tilde{O} \left(n \left(
            1
            +
            \left( 
                \frac{\epsilon T}{n}
            \right)^{\frac{1}{2}}
            \right)
        \right),
    \end{align*}
    with high probability.
\end{lem}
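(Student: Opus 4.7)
\medskip

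My plan is to proceed along exactly the same lines as the proof of Lemma \ref{lem: sq uniform}, replacing the constant bound $\gamma_i = \tilde{O}(1)$ used for the squared exponential kernel with the \matern{} bound $\gamma_i = \tilde{O}(i^c)$ (Srinivas et al., 2010). The partition of $\{0,\ldots,n\}$ into $\lceil n/i\rceil$ consecutive blocks of length (at most) $i$, together with the two-regime expression for $C_{\epsilon,\ct_{i,j}}$ furnished by Lemma \ref{lem: uni uni}, is reusable verbatim. Only the $i$-dependence of the first summand inside the square root changes, and the whole proof reduces to re-optimizing the partition block size $i$ in each of the three $\epsilon T$-regimes.

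Concretely, first I would specialize Theorem \ref{thm: main} to the equal-block partition with $d_j = \min(ij,n)$, $M_i = i$, $N = \lceil n/i\rceil$, and $\ct_{i,j} = \{\tau_{i(j-1)+1},\ldots,\tau_{ij}\}$. Substituting $\gamma_M = \tilde{O}(i^c)$ and the bound on $C_{\epsilon,\ct_{i,j}}$ from Lemma \ref{lem: uni uni}, the regret bound collapses (in the small-$i$ regime $i\le n/(\epsilon T)$) to
\begin{align*}
R_n = \tilde{O}\!\left(n\sqrt{\,i^{c-1} + \phi\!\left(\tfrac{\epsilon T}{n}\,i^{3/2}\right)}\,\right),
\end{align*}
and analogously for the large-$i$ regime to an expression of the form $\tilde{O}(n\sqrt{\psi(i)})$ with the same $\psi$ appearing in Case 3-2 of Lemma \ref{lem: sq uniform}, but with an extra additive $i^{c-1}$ term replacing $1/i$.

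Next I would optimize the block size $i \in [1,n]$ in three subcases exactly mirroring Lemma \ref{lem: sq uniform}. In Case 1 ($\epsilon T < n^{-3/2}$), the $\phi$-argument stays below $1$ for every $i\le n$, so $\phi(x)=x$; the unconstrained minimizer $i^\ast \propto (n/(\epsilon T))^{2/(5-2c)}$ exceeds $n$, so $i=n$ is optimal and the bound becomes $\tilde{O}(n\sqrt{n^{c-1}}) = \tilde{O}(\sqrt{n^{1+c}})$. In Case 2 ($n^{-3/2}\le \epsilon T\le n$), choosing $i^\ast=(n/(\epsilon T))^{2/(5-2c)}$ equalizes the two terms $i^{c-1}$ and $\tfrac{\epsilon T}{n}i^{3/2}$, each evaluating to $(n/(\epsilon T))^{2(c-1)/(5-2c)}$, giving $R_n = \tilde{O}(n\cdot(\epsilon T/n)^{(1-c)/(5-2c)}) = \tilde{O}(n^{(4-c)/(5-2c)}\,T^{(1-c)/(5-2c)}\,\epsilon^{(1-c)/(5-2c)})$. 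In Case 3 ($n<\epsilon T$), I would use the $\phi(x)=\log x+1/x$ branch with $i=1$ (or $i_0=\lfloor n/(\epsilon T)\rfloor+1$, which equals $1$ here), repeating the calculation of $\psi(1)$ from the end of Lemma \ref{lem: sq uniform}. Since $n<\epsilon T$ the $\epsilon T/n$ term dominates $\psi(1)$ and the additional $i^{c-1}$ contributes only $\tilde{O}(1)$, yielding $R_n = \tilde{O}(n(1+(\epsilon T/n)^{1/2}))$ unchanged from the squared exponential case.

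The main obstacle will be the bookkeeping in Case 2 and Case 3: one has to verify that when $i$ crosses the threshold $n/(\epsilon T)$, the $C^1_{\epsilon,i}$/$C^2_{\epsilon,i}$ swap and the two branches of $\phi$ together still admit the single closed-form minimizer $i^\ast$, and to confirm that the $i^{c-1}$ contribution is absorbed into the already-present $\psi(i)$ term without changing the dominant exponent. None of this is conceptually new relative to Lemma \ref{lem: sq uniform}; it is a routine but lengthy re-run of the same optimization with the $c$-dependent exponent propagated through.
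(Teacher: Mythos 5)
Your proposal is correct and matches the paper's approach: the paper's own proof of this lemma is literally a one-line reference stating it is "almost the same as that of Lemma \ref{lem: sq uniform}," and your plan of substituting $\gamma_i = \tilde{O}(i^c)$ for the \matern{} kernel, keeping the same equal-block partition and Lemma \ref{lem: uni uni} bounds, and re-optimizing the block size $i$ (with balance point $i^\ast = (n/(\epsilon T))^{2/(5-2c)}$ in the middle regime) is exactly the intended adaptation. Your exponent arithmetic checks out in all three regimes, and you in fact supply more detail than the paper does.
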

\begin{proof}
    The proof is almost same as that of Lemma \ref{lem: sq uniform}.
\end{proof}
\begin{lem} \label{lem: matern biased}
    Suppose that the space kernel is the {\matern} kernel with parameter $\nu$ and evaluation time $\{ t_i \}_{i=1}^n$ are extremely biased, that is,
    $t_i = 0$ when $i \neq n_0$ and $t_{n_0} = T$.
    Let $c = \frac{d(d+1)}{2 \nu + d(d+1)}$.
    Then, 
    \begin{align*}
    R_n = \tilde{O} 
    \left(
        \sqrt{
            n^{1+c}
        }
    \right),
    \end{align*}
    with high probability.
\end{lem}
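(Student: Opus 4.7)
The plan is to mirror the strategy used in the proof of Lemma \ref{lem: sq biased}, since the only relevant change from the squared exponential case is the behavior of the maximum space information gain $\gamma_M$ for the Mat\'ern kernel. The extremely biased structure of $\{t_i\}$ is unchanged, so the evaluation time uniformity estimates we need come directly from Lemma \ref{lem: bia uni}.

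First, I would use exactly the same three-block partition
\[
\{d_0, d_1, d_2, d_3\} = \{0,\; n_0 - (\alpha+1),\; n_0 + \alpha,\; n\}
\]
for a positive integer $\alpha < \min(n_0 - 1,\, n - n_0)$, inducing subsets $\ct_1, \ct_2, \ct_3$ of $\{\tau_k\}_{k=1}^n$ where only $\ct_2$ contains $\tau_{n_0}$. Applying Lemma \ref{lem: bia uni} yields $C_{\epsilon,\ct_1} = C_{\epsilon,\ct_3} = 0$ and
\[
C_{\epsilon,\ct_2} = 2(\alpha^2 - 1)\min\!\left(\tfrac{1}{\epsilon^2},\, T^2\right).
\]
Plugging these into the cumulative regret bound \eqref{formula: reg upper bound} of Theorem \ref{thm: main}, the $\phi$ terms corresponding to $\ct_1$ and $\ct_3$ vanish and we obtain exactly the same expression as in equation \eqref{formula: in sq biased, reg bound}, namely
\[
R_n \le \sqrt{C\,\beta_n\, n\left(3\gamma_M + \tfrac{1}{2}\,\phi\!\left(\sigma^{-2}\epsilon\sqrt{\tfrac{2(\alpha^2-1)\min(1/\epsilon^2,\,T^2)}{2\alpha+1}}\right)\right)} + 2.
\]

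The only substantive difference from Lemma \ref{lem: sq biased} is the value of $\gamma_M$. For the Mat\'ern kernel with parameter $\nu$, the maximum space information gain satisfies $\gamma_M = \tilde{O}(M^c)$ with $c = \frac{d(d+1)}{2\nu + d(d+1)}$ \citep{Srinivas_et_al_2010}; since $M \le n$, we get $\gamma_M = \tilde{O}(n^c)$. As in the squared exponential case, I would set $\alpha = 2$, which makes the argument of $\phi$ a constant (up to the $\min(1/\epsilon^2, T^2)$ factor, which contributes only $\tilde{O}(1)$ inside the square root relative to $n^c$). Substituting gives
\[
R_n = \tilde{O}\!\left(\sqrt{n \cdot n^c}\right) = \tilde{O}\!\left(\sqrt{n^{1+c}}\right),
\]
which is the claimed bound.

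There is no real obstacle here; the argument is a direct adaptation, and the proof essentially reduces to observing that swapping the information gain rate from $\tilde{O}(1)$ to $\tilde{O}(n^c)$ introduces the extra $n^c$ factor and nothing else in the final bound. The only point requiring minor care is checking that $\alpha$ can indeed be chosen as a small constant (so that $\ct_1$ and $\ct_3$ are nonempty whenever $n_0$ is not too close to the endpoints); this is the same harmless caveat present in Lemma \ref{lem: sq biased}.
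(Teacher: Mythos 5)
Your proposal is correct and follows essentially the same route as the paper, which simply states that the proof is almost the same as that of the squared exponential biased case; you have filled in the only substantive change, namely replacing $\gamma_M = \tilde{O}(1)$ with the Mat\'ern bound $\gamma_M = \tilde{O}(M^c) = \tilde{O}(n^c)$ from \citet{Srinivas_et_al_2010}, while reusing the three-block partition, the evaluation time uniformity values from Lemma \ref{lem: bia uni}, and the choice $\alpha = 2$. No gaps.
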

\begin{proof}
    The proof is almost same as that of Lemma \ref{lem: sq biased}.
\end{proof}

Combining all results, we obtain the result of Theorem \ref{thm: derived}.

\section{Additional Experiments}
In this section, we will show results of additional experiments which use another acquisition function, Expected Improvement (EI) acquisition function \cite{Jones_et_al_1998}.

The all experimental setting are same as those in Section \ref{sec: ex}.
The results are given in Figure \ref{fig: avg reg ei}.
\begin{figure*}[t]c
    \centering
    \begin{tabular}{c}
        \begin{minipage}{0.4\hsize}
            \centering
\includegraphics[width = 0.9\columnwidth]{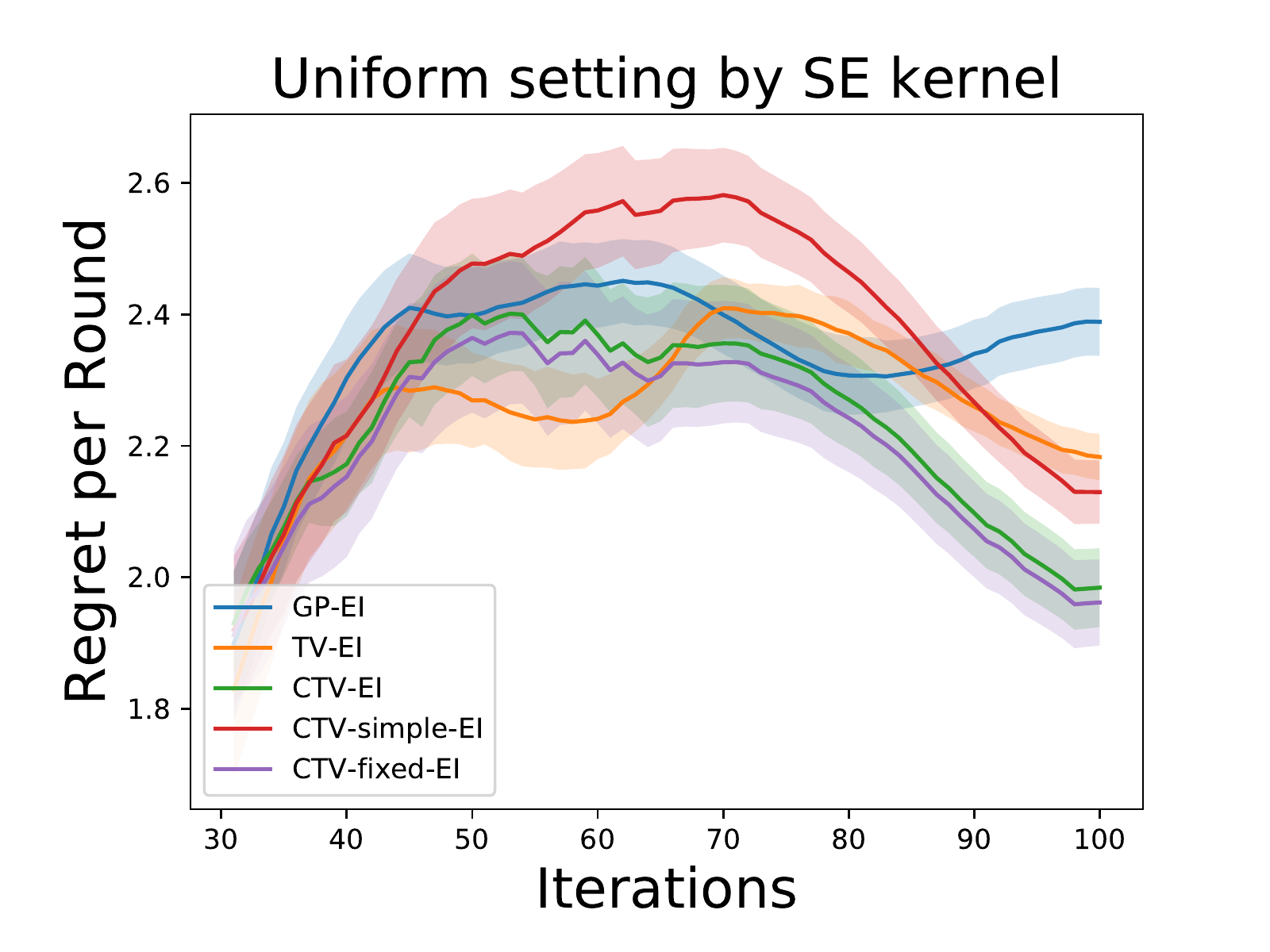}
        \end{minipage}
        \begin{minipage}{0.4\hsize}
        \centering
            \includegraphics[width = 0.9\columnwidth]{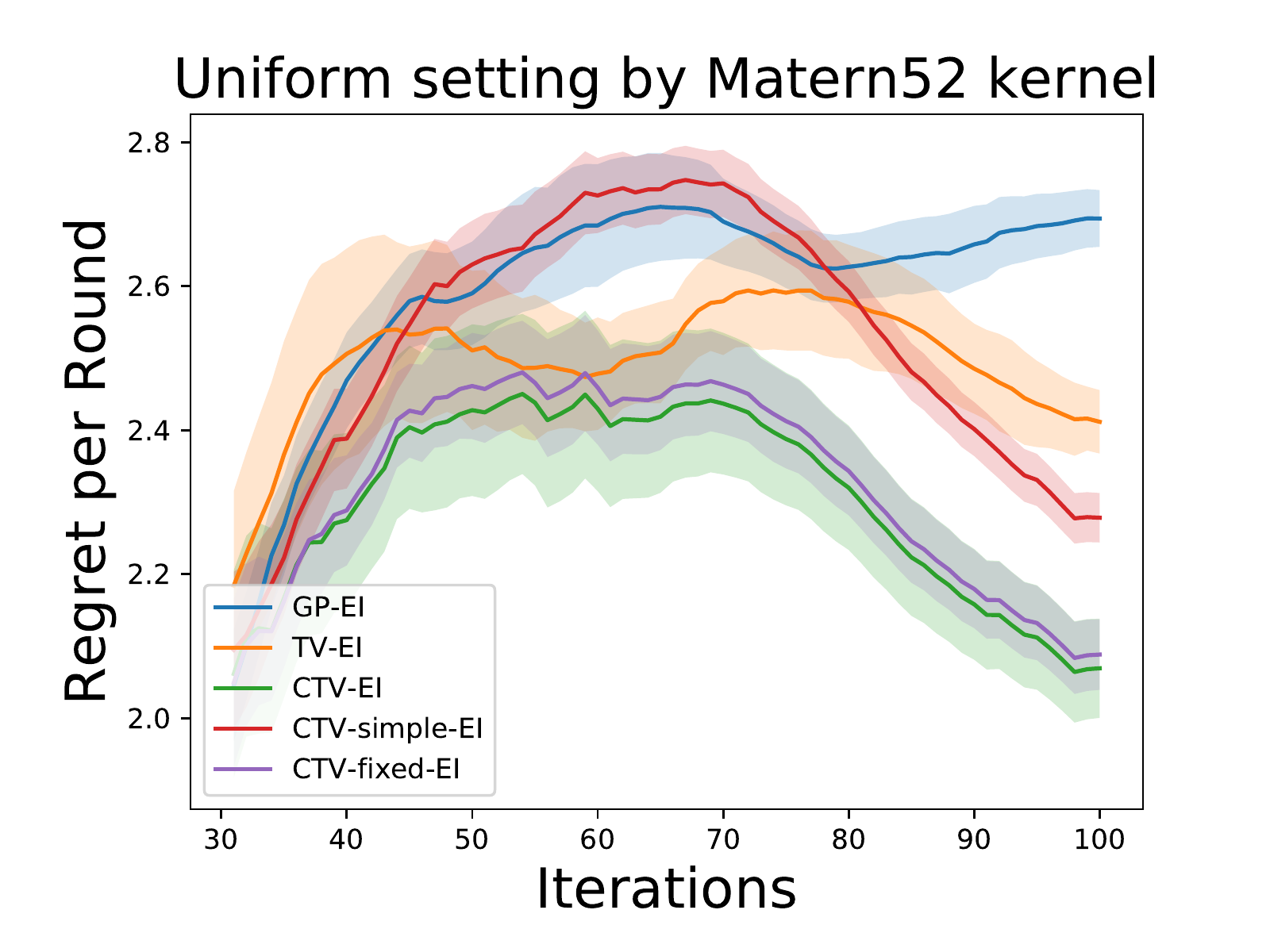}
        \end{minipage} \\
        \begin{minipage}{0.4\hsize}
            \centering
            \includegraphics[width = 0.9\columnwidth]{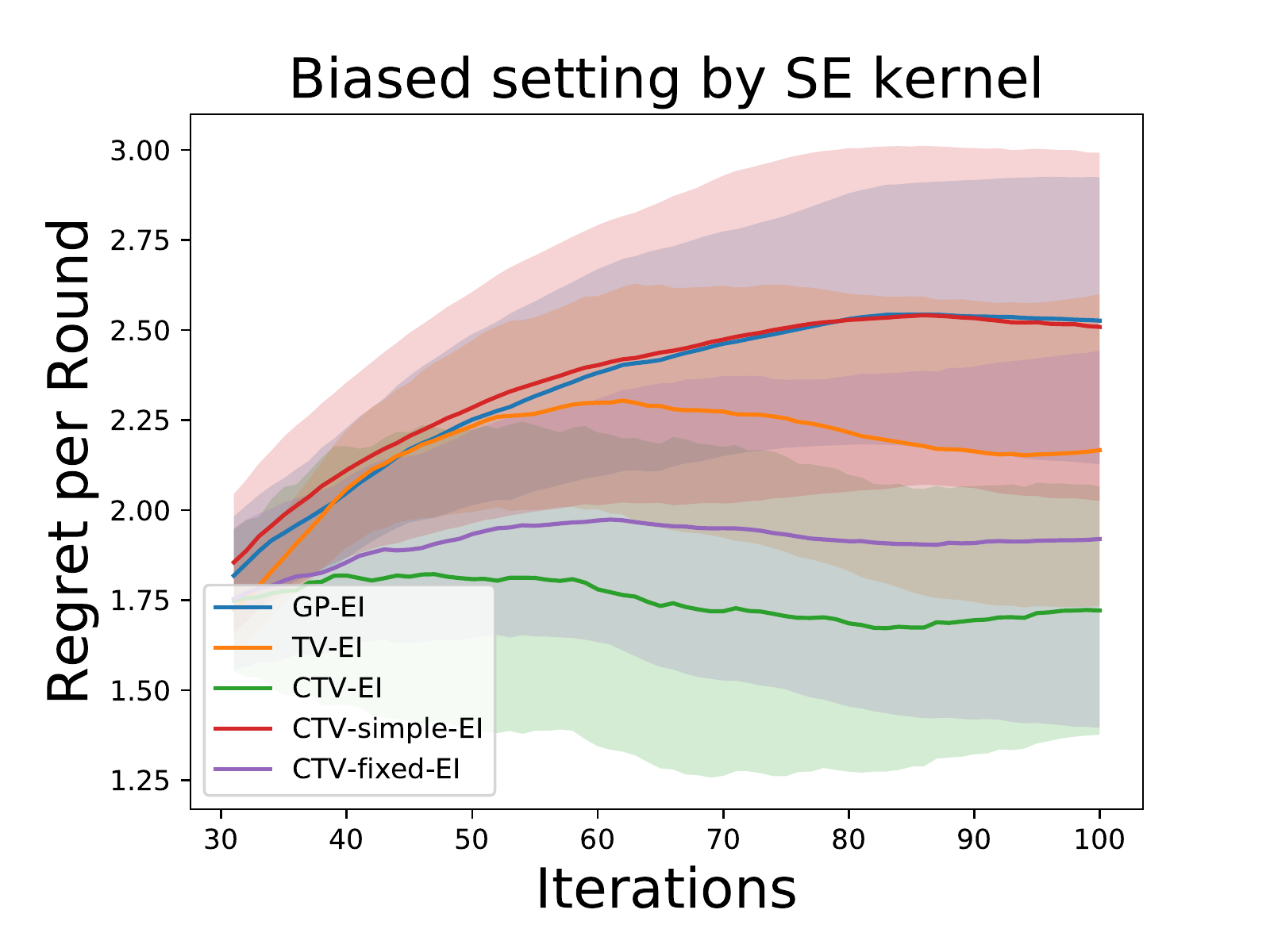}
        \end{minipage}
        \begin{minipage}{0.4\hsize}
        \centering
            \includegraphics[width = 0.9\columnwidth]{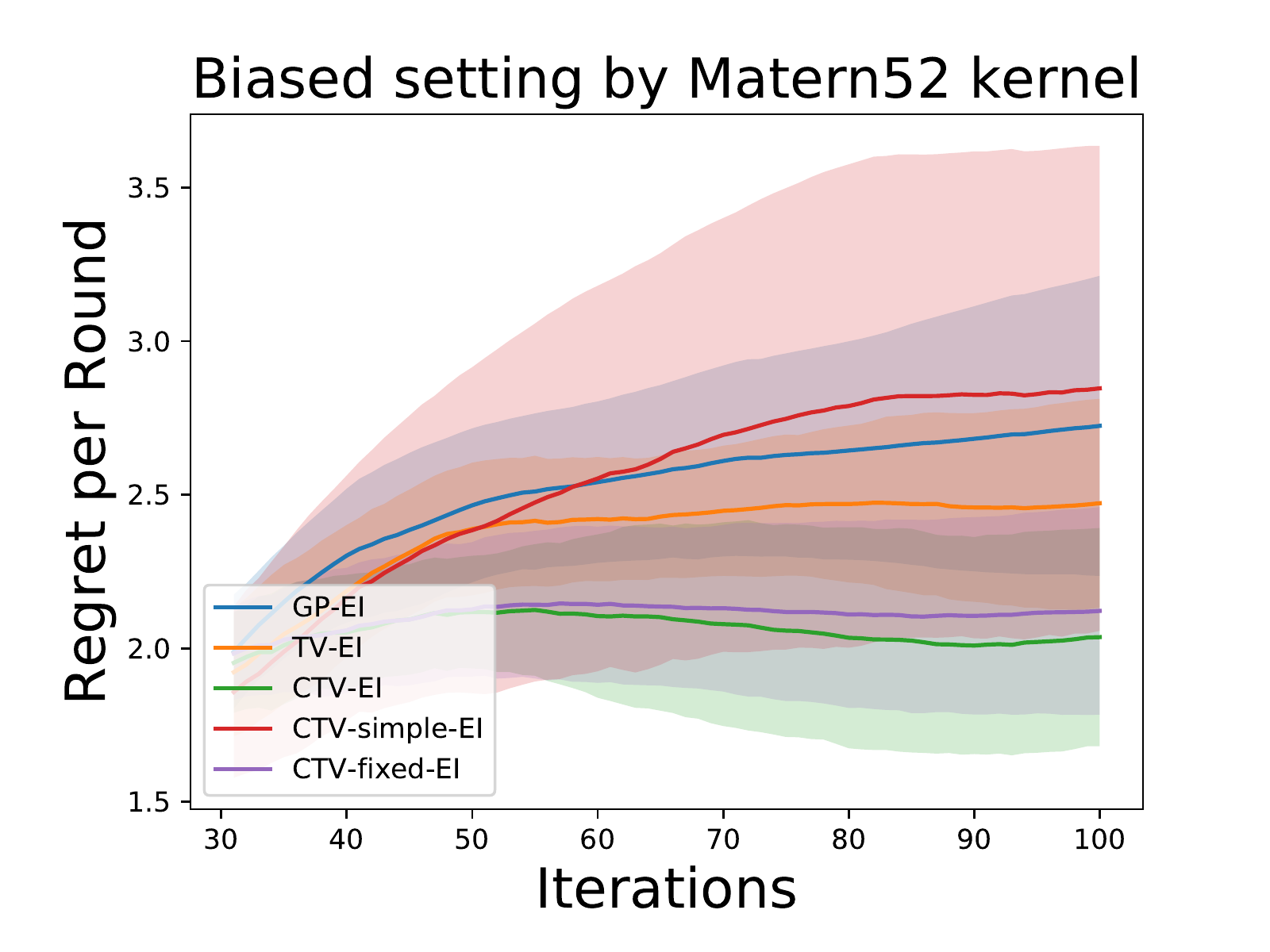}
        \end{minipage}
    \end{tabular}
    \caption{Averaged cumulative regret for the squared exponential and {\matern} (5/2) kernel in the uniform and biased settings.}
    \label{fig: avg reg ei}
\end{figure*}

\end{document}